\documentclass[10pt,letterpaper]{article}
\usepackage[utf8]{inputenc}
\usepackage{amsthm,amssymb,latexsym,amsmath,graphics,graphicx,tabularx,caption,subcaption,mathtools}
\usepackage[usenames,dvipsnames]{color}
\usepackage{algorithm,algorithmic,multirow}
\usepackage[shortlabels]{enumitem}
\usepackage[left=2cm,right=2cm,top=2cm,bottom=2cm]{geometry}
\usepackage{hyperref}
\hypersetup{
  colorlinks   = true,    
  urlcolor     = blue,    
  linkcolor    = blue,    
  citecolor    = red      
}
\usepackage{nomencl}
\makenomenclature

\theoremstyle{plain}
\newtheorem{theorem}{Theorem}[section]

\newtheorem{corollary}{Corollary}[section]
\newtheorem{lemma}{Lemma}[section]
\newtheorem{proposition}{Proposition}[section]
\theoremstyle{definition}
\newtheorem{definition}{Definition}[section]
\newtheorem{example}{Example}[section]
\newtheorem{rem}{Remark}[section]


\newcommand\nc[2]{\nomenclature{$#1$}{#2}}
\newcommand\norm[1]{\left|\left|#1\right|\right|}
\newcommand\abs[1]{\left|#1\right|}
\newcommand\alg[1]{\left\langle#1\right\rangle}
\newcommand\floor[1]{\left\lfloor#1\right\rfloor}

\newcommand{\XX}{\mathbb{X}}
\renewcommand{\SS}{\mathbb{S}}
\newcommand{\RR}{\mathbb{R}}
\newcommand{\w}{\mathbf{w}}
\newcommand{\x}{\mathbf{x}}

\newcommand{\be}{\begin{equation}}
\newcommand{\ee}{\end{equation}}
\newcommand{\ba}{\begin{aligned}}
\newcommand{\ea}{\end{aligned}}
\newcommand{\ben}{\begin{enumerate}}
\newcommand{\een}{\end{enumerate}}
\newcommand{\bit}{\begin{itemize}}
\newcommand{\eit}{\end{itemize}}
\newcommand{\ls}{\lesssim}
\newcommand{\gs}{\gtrsim}
\newcommand{\disp}{\displaystyle}
\newcommand{\bs}{\boldsymbol}
\renewcommand{\vec}[1]{\mathbf{#1}}
\newcommand{\hindu}{\arabic}

\title{Learning on manifolds without manifold learning}
\author{H. N. Mhaskar\thanks{
Institute of Mathematical Sciences, Claremont Graduate University, Claremont, CA 91711. 
\textsf{email:} hrushikesh.mhaskar@cgu.edu.
The research is  supported in part by NSF grant DMS 2012355, and ONR grants N00014-23-1-2394, N00014-23-1-2790.} \and Ryan O'Dowd\thanks{Institute of Mathematical Sciences, Claremont Graduate University, Claremont, CA 91711. 
\textsf{email:} ryan.o'dowd@cgu.edu.}}
\date{\today}


\numberwithin{equation}{section}

\begin{document}
\maketitle


\begin{abstract}
Function approximation based on data drawn randomly from an unknown distribution is an important problem in machine learning.
The manifold hypothesis assumes that
 the data is sampled from an unknown submanifold of a high dimensional Euclidean space. 
A great deal of research deals with obtaining information about this manifold, such as the eigendecomposition of the Laplace-Beltrami operator or coordinate charts, and using this information for function approximation. 
This two-step approach implies some extra errors in the approximation stemming from estimating the basic quantities of the data manifold in addition to the errors inherent in function approximation.
In this paper, we project the unknown manifold as a submanifold of an ambient hypersphere and study the question of constructing a one-shot approximation using a specially designed sequence of localized spherical polynomial kernels  on the hypersphere. 
Our approach does not require preprocessing of the data to obtain information about the manifold other than its dimension. 
We give optimal rates of approximation for relatively ``rough'' functions.

\end{abstract}

\section{Introduction}
\label{sec:intro}

In the past quarter-century, machine leaning has impacted our lives ubiquitously, from driving cars to military maneuvers. 
Shallow and deep neural networks have played a central role in these applications. 
In turn, a theoretical justification for the use of these networks is their universal approximation property: they can approximate arbitrarily well an arbitrary continuous function on an arbitrary compact subset of a Euclidean space of arbitrary dimension.
In mathematical terms, the challenge can be formulated as follows. 
We are given a data of the form $\mathcal{D}=\{(y_j, z_j)\}_{j=1}^M$, drawn randomly from an unknown probability distribution $\tau$,  and we wish to find a parametrized model $G(\bs\theta;y)$ to minimize the \emph{generalization error} $\mathbb{E}_\tau(\mathcal{L}(z,G(\bs\theta,y)))$ for a judiciously chosen loss functional $\mathcal{L}$. 
For example, in a shallow neural network of the form $\sum_{k=1}^m a_k\sigma(\w_k\cdot\x+b_k)$, $\x\in\RR^q$, the parameters are $\bs\theta=(a_k, w_{k,1},\cdots, w_{k,q}, b_k)_{k=1}^m$.
Since $\tau$ is not known, one minimizes instead the empirical risk obtained by discretizing the expected value in terms of the data. 
There is a huge amount of literature on the choice of the loss functional, usually involving the correct choice of one or more regularization terms, the difference between the minimal empirical risk and the generalization error in terms of the number of samples, strategies for the optimization involved, the geometry of the error surface, etc.

Writing $f(y)=\mathbb{E}_\tau(z|y)$, the fundamental problem is to approximate $f$ given the data $\mathcal{D}$.
The role of approximation theory is to estimate the relation between the minimum generalization error and number of parameters in $\bs\theta$  in terms of some properties of $f$ and $G$. 
Naturally, there is a huge amount of literature in this direction as well, especially when $\mathcal{L}$ is a square loss or, since $\tau$ is unknown, the uniform or probabilistic loss. 
In the case of the square loss, the generalization error splits into the variance $\mathbb{E}_\tau(|z-f(y)|^2)$ and the bias $\mathbb{E}_{\nu}(|f(y)-G(\bs\theta,y)|^2)$, where $\nu$ is the marginal distribution of $y$. 

We think that the whole paradigm of getting an insight into the number of parameters using approximation theory, and then using an optimization procedure to actually obtain the approximation in a decoupled manner needs to be revisited. We list some reasons.

\begin{enumerate}
\item The use of a global metric for measuring the generalization/approximation error is insensitive to local effects in the target function (see Example~\ref{ex:nswvsphin}).
\item The use of the degree of approximation to get an insight on the model complexity may be misleading, as we will elaborate on shortly.
\item There is no guarantee that the minimizer of the empirical risk would be the one which gives the best approximation error or generalization error.
Moreover, absolute minima are hard to obtain, and perhaps not required in practice anyway.
\item The training process may be very sensitive to the initialization of the parameters. It is observed in \cite{lu2019dying} that with a wrong initialization of parameters, a deep network evaluating the ReLU activation function trained to approximate $|t|=t_++(-t)_+$ results in a constant output. This is a phenomenon which they have called ``dead on arrival.''
\end{enumerate}

Since the problem is fundamentally one of function approximation, it is natural to question whether one could use a new paradigm where the approximation is constructed directly from the data, and the error on the data not yet seen can be estimated directly as well.
So far, approximation theory has played only a marginal role in machine learning.
There are several reasons for this.
\begin{enumerate}
\item Many papers on function approximation by shallow or deep networks ignore the fact that the approximation needs to be constructed from the data. For example, the dimension independent bounds are typically derived using probabilistic arguments resulting in estimates which could be misleading. 
Particularly, we have shown in \cite{mhaskar2020dimension, sphrelu} that drastically different estimates are obtained for approximation by  ReLU networks for the \textbf{same class of functions} depending on whether the networks are constructed from the data or not.
\item We do not typically know whether the assumptions on the target function involved   in the approximation theory bounds are satisfied in practice, or whether the number of parameters is the right criterion to look at in the first place.
For example, when one considers approximation by radial basis function (RBF) networks, it is  observed in many papers (e.g., \cite{eignet}) that the minimal separation among the centers is the right criterion rather than the number of parameters. 
It is shown that if one measures the degree of approximation in terms of the minimal separation, then one can determine the smoothness of the underlying target function by examining the rate at which the degrees of approximation converge to $0$.
\item Most of the approximation theory literature focuses on the question of estimating the difference between $f$ and $G(\bs\theta,\circ)$ in various norms and conditions on $f$, where the support of the marginal distribution $\nu$ is assumed to be a known domain, such as a torus, a cube, the whole Euclidean space, a hypersphere (simply referred to as a sphere in the remainder of this paper), etc.; equivalently, one assumes that the data points $y_j$ are ``dense'' on such a domain.
This creates a gap between theory, where the domain of $\nu$ is known, and practice, where it is not.
One consequence of approximating, say on a cube, is the curse of dimensionality. That is, if the dimension of the input data is $Q$, then the number of components in the parameter vector $\bs\theta$ to achieve an accuracy of $\epsilon$ will be $\Omega(\epsilon^{-Q})$.
\end{enumerate}

Rather than approximating on a known domain, a relatively recent idea is to assume that the support of the marginal distribution $\nu$ is an \textbf{unknown}, low-dimensional submanifold of the high-dimensional ambient space in which the data is located.
This gives rise to a two-step procedure: \emph{manifold learning}, where there is an effort to find information about the manifold itself, and then \emph{function approximation} (which we have called \emph{learning on the manifold} in the title of this paper), where we assume the necessary information about the manifold to be known, and study function approximation based on this information. 
 
Works by Belkin, Niyogi,  Singer, and others have shown that the so-called graph Laplacian (and the corresponding eigendecomposition) constructed from data points converges to the manifold Laplacian and its eigendecomposition. 
Some preliminary papers in this direction are: \cite{belkin2003laplacian,belkinfound,singerlaplacian}.
An introduction to the subject is given in \cite{achaspissue}.  Another approach is to estimate an atlas of the manifold, which thereby allows function approximation to be conducted via local coordinate charts. 
One such effort is to utilize the underlying parametric structure of the functions to determine the dimension of the manifold and the parameters involved \cite{manoni2020effective}.
Approximations utilizing estimated coordinate charts have been implemented, for example, via deep learning  \cite{coifman_deep_learn_2015bigeometric,schmidtdeep}, moving least-squares  \cite{soberleastsquares}, local linear regression  \cite{Wuregression},  or using Euclidean distances among the data points \cite{chui_deep}. HNM and his collaborators carried out an extensive investigation of function approximation on manifolds, some of which is summarized in \cite{mhaskardata}.
With the two-step procedure, the estimates obtained in function approximation need to be tempered by the errors accrued in the manifold learning step.
In turn, the errors in the manifold learning step are very sensitive to the choice of different parameters used in the process.

The purpose of this paper is to introduce a direct method of approximation on \emph{unknown} manifolds without trying to find out anything about the manifold other than its dimension. 
Toward this goal, we project   the $q$-dimensional manifold $\XX$ in question from the ambient space $\RR^Q$ to a sphere $\SS^Q$ of the same dimension.
We can then use a specially designed, localized, univariate kernel $\Phi_{n,q}$  (cf. \eqref{eq:kernel}) which is a spherical polynomial of degree $< n$ on $\SS^Q$, with $n$ and $q$ being tunable hyperparameters. 
Our construction is very simple; we define
\begin{equation}\label{eq:proto_const}
    F_{n}(\mathcal{D};x)\coloneqq \frac{1}{M}\sum_{j=1}^M z_j\Phi_{n,q}(x\cdot y_j).
\end{equation}
We note that $F_n(\mathcal{D};\circ)$ is a function defined on the ambient sphere $\SS^Q$.
The localization of the kernel allows us to adapt the approximation to the unknown manifold.

Our main theorem (cf. Theorem~\ref{theo:mainthm}) has the following form:

\begin{theorem}\label{theo:proto_mainthm} (\textbf{Informal statement})
Let $\mathcal{D}=\{(y_j,z_j)\}_{j=1}^M$ be a set of random samples chosen from a distribution $\tau$. Suppose $f$ belongs to a smoothness class $W_\gamma$ (detailed in Definition~\ref{def:manifold_smoothness}) with associated norm $\norm{\circ}_{W_\gamma}$. Then under some additional conditions and with a judicious choice of $n$, we have with high probability:
\begin{equation}\label{eq:proto_est}
    \norm{F_n(\mathcal{D};\circ)-f}_\mathbb{X}\leq c\left(\norm{z}+\norm{f}_{W_\gamma}\right)\left(\frac{\log M}{M}\right)^{\gamma/(q+2\gamma)},
\end{equation}
where $c$ is a positive constant independent of $f$.
\end{theorem}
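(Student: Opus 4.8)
The plan is to establish Theorem~\ref{theo:proto_mainthm} (whose full version is Theorem~\ref{theo:mainthm}) by the classical bias--variance split, using two structural facts: that $x\mapsto F_n(\mathcal{D};x)$ is the restriction to $\XX$ of a spherical polynomial of degree $<n$ on $\SS^Q$, and that the univariate kernel $\Phi_{n,q}$ of \eqref{eq:kernel} is highly localized. Write $f(y)=\mathbb{E}(z\mid y)$, let $\nu$ be the marginal distribution of $y$, and set
$$
\mathcal{F}_n(x):=\mathbb{E}\bigl(F_n(\mathcal{D};x)\bigr)=\int_{\XX} f(y)\,\Phi_{n,q}(x\cdot y)\,d\nu(y),
$$
so that $\norm{F_n(\mathcal{D};\circ)-f}_{\XX}\le \norm{F_n(\mathcal{D};\circ)-\mathcal{F}_n}_{\XX}+\norm{\mathcal{F}_n-f}_{\XX}$. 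It is enough to bound the second (deterministic, ``bias'') term by $c\,n^{-\gamma}\norm{f}_{W_\gamma}$ and the first (random, ``variance'') term by $c\,\norm{z}_\infty\sqrt{n^q\log M/M}$ up to lower-order contributions, and then to balance the two by choosing $n$.

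For the bias term I would first use the localization estimate from \eqref{eq:kernel} --- of the form $|\Phi_{n,q}(\cos\theta)|\ls n^q(1+n\theta)^{-S}$ with $S$ as large as needed --- together with the additional regularity conditions on $\XX$ and on $\nu$ (a density comparable to the Riemannian volume measure of the $q$-dimensional $\XX$, plus curvature/reach controls) to show that $g\mapsto\int_{\XX}g(y)\Phi_{n,q}(\circ\cdot y)\,d\nu(y)$ is a uniformly bounded operator on $C(\XX)$ that reproduces, up to a negligible error, the restrictions to $\XX$ of degree-$<n$ spherical polynomials; the key point is that the mass of $\Phi_{n,q}(x\cdot\circ)$ concentrates on the geodesic ball in $\XX$ of radius $\sim 1/n$ about $x$, whose $\nu$-measure is $\sim n^{-q}$, so that the \emph{intrinsic} dimension $q$ governs the estimates. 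Combining this with the definition of $W_\gamma$ (Definition~\ref{def:manifold_smoothness}), which quantifies smoothness through the rate of polynomial approximation / a $K$-functional on $\XX$, yields the Jackson-type bound $\norm{\mathcal{F}_n-f}_{\XX}\ls n^{-\gamma}\norm{f}_{W_\gamma}$ in the admissible (``rough'') range of $\gamma$.

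For the variance term, $G:=F_n(\mathcal{D};\circ)-\mathcal{F}_n$ is, restricted to $\XX$, the restriction of a degree-$<n$ spherical polynomial. Fixing $x\in\XX$, $G(x)=\frac1M\sum_{j=1}^M\bigl(z_j\Phi_{n,q}(x\cdot y_j)-\mathcal{F}_n(x)\bigr)$ is an average of i.i.d.\ mean-zero random variables bounded in absolute value by $\ls\norm{z}_\infty\norm{\Phi_{n,q}}_\infty\ls\norm{z}_\infty n^q$ and of variance $\ls\norm{z}_\infty^2\int_{\XX}\Phi_{n,q}(x\cdot y)^2\,d\nu(y)\ls\norm{z}_\infty^2 n^q$, the last estimate again following from the localization of $\Phi_{n,q}$ and the volume estimate on $\XX$. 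Bernstein's inequality then gives $|G(x)|\ls\norm{z}_\infty\bigl(\sqrt{n^q\log(1/\delta)/M}+n^q\log(1/\delta)/M\bigr)$ with probability $\ge 1-\delta$. To pass from a fixed $x$ to $\norm{G}_{\XX}$ I would invoke a Marcinkiewicz--Zygmund / norming-set inequality on $\XX$: a finite set $\Lambda\subset\XX$ of cardinality polynomial in $n$ with $\norm{P}_{\XX}\ls\max_{x\in\Lambda}|P(x)|$ for all degree-$<n$ spherical polynomials $P$; applying the pointwise bound at each point of $\Lambda$ with $\delta/|\Lambda|$ in place of $\delta$ and taking a union bound gives $\norm{G}_{\XX}\ls\norm{z}_\infty\bigl(\sqrt{n^q\log(|\Lambda|/\delta)/M}+n^q\log(|\Lambda|/\delta)/M\bigr)$ with probability $\ge 1-\delta$.

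Finally, taking $n\sim(M/\log M)^{1/(q+2\gamma)}$ balances $n^{-\gamma}$ against $\sqrt{n^q\log M/M}$ --- both are $\sim(\log M/M)^{\gamma/(q+2\gamma)}$, with $\log|\Lambda|\sim\log n$ and a fixed or polynomially small $\delta$ absorbed since $|\Lambda|\ls M^{O(1)}$ --- while the remaining term $n^q\log M/M\sim(\log M/M)^{2\gamma/(q+2\gamma)}$ is of strictly smaller order; adding the bias and variance bounds gives \eqref{eq:proto_est}. The step I expect to be the main obstacle is the bias estimate: one has to transplant the approximation-theoretic properties of the \emph{universal} kernel $\Phi_{n,q}$, which is built on the model sphere $\SS^q$, onto the a priori unknown $q$-dimensional submanifold $\XX\subset\SS^Q$ and onto the weak smoothness class $W_\gamma$, using only the dimension of $\XX$ and the regularity of $\nu$; this is exactly where the ``additional conditions'' and the dimension matching in $\Phi_{n,q}$ do the work, and where one must carefully compare the ambient/chordal geometry of $\SS^Q$ with the intrinsic geodesic geometry of $\XX$.
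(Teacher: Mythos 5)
Your overall strategy coincides with the paper's: split $F_n-f$ into a deterministic bias term $\mathcal{F}_n-f$ and a stochastic term $F_n-\mathcal{F}_n$, bound the bias by $n^{-\gamma}$ via the localization of $\Phi_{n,q}$, bound the stochastic term pointwise by Bernstein's inequality with $R\ls \norm{z}_\infty n^q$ and $V\ls\norm{z}_\infty^2 n^q$, upgrade to the uniform norm with a norming set of cardinality polynomial in $n$ plus a union bound, and balance by $n\sim (M/\log M)^{1/(q+2\gamma)}$; this is exactly Sections~\ref{bhag:summabilityop}--\ref{bhag:resultsproof}. The one substantive issue is the mechanism you propose for the bias. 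The operator $g\mapsto\int_{\XX}\Phi_{n,q}(\circ\cdot y)g(y)\,d\mu^*(y)$ does \emph{not} reproduce, even approximately, restrictions to $\XX$ of degree-$<n$ spherical polynomials: the kernel is built from $q$-dimensional ultraspherical polynomials and the integration is over the $q$-dimensional $\XX$, so no reproduction or quadrature identity is available on $\XX$ itself, and a Jackson bound cannot be extracted that way. What the paper does instead (Lemma~\ref{lemma:manifold_keylemma}) is compare $\int_{\XX}\Phi_{n,q}(x\cdot y)g(y)\,d\mu^*(y)$ with $\int_{\mathbb{S}_x}\Phi_{n,q}(x\cdot u)\,g(\eta_x(u))\,d\mu_q^*(u)$, where $\mathbb{S}_x$ is the $q$-dimensional equator of $\SS^Q$ tangent to $\XX$ at $x$ and $\eta_x=\varepsilon_x\circ\overline{\varepsilon}_x^{-1}$; the comparison errors are $O(n^{q+1}\delta^3)$ from $|\arccos(x\cdot\eta_x(u))-\arccos(x\cdot u)|\ls\rho(x,\eta_x(u))^3$ and $O(n^q\delta^{q+2})$ from $\sqrt{|g|}=1+O(\delta^2)$, and these are precisely what force the restriction $\gamma<2$. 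On $\mathbb{S}_x$ the operator genuinely reproduces $\Pi^q_{n/2}$ and gives near-best approximation, and $W_\gamma(\XX)$ is \emph{defined} through these tangent equators, so the Jackson bound follows there and is then transported back. Your closing paragraph names the right geometric comparison, but the intermediate claim you hang the argument on (polynomial reproduction on $\XX$) is not the one that works.

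Two smaller corrections. First, a norming-set inequality for restrictions of spherical polynomials to the \emph{unknown} $\XX$ is not available a priori; the paper sidesteps this by observing that $F_n(\mathcal{D};\circ)-\sigma_n(\XX,ff_0)$ is a degree-$<n$ polynomial on all of $\SS^Q$ and using a mesh on the ambient sphere of cardinality $\sim n^Q$ --- the resulting $\log|\mathcal{C}|\sim\log n$ is all your union bound needs, so your balancing step is unaffected. Second, $\mathbb{E}(F_n(\mathcal{D};x))=\int_{\XX}f(y)\Phi_{n,q}(x\cdot y)f_0(y)\,d\mu^*(y)$ converges to $ff_0$, not to $f$; recovering $f$ itself requires either the uniform-marginal assumption (Corollary~\ref{cor:maincor}) or normalizing by the density estimate of Corollary~\ref{cor:densityest}, which is the ``additional condition'' hidden in the prototype statement.
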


We note some mathematical features of our construction and theorem which we find interesting.
\begin{enumerate}
\item The usual approach in machine learning is to construct the approximation using an optimization procedure, usually involving a regularization term.
The setting up of this optimization problem, especially the regularization term, requires one to assume that the function belongs to some special function class, such as a reproducing kernel Hilbert/Banach space.
Thus, the constructions are not explicit nor universal.
In contrast, our construction \eqref{eq:proto_const} does not require  a prior on the function in order to use our model.
Of course, the theorem and its high-probability convergence rates do require various assumptions on $\tau$, the marginal distribution, the dimension of the manifold, the smoothness of the target function, etc. 
The point is that the construction itself does not require any assumptions.
\item 
A major problem in manifold learning is one of out of sample extension; i.e., extending the approximation to outside the manifold. 
A usual procedure for this in the context of approximation using the eigenstructure of the Laplace-Beltrami operator on the manifold is the Nystr\"om extension \cite{coifman2006geometric}.
However, this extension is no longer in terms of any orthogonal system on the ambient space, and hence there is no guarantee of the quality of approximation even if the function is known outside the manifold.
In contrast, the point $x$ in \eqref{eq:proto_const} is not restricted to the manifold, but rather freely chosen from $\mathbb{S}^Q$. 
That is, our construction defines an out of sample extension in terms of spherical polynomials on the ambient sphere, whose approximation capabilities are well studied.
\item In terms of $M$, the estimate in \eqref{eq:proto_est} depends asymptotically on the dimension $q$ of the manifold rather than the dimension $Q$ of the ambient space.
\item We do not need to know \textbf{anything} about the manifold (e.g., eigendecomposition or atlas estimate) itself apart from its dimension in order to prove our theorem.
There are several papers in the literature for estimating the dimension from the data, for example \cite{liao2016learning, liao2016adaptive, manoni2020effective}. 
However, the simplicity of our construction allows us to treat the dimension $q$ as a tunable parameter to be determined by the usual division of the data into training, validation, and test data.
\end{enumerate}

There are several other approaches superficially similar to our constructions. 
We will comment on some of these in Section~\ref{sec:related}.
We describe the main idea behind our proofs in Section~\ref{sec:overview}.
The paper requires an understanding of the approximation properties of spherical polynomials. Accordingly, we describe some background on the spherical polynomials, our localized kernels, and their use in approximation theory on subspheres of the ambient sphere
in Section~\ref{sec:background}. 
The main theorems for approximation on the unknown manifold are given in Section~\ref{sec:manifoldapprox}. 
The theorems are illustrated with three numerical examples in Section~\ref{sec:numerical}. 
One of these examples is closely related to an important problem in magnetic resonance relaxometry, in which one seeks to find the proportion of water molecules in the myelin covering in the brain based on a model that involves inversion of the Laplace transform.
The proofs of the main theorems are given in Section~\ref{sec:proofs}. 
The appendix describes the encoding of the target function (\ref{sec:encoding}), gives some background about the theory of manifolds which is used in this paper (\ref{sec:manifoldintro}), and describes in detail the Clenshaw algorithm used to evaluate our kernels and their implementation as a deep neural network (\ref{subsec:clenshaw}).

We would like to thank Dr. Richard Spencer at the National Institute of Aging (NIH) for his helpful comments, especially on Section~\ref{subsec:spencerdata}, verifying that our simulation is consistent with what is used in the discipline of magnetic resonance relaxometry.

\section{Related ideas}\label{sec:related}

Since our method is based on a highly localized kernel, it is expected to be comparable to the simple nearest neighbor algorithm.
However, rather than specifying the number of neighbors to consider in advance, our method allows the selection of neighbors adaptively  for each test point, controlled by the parameter $n$.
Also, rather than taking a simple averaging, our method is more sophisticated, designed to give an optimal order of magnitude of the approximation error.

One of the oldest ideas for data based function approximation is the so-called Nadaraya-Watson estimator (NWE), given by
$$
NW_h(x)=\frac{\sum_{j=1}^M z_jK(|x-y_j|/h)}{\sum_{j=1}^M K(|x-y_j|/h)},
$$
where $K$ is a kernel with an effectively small support---the Gaussian kernel $K(t)=\exp(-t^2)$, as a common example---and $h$ is a scaling parameter.
Another possible choice is a $B$-spline (including Bernstein polynomials) which has a compact support.
This construction is designed to work on a Euclidean space by effectively shrinking the support of $K$ using the scaling parameter $h\to 0$, analogously to spline approximation.
The degree of approximation of such methods is measured in terms of $h$.
It is well known (e.g., \cite{de2006approximation}) that  the use of a positive kernel $K$  suffers from the so-called saturation phenomenon: the degree of approximation cannot be smaller than $\mathcal{O}(h^2)$ unless the function is a trivial one in some sense.

Radial basis function (RBF) networks and neural networks are used widely for function approximation, using either interpolation or least square fit. 
Standard RBF networks, such as Gaussian networks or thin plate spline networks, use a fixed, scaled kernel.
Typically, the matrices involved in either interpolation or least square approximation are very ill-conditioned, and the approximation is not highly localized. 

Restricted to the sphere, both of the notions are represented by a zonal function (ZF) network. 
A \emph{zonal function} on a sphere is a function of the form $x\mapsto g(x\cdot x_0)$. 
A ZF network is a linear combination of finitely many zonal functions.
One may notice that
$$
g(x\cdot x_0)=g\left(1-\frac{|x-x_0|^2}{2}\right),
$$
so we can see that a ZF network is also a neural/RBF network. 
Conversely, a neural/RBF network restricted to the sphere is a ZF network. 
The same observations about RBF networks hold for ZF networks as well.
We note that all the papers we are aware of which deal with approximation by ZF networks actually end up approximating a spherical polynomial by the networks in question.

Rather than working with a fixed, scaled kernel, in this paper we deal with a sequence of highly localized polynomial kernels.
We do not need to solve any system of equations or do any optimization to arrive at our construction.
RBF networks and NWE were developed for approximation on Euclidean domains instead of unknown manifolds. Both have a single hyperparameter $h$ and work analogously to the spline approximation.
In contrast, our method is designed for approximation on unknown manifolds without having to learn anything about the manifold besides the dimension. It has two integer hyperparameters ($n$ and $q$) and yields a polynomial approximation.

If one chooses $h$ small enough relative to a fixed $n$ then NWE may be able to outperform our method as measured in terms of a global error bound, such as the root mean square (RMS) error. If one instead chooses $n$ large enough relative to a fixed $h$ then our method may be able to outperform NWE. So in order to give a fair comparison in Example~\ref{ex:nswvsphin}, we force the RMS error of both methods to be approximately equivalent and investigate the qualitative differences of the errors produced by each method. We additionally show that both methods in the example outperform an interpolatory RBF network.

\begin{example}\label{ex:nswvsphin}
This example serves to illustrate two points.
The first point is to compare the performance our method with NWE and an RBF interpolant. In doing so, we show that the error associated with our method is localized to singularities of the target function, whereas the other methods do not exhibit this behavior.
The second point is that using a global error estimate such as RMS can be misleading. 
Even if the RMS error with a given method might be large, the percentage of test data points at which it is smaller than a threshold could be substantially higher due to the local effects in the target function.

To ensure fair comparison, we use each of the three methods for approximation on $\SS^1=\{(\cos\theta,\sin\theta): \theta\in (-\pi,\pi]\}$, where the Gaussian kernel can be expressed in the form of a zonal function as explained above.

We consider the function
\be\label{eq:toyexample}
f(\theta)=1+\abs{\cos\theta}^{7/2}\sin(\cos\theta+\sin\theta)/2, \qquad \theta\in (-\pi,\pi].
\ee
We note that the function is analytic except at $\theta=\pm\pi/2$, where it has a discontinuity in the 4th order derivative. 
Our training data consists of $2^{13}$ equidistantly spaced $y_j$'s along the circle. We set $z_j=f(y_j)$, and examine the resulting error on a test data consisting of $2^{11}$ points chosen randomly according to the uniform distribution on $\SS^1$.  

We consider three approximation processes : (1)  Nadaraya-Watson estimator NW$_h$ with\\
 $K_h(t)=\exp(-t^2/h^2)$, (2) interpolatory approximation by the RBF network of the form $\sum a_k\exp(-|\circ-y_j|^2/h^2)$, (3) our method with the kernel $\Phi_{50,1}$.

We experimentally determined the optimal $h$ value in NWE to be $\approx 7.45e\text{-4}$ (effectively simulating the minimization of the actual generalization error on the test data), resulting in an RMS error of 1.8462e-7. 
The same value of $h$ was used for interpolation with the Gaussian RBF network, yielding a RMS error of 2.2290e-4. 
We then chose $n$ so as to yield a (comparable to NWE) RMS error of 1.8594e-7 (though we note that in this case our method continues to provide a better approximation if $n$ is further increased).

The detailed results are summarized in Figure~\ref{fig:nwecomp} below.
\begin{figure}[!ht]
\centering
\begin{tabular}{cc}
    \includegraphics[width=.45\textwidth]{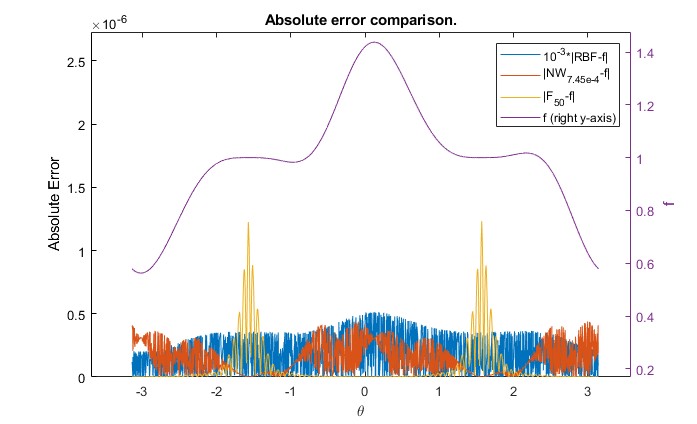}
    &
    \includegraphics[width=.45\textwidth]{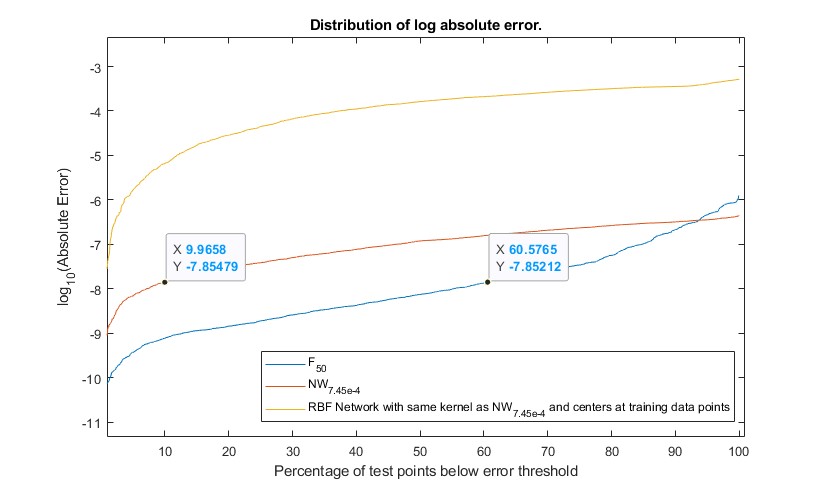}
\end{tabular}
    \caption{Error comparison between our method, the Nadaraya-Watson estimator, and an interpolatory RBF network. (Left) Comparison of absolute errors between the methods with the target function plotted on the right $y$-axis for benefit of the viewer. We note that the error from the RBF method is scaled by $10^{-3}$ so as to not dominate the figure. (Right) Percent point plot of the log absolute error for all three methods.}
        \label{fig:nwecomp}

\end{figure}

In the left plot in Figure~\ref{fig:nwecomp},  we can see a clear difference between the errors of the three methods. The (scaled by $10^{-3}$) error from the RBF network jumps throughout the whole domain, signaling the ill-conditioned nature of the matrix.
The error from the Nadaraya-Watson estimator exhibits some oscillation across the whole domain as well. The error with our method is localized to the two singularity points of the function. 
In other words, our method exhibits 1) sensitivity to the singularities of a function and 2) error adapting to the local smoothness of the function.
In comparison, RBF networks and NWE do not always exhibit such behavior. 
On the right plot of Figure~\ref{fig:nwecomp}, we give a percent point plot of the log absolute error for all three methods. There are three curves corresponding to the three methods being compared. Each point $(x,y)$ along a given curve indicates that the corresponding method approximated $x\%$ of test points with absolute error below $10^{y}$. This plot can also be thought of as the inverse CDF for the random variable of the resulting log absolute error for a test point sampled uniformly at random.
For example, whereas the Nadaraya-Watson estimator yields an error below $\approx 10^{-7.85}$ for only about $10\%$ of the tested points, our method exhibits the same error or below for about $60\%$ of the test points. Our method has the higher uniform error, but lower error for over $90\%$ of the test points.
Although the overall RMS error is roughly the same, our method exhibits a quicker decay from the uniform error.
This illustrates, in particular, that measuring the performance using a global measure for the error, such as the uniform or RMS error can be misleading.
The interpolatory RBF network performs the worst of the three methods as the right plot of Figure~\ref{fig:nwecomp} shows clearly. \qed
\end{example}

There are some efforts \cite{fuselier2012scattered,lehmann2019ambient} to do function approximation on manifolds using tensor product splines or RBF networks defined on an ambient space by first extending the target function to the ambient space.
A locally adaptive polynomial approach is used in \cite{sober2017approximation} for accomplishing function approximation on manifolds using the data. 
All these works require that the manifold be known.

In \cite{mhaskar2020deep}, we have suggested a direct approach to function approximation on an unknown submanifold of a Euclidean space using a localized kernel based on Hermite polynomials.
This construction was used successfully in predicting diabetic sugar episodes \cite{gaussian_diabetes} and recognition of hand gestures \cite{mason2021manifold}.
In  particular, in \cite{gaussian_diabetes}, we constructed our approximation based on one clinical data set and used it to predict the episodes based on another clinical data set.
In order to extend the applicability of such results to wearable devices, it is important that the approximation should be encoded by a hopefully small number of real numbers, which can then be hardwired or used for a simpler approximation process \cite{valeriyasmartphone}.
However, the construction in \cite{mhaskar2020deep} is a linear combination of kernels of the form $\Psi(|\circ-y_j|)$, where $\Psi(t)=P(t)\exp(-t^2/2)$ is a univariate kernel utilizing a judiciously chosen polynomial $P$. 
This means that we get a good approximation, but the space from which the approximation takes place changes with the point at which the approximation is desired.
This does not allow us to encode the approximation using finitely many real numbers.
In contrast, the method proposed in this paper allows us to encode the approximation using  coefficients of the target function in the spherical harmonic expansion (defined in a distributional sense), computed empirically.
This sequence can be reduced using connections between ultraspherical polynomials with different parameters and simple algorithms to detect and remove redundant coefficients. 
This is described in Appendix~\ref{sec:encoding}.
Moreover, the degree of the polynomials involved in  \cite{mhaskar2020deep} to obtain same the rate of convergence in terms of the number of samples is $\mathcal{O}(n^2)$, while the degree of the polynomials involved in this paper is $\mathcal{O}(n)$. 
We note that the construction in both the papers involve only univariate polynomials, so that the dimension of the input space enters only linearly in the complexity of the construction.

\section{An overview of the proof}\label{sec:overview}

\begin{figure}[!ht]
\centering
    \includegraphics[width=0.4\textwidth]{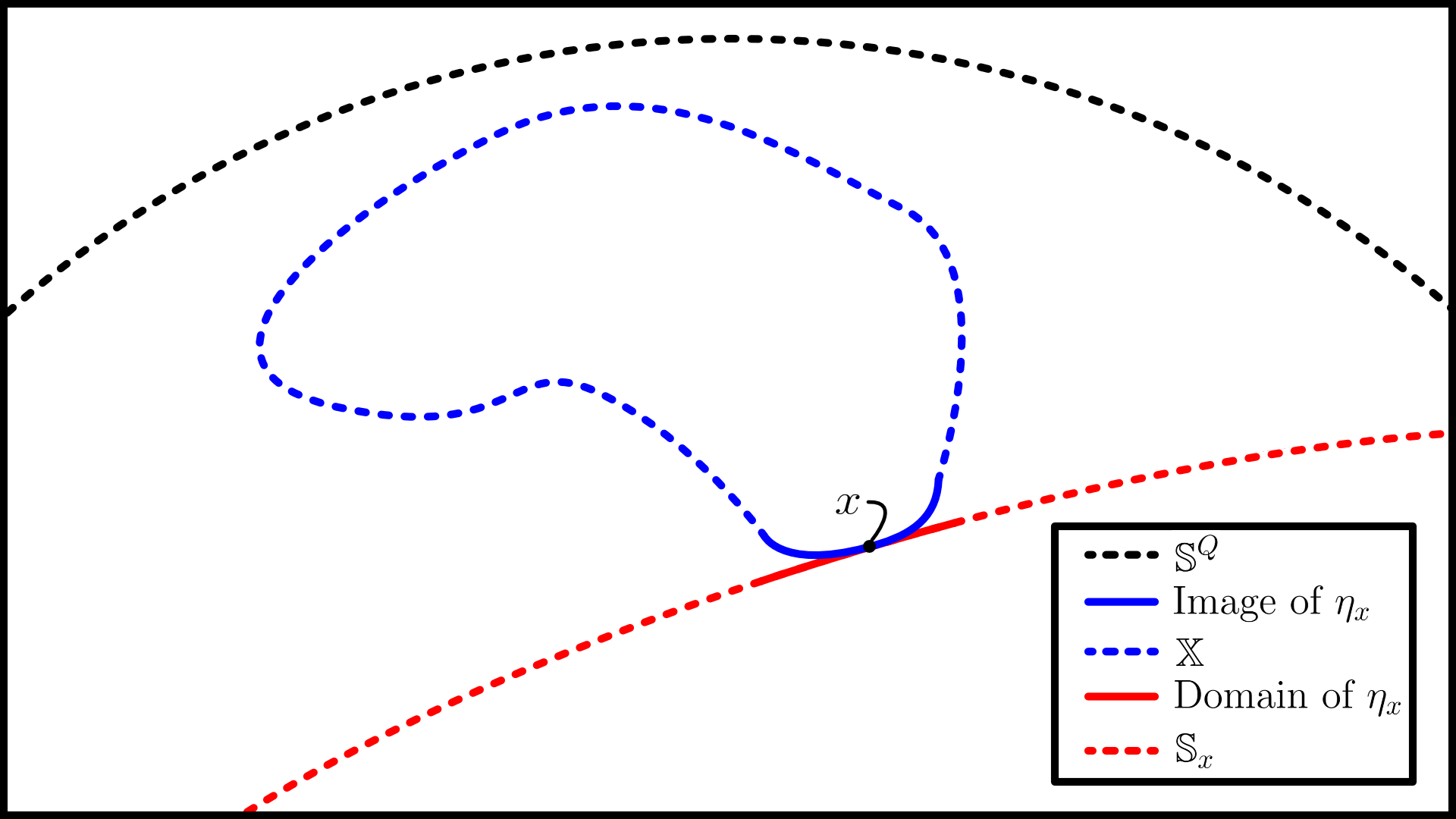}
    \caption{Visualization of our approximation approach. Here, $\mathbb{X}$ is a submanifold of the sphere $\mathbb{S}^Q$. The map $\eta_x$, analogous to the exponential map, allows us to relate the part of the integral in \eqref{eq:manifold_summabilityop} near $x$ with an integral on the tangent sphere at $x$ via a change of variables (solid curves). The localization of the kernels in our method allow for the approximation to be extended over $\XX$ and the tangent sphere $\mathbb{S}_x$ (dotted curves).}
        \label{fig:method}

\end{figure}

We can think of $F_n(\mathcal{D};x)$ defined in \eqref{eq:approximation} as an empirical approximation for an expected value with respect to the data distribution $\tau$:
\be\label{eq:approx_vs_expect}
\mathbb{E}_\tau(F_n(\mathcal{D};x))=\int z\Phi_{n,q}(x\cdot y)d\tau(y). 
\ee
Assuming that the marginal distribution of $\tau$ on $\XX$ is absolutely continuous with respect to the Riemannian volume measure $\mu^*$ on  $\XX$; i.e., given by $f_0d\mu^*$ for some smooth function $f_0$, we have
\be\label{eq:expect_vs_volumemeasure}
\mathbb{E}_\tau(F_n(\mathcal{D};x))=\int_\XX f(y)f_0(y)\Phi_{n,q}(x\cdot y)d\mu^*(y).
\ee
Accordingly, we define an \emph{integral reconstruction operator} by
\be\label{eq:manifold_summabilityop}
\sigma_n(\mathbb{X},f)(x)\coloneqq \int_{\mathbb{X}}\Phi_{n,q}(x\cdot y)f(y)d\mu^*(y), \qquad f\in C(\mathbb{X}), \ x\in\XX,
\ee
study the approximation properties of this operator, and use it with $ff_0$ in place of $f$.
The approximation properties of the operator $\sigma_n$ in the case of when $\XX$ is the $q$-dimensional sphere $\SS^q$ are well known (Proposition~\ref{prop:sphapprox}), and can be easily transferred to a $q$-dimensional equator of the ambient sphere $\SS^Q$ (Section~\ref{subsec:eqapprox}, Theorem~\ref{theo:subthm}).
We introduce a local exponential map $\eta_x$ at $x\in\XX$ between $\XX$ and the tangent equatorial sphere $\SS_x$ (i.e., a rotated version of $\SS^q$ that shares the tangent space with $\XX$ at $x$). We give an illustration of this setup in Figure~\ref{fig:method}.
Locally, a change of variable formula and the properties of this map allow us to compare the integral over a small manifold ball with that of its image on $\SS_x$ (cf. \eqref{eq:lemma_7_1_main_est}).
We keep track of the errors using the Bernstein inequality for spherical polynomials (cf. \eqref{eq:sph_bernstein}) and standard approximations between geodesic distances and volume elements on the manifold by those on $\mathbb{S}_x$.
This constitutes the main part of the proof of the critical Lemma~\ref{lemma:manifold_keylemma}.
We use the high localization property of our kernel $\Phi_{n,q}$ to lift the rest of the integral in \eqref{eq:manifold_summabilityop} on $\mathbb{X}$ at any point $x\in\XX$ to the rest of $\SS_x$ with small error (cf. \eqref{eq:lemma_7_1_awaymanifold},~\eqref{eq:lemma_7_1_awaysphere}).   
After this, we can use known results from the theory of approximation on the sphere by spherical polynomials (cf. Proposition~\ref{prop:sphapprox} and Theorem~\ref{theo:subthm}).
A partition of unity argument is used often in the proof.

Having obtained the approximation result for the integral reconstruction operator, we then discretize the integral and keep track of the errors using concentration inequalities.

\section{Background}
\label{sec:background}

In this section, we outline some important details about spherical harmonics (Section~\ref{subsec:sphharm}) which leads to the construction of the kernels of interest in this paper (Section~\ref{subsec:LocalizedKernels}). We then review some classical approximation results using these kernels on spheres (Section~\ref{subsec:sphapprox}) and equators of spheres (Section~\ref{subsec:eqapprox}).

\subsection{Spherical harmonics}
\label{subsec:sphharm}
The material in this section is based on \cite{mullerbk, steinweissbk}.
Let $0\le q\le Q$ be integers. We define the \textit{$q$-dimensional sphere} embedded in $Q+1$-dimensional space as follows
\begin{equation}\label{eq:sphere}
    \mathbb{S}^q\coloneqq\{(x_1,\dots,x_{q+1},\underbrace{0,\dots,0}_{Q-q}): x_1^2+\cdots+x_{q+1}^2=1\}.
\end{equation}
Observe that $\mathbb{S}^q$ is a $q$-dimensional compact manifold with geodesic defined by $\rho(x,y)=\arccos(x\cdot y)$. 

 Let $\mu_q^*$ denote the normalized volume measure on $\mathbb{S}^q$. 
 By representing a point $x\in \mathbb{S}^q$ as $(x' \sin\theta,\cos\theta)$ for some $x'\in \mathbb{S}^{q-1}$, one has the recursive formula for measures
\be\label{eq:recursivemeasure}
\frac{\omega_q}{\omega_{q-1}} d\mu^*_q(x)=\sin^{q-1}(\theta)d\theta d\mu^*_{q-1}(x'),
\ee
where $\omega_q$ denotes the surface volume of $\mathbb{S}^q$. One can write $\omega_q$ recursively by
\be\label{eq:volumerecurs}
\omega_{q}= \frac{2\pi^{(q+1)/2}}{\Gamma((q+1)/2)}=
\begin{cases}
2\pi, &\mbox{ if $q=1$},\\[1ex]
\displaystyle\sqrt{\pi}\frac{\Gamma(q/2)}{\Gamma(q/2+1/2)}\omega_{q-1}, &\mbox{ if $q\ge 2$,}
\end{cases}
\ee
where $\Gamma$ denotes the Gamma function. 

The restriction of a homogenous harmonic polynomial in $q+1$ variables to the $q$-dimensional unit sphere $\mathbb{S}^q$
is called a spherical harmonic. 
The space of all spherical harmonics of degree $\ell$ in $q+1$ variables will be denoted  by $\mathbb{H}_\ell^q$. 
The  space of the restriction of  all $q+1$ variable polynomials of degree $< n$ to $\mathbb{S}^q$ will be denoted by $\Pi_n^q$. 
We extend this notation for an arbitrary real value $x>0$ by writing  $\Pi_x^q\coloneqq \Pi_{\floor{x}}^q$.
It is known that $\mathbb{H}_\ell^q$ is orthogonal to $\mathbb{H}_j^q$ in $L^2(\mu_q^*)$ whenever $j\neq \ell$, and $\disp\Pi_n^q=\bigoplus_{\ell=0}^{n-1   }\mathbb{H}_\ell^q$.
In particular, $\disp L^2(\mu^*_q)=\bigoplus_{\ell=0}^{\infty}\mathbb{H}_\ell^q$.

 If we let $\{Y_{\ell,k}\}_{k=1}^{\dim (\mathbb{H}_\ell^q)}$ be an orthonormal basis for $\mathbb{H}_\ell^q$ with respect to $\mu^*_q$, we can define
\begin{equation}\label{eq:reproduce}
    K_{q,\ell}(x,y)\coloneqq \sum_{k=1}^{\dim (\mathbb{H}^q_\ell)} Y_{\ell,k}(x)Y_{\ell,k}(y).
\end{equation}
In \cite{mullerbk, steinweissbk}, it is shown that 
\be\label{eq:summation}
K_{q,\ell}(x,y)=\frac{\omega_{q}}{\omega_{q-1}}p_{q,\ell}(1)p_{q,\ell}(x\cdot y),
\ee
where $p_{q,\ell}$ denotes the orthonormalized ultraspherical polynomial of dimension $q$ and degree $\ell$. These ultraspherical polynomials satisfy the following orthogonality condition.
\begin{equation}\label{j-orth}
    \int_{-1}^1 (1-x^2)^{(q/2-1)} p_{q,m}(x)p_{q,n}(x)dx=\delta_{m,n}.
\end{equation}
Computationally, it is customary to use the following recurrence relation:
\begin{equation}\label{eq:recurrence}
\begin{aligned}
\sqrt{\frac{(n+1)(n+q-1)}{(2n+q-1)(2n+q+1)}}&p_{q,n+1}(x)=xp_{q,n}(x)-\sqrt{\frac{n(n+q-2)}{(2n+q-1)(2n+q-3)}}p_{q,n-1}(x), \qquad n\ge 1,\\
& p_{q,0}(x)=p_{q,0}=2^{1/2-q/2}\frac{\Gamma(q-1)}{\Gamma(q/2)},\qquad p_{q,1}(x)=2^{1/2-q/2}\frac{\sqrt{\Gamma(q)\Gamma(q+1)}}{\Gamma(q/2)}x.
\end{aligned}
\end{equation}
We note further that
\begin{equation}\label{eq:pnat1}
    p_{q,n}(1)=\frac{2^{1/2-q/2}}{\Gamma(q/2)}\sqrt{\frac{\Gamma(n+q-1)(2n+q-1)}{\Gamma(n+1)}}.
\end{equation}

\begin{rem}{\rm
Many notations have been used for ultraspherical polynomials in the past. 
For example, \cite{szegopoly} uses the notation of $P_n^{(\lambda)}$ for the Gegenbauer polynomials, also commonly denoted by $C_n^{(\lambda)}$. 
It is also usual to use a normalization, which we will denote by $R_n^q$ in this remark, given by $R_n^q=p_{q,n}/p_{q,n}(1)$.
Ultraspherical polynomials are also simply a special case of the Jacobi polynomials $P_n^{(\alpha,\beta)}$ where $\alpha=\beta$. 
Setting
\be\label{eq:normalization_coeff}
h_{q,n}\coloneqq 2^{q-1}\frac{\Gamma(n+q/2)^2}{n!\Gamma(n+q-1)(2n+q-1)},
\ee
 we have the following connection between these notations:
\be\label{eq:notation_relation}
\ba
p_{q,n}(x)&=h_{q,n}^{-1/2}P_{n}^{(q/2-1,q/2-1)}(x)=\frac{\Gamma(q-1)}{\Gamma(q/2)}\sqrt{\frac{n!(2n+q-1)}{2^{q-1}\Gamma(n+q-1)}}C_n^{(q/2-1/2)}(x)\\
&=\frac{2^{1/2-q/2}}{\Gamma(q/2)}\sqrt{\frac{\Gamma(n+q-1)\Gamma(2n+q-1)}{\Gamma(n+1)}}R_n^q.
\ea
\ee
\qed}
\end{rem}

Furthermore, the ultraspherical polynomials for the sphere of dimension $d_1$ can be represented by those for the sphere of dimension $d_2$ in the following manner
\be\label{eq:dimchange}
p_{d_1,n}=\sum_{\ell=0}^n C_{d_2,d_1}(\ell,n)p_{d_2,\ell}.
\ee
The coefficients $C$ have been studied, and explicit formulas are given in \cite[Equation 7.34]{askeypoly} and \cite[Equation 4.10.27]{szegopoly}.

\textbf{The constant convention}\\
\textit{In the sequel, $c, c_1,\cdots$ will denote generic positive constants depending upon the fixed quantities in the discussion, such as the manifold, the dimensions $q$, $Q$, and various parameters such as $S$ to be introduced below. 
Their values may be different at different occurrences, even within a single formula.
The notation $A\ls B$ means $A\le cB$, $A\gs B$ means $B\ls A$, and $A\sim B$ means $A\ls B\ls A$.}

\subsection{Localized kernels}
\label{subsec:LocalizedKernels}

Let $h$ be an infinitely differentiable function supported on $[0,1]$ where $h(x)=1$ on $[0,1/2]$. 
This function will be fixed in the rest of this paper, and its mention will be omitted from the notation.
Then we define the following univariate kernel for $t\in [-1,1]$:
\begin{equation}\label{eq:kernel}
\Phi_{n,q}(t)\coloneqq \Phi_{n,q}(h;t)=\sum_{\ell=0}^n h\left(\frac{\ell}{n}\right)K_{\ell,q}(t)=\frac{\omega_q}{\omega_{q-1}}\sum_{\ell=0}^{n}h\left(\frac{\ell}{n}\right)p_{q,\ell}(1)p_{q,\ell}(t).
\end{equation}
The following proposition lists some technical properties of these kernels which we will often use, sometimes without an explicit mention.
\begin{proposition}\label{prop:kernelprop}
Let $x,y\in \mathbb{S}^Q$. For any $S>0$, the kernel $\Phi_{n,q}(x,y)$ satisfies the \textbf{localization bound}
\begin{equation}
\label{eq:sphkernloc}
    |\Phi_{n,q}(x\cdot y)|\lesssim \frac{n^q}{\max(1,n\arccos(x\cdot y))^S},
\end{equation}
where the constant involved may depend upon $S$.
Further, we have the Lipschitz condition:
\begin{equation}
\label{eq:sph_bernstein}
|\Phi_{n,q}(x\cdot y)-\Phi_{n,q}(x\cdot y')|\lesssim n^{q+1}|\arccos(x\cdot y)-\arccos(x\cdot y')|,\qquad y'\in \mathbb{S}^Q.
\end{equation}
%
\end{proposition}

\begin{proof}
The estimate \eqref{eq:sphkernloc} is proved in \cite{mhaskarpoly}.
Since $\theta\mapsto \Phi_{n,q}(\cos\theta)$ is a trigonometric polynomial of degree $< n$, the Bernstein inequality for the derivatives of trigonometric polynomials implies that
$$
|\Phi_{n,q}(\cos\theta)-\Phi_{n,q}(\cos\phi)|\le n\|\Phi_{n,q}\|_\infty |\theta-\phi|\ls n^{q+1}|\theta-\phi|.
$$
This leads easily to \eqref{eq:sph_bernstein}.
%
\end{proof}

\subsection{Approximation on spheres}
\label{subsec:sphapprox}

Methods of approximating functions on $\mathbb{S}^q$ have been studied in, for example, \cite{mhaskarsphere,rustamovsphere} and some details are summarized in Proposition~\ref{prop:sphapprox}. 

For a compact set $A$, let $C(A)$ denote the space of continuous functions on $A$, equipped with the supremum norm $\norm{f}_{A}=\max_{x\in A}|f(x)|$. 
We define the degree of approximation for a function $f\in C(\mathbb{S}^q)$ to be
\begin{equation}\label{eq:degapprox}
E_n(f)\coloneqq\inf_{P\in \Pi_n^q} \norm{f-P}_{\mathbb{S}^q}.
\end{equation}
Let $W_\gamma(\mathbb{S}^q)$ be the class of all $f\in C(\mathbb{S}^q)$ such that
\begin{equation}\label{eq:sphsoboldef}
||f||_{W_\gamma(\mathbb{S}^q)}\coloneqq ||f||_{\mathbb{S}^q}+\sup_{n\geq 0}2^{n\gamma}E_{2^n}(f)<\infty.
\end{equation}
We note that an alternative smoothness characterized in terms of constructive properties of $f$ is explored by many authors; some examples are given in  \cite{daiapprox}. 
We define the approximation operator for $\mathbb{S}^q$ by
\be\label{eq:sphapproxop}
\sigma_n(f)(x)\coloneqq\sigma_n(\mathbb{S}^q,f)(x)= \int_{\mathbb{S}^q}\Phi_{n,q}(x\cdot u)f(u)d\mu^*_q(u).
\ee
With this setup, we now review some bounds on how well $\sigma_n(f)$ approximates $f$ (cf. \cite{mhaskarsphere}).
\begin{proposition}\label{prop:sphapprox} Let $n\geq 1$.\newline
{\rm (a)} For  all $P\in \Pi_{n/2}^q$, we have $\sigma_n(P)=P$.\\
{\rm (b)}
For any $f\in C(\mathbb{S}^q)$, we have
\begin{equation}\label{eq:sphgoodapprox}
    E_n(f)\leq \norm{f-\sigma_n(f)}_{\mathbb{S}^q}\lesssim E_{n/2}(f).
\end{equation}
In particular, if $\gamma>0$ then $f\in W_\gamma(\mathbb{S}^q)$ if and only if
\be
\norm{f-\sigma_n(f)}_{\mathbb{S}^q}\ls \norm{f}_{W_\gamma(\mathbb{S}^q)}n^{-\gamma}.
\ee
\end{proposition}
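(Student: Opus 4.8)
The plan is to derive everything from two ingredients: the reproducing property of the projection kernels $K_{q,\ell}$ of $\mathbb{H}_\ell^q$, and the localization estimate \eqref{K1}. For part (a), I would decompose $P\in\Pi_{n/2}^q$ into its spherical--harmonic components, $P=\sum_{\ell<\lfloor n/2\rfloor}P_\ell$ with $P_\ell\in\mathbb{H}_\ell^q$. From \eqref{eq:reproduce} together with the mutual orthogonality of $\mathbb{H}_\ell^q$ and $\mathbb{H}_j^q$ for $j\neq\ell$, one has $\int_{\mathbb{S}^q}K_{q,\ell}(x,u)Y(u)\,d\mu_q^*(u)=Y(x)$ when $Y\in\mathbb{H}_\ell^q$ and $=0$ when $Y\in\mathbb{H}_j^q$ with $j\neq\ell$. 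Since $h(\ell/n)=1$ for every $\ell<n/2$, substituting the definition \eqref{eq:kernel} of $\Phi_{n,q}$ into $\sigma_n(P)$ and integrating term by term collapses the sum to $\sigma_n(P)=\sum_{\ell<\lfloor n/2\rfloor}h(\ell/n)P_\ell=P$.

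For part (b), the left inequality is immediate: by \eqref{eq:kernel} the function $u\mapsto\Phi_{n,q}(x\cdot u)$ is a spherical polynomial of degree $<n$, so $\sigma_n(f)\in\Pi_n^q$ and therefore $E_n(f)\le\norm{f-\sigma_n(f)}_{\mathbb{S}^q}$. For the right inequality I would use the standard near--best--approximant argument: choose $P^*\in\Pi_{n/2}^q$ with $\norm{f-P^*}_{\mathbb{S}^q}=E_{n/2}(f)$ (such a $P^*$ exists by compactness), observe that $\sigma_n(P^*)=P^*$ by part (a), and use linearity of $\sigma_n$ to write
\[
f-\sigma_n(f)=(f-P^*)-\sigma_n(f-P^*),
\]
so that $\norm{f-\sigma_n(f)}_{\mathbb{S}^q}\le(1+\norm{\sigma_n})E_{n/2}(f)$, where $\norm{\sigma_n}$ denotes the operator norm of $\sigma_n$ on $C(\mathbb{S}^q)$.

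The only real work, and the step I expect to be the main obstacle, is the uniform bound $\norm{\sigma_n}\lesssim 1$, equivalently $\sup_{x\in\mathbb{S}^q}\int_{\mathbb{S}^q}|\Phi_{n,q}(x\cdot u)|\,d\mu_q^*(u)\lesssim 1$ with a constant independent of $n$. For this I would apply \eqref{K1} with $S=q+1$ and split $\mathbb{S}^q$ into the cap $\{u:\rho(x,u)\le 1/n\}$ and the dyadic annuli $\{u:2^k/n<\rho(x,u)\le 2^{k+1}/n\}$, $k\ge 0$ (only finitely many of which are nonempty), using the elementary volume estimate $\mu_q^*(\{u:\rho(x,u)\le r\})\lesssim r^q$ for $0<r\le\pi$. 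On the cap one gets $\lesssim n^q\cdot n^{-q}=O(1)$; the $k$-th annulus contributes $\lesssim n^q(2^k)^{-(q+1)}\cdot(2^{k+1}/n)^q\lesssim 2^{-k}$, and summing the geometric series over $k\ge 0$ gives $O(1)$. This is entirely routine once \eqref{K1} is granted, but it is where the localization property is essential.

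Finally, for the $W_\gamma(\mathbb{S}^q)$ equivalence: if $f\in W_\gamma(\mathbb{S}^q)$, then monotonicity of $m\mapsto E_m(f)$ gives $E_{n/2}(f)\lesssim\norm{f}_{W_\gamma(\mathbb{S}^q)}n^{-\gamma}$, and the bound just proved yields $\norm{f-\sigma_n(f)}_{\mathbb{S}^q}\lesssim\norm{f}_{W_\gamma(\mathbb{S}^q)}n^{-\gamma}$; conversely, if $\norm{f-\sigma_n(f)}_{\mathbb{S}^q}\lesssim n^{-\gamma}$ for all $n$, then $E_n(f)\le\norm{f-\sigma_n(f)}_{\mathbb{S}^q}\lesssim n^{-\gamma}$ forces $\sup_{n\ge 0}2^{n\gamma}E_{2^n}(f)<\infty$, i.e. $f\in W_\gamma(\mathbb{S}^q)$.
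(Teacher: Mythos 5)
Your proof is correct, and it is essentially the standard argument that the paper itself relies on: the proposition is stated without proof and attributed to \cite{mhaskarsphere}, where exactly this combination of polynomial reproduction (from $h\equiv 1$ on $[0,1/2]$), a uniformly bounded Lebesgue constant obtained from the localization estimate \eqref{K1} via a dyadic decomposition, and the near-best-approximant identity $f-\sigma_n(f)=(f-P^*)-\sigma_n(f-P^*)$ is used. Your Lebesgue-constant computation is also the same argument the paper repeats later in Lemma~\ref{lemma:manifold_lebesgue_number} for the sub-manifold setting, so nothing further is needed.
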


\begin{rem}\label{rem:sphapprox}
Part (a) is known as a \textit{reproduction} property, which shows that polynomials up to degree $<n/2$ are unchanged when passed through the  operator $\sigma_n$. Part (b) demonstrates that $\sigma_n$ yields what we term a \textit{good approximation}, where its approximation error is no more than some constant multiple of the degree of approximation. Part (c)  not only gives the approximation bounds in terms of the smoothness parameter $\gamma$, but shows also that the rate of decrease of the approximation error obtained by $\sigma_n(f)$ \textbf{determines} the smoothness $\gamma$.\qed
\end{rem}

\subsection{Approximation on equators}
\label{subsec:eqapprox}

Let $SO(Q+1)$ denote group of all unitary $(Q+1)\times (Q+1)$ matrices with determinant equal to $1$. 
A \emph{$q$-dimensional equator} of $\mathbb{S}^Q$ is a set of the form $\mathbb{Y}=\{\mathcal{R}u:u\in\mathbb{S}^q\}$ for some $\mathcal{R}\in SO(Q+1)$.
 The goal in the remainder of this section is to give approximation results for equators. 
 
 Since there exist infinite options for $\mathcal{R}\in \operatorname{SO}(Q+1)$ to generate the set $\mathbb{Y}$, we first give a definition of degree of approximation in terms of spherical polynomials that is invariant to the choice of $\mathcal{R}$.

Fix $\mathbb{Y}$ to be a given $q$-dimensional equator of $\mathbb{S}^Q$ and let $\mathcal{R},\mathcal{S}\in \operatorname{SO}(Q+1)$ mapping $\mathbb{S}^q$ to $\mathbb{Y}$. Observe that if $P\in \Pi_n^q$, then $P(\mathcal{R}^T\mathcal{S}\circ)\in \Pi_n^q$ and vice versa. As a result, the functions $F_\mathcal{R}=f(\mathcal{R}\circ)$ and $F_\mathcal{S}=f(\mathcal{S}\circ)$ defined on $\mathbb{S}^q$ satisfy
\be
E_n(F_\mathcal{R})=E_n(F_\mathcal{S}).
\ee
Since the degree of approximation in this context is invariant to the choice of $\mathcal{R}\in \operatorname{SO}(Q+1)$, we may simply choose any such matrix that maps $\mathbb{S}^q$ to $\mathbb{Y}$, drop the subscript $\mathcal{R}$ from $F_\mathcal{R}$, and define
\be
E_n(\mathbb{Y},f)\coloneqq E_n(F).
\ee
This allows us to define the space $W_\gamma(\mathbb{Y})$ as the class of all $f\in C(\mathbb{Y})$ such that
\begin{equation}\label{eq:spheresmooth}
||f||_{W_\gamma(\mathbb{Y})}\coloneqq ||f||_{\mathbb{Y}}+\sup_{n\geq 0} 2^{n\gamma}E_{2^n}(\mathbb{Y},f)<\infty.
\end{equation}
We can also define the approximation operator on the set $\mathbb{Y}$ as
\begin{equation}\label{eq:sphapproxop2}
\sigma_n(\mathbb{Y},f)(x)\coloneqq \int_\mathbb{Y} \Phi_{n,q}(x\cdot y)f(y)d\mu^*_\mathbb{Y}(y),
\end{equation}
where $\mu^*_\mathbb{Y}(y)$ is the probability volume measure on $\mathbb{Y}$. Let $F_\mathcal{R}\in C(\mathbb{S}^q)$ satisfy $F_\mathcal{R}=f\circ \mathcal{R}$. We observe that
\begin{equation}
\begin{aligned}
\sigma_n(\mathbb{Y},f)(x)=&\int_{\mathbb{S}^q}\Phi_{n,q}(x\cdot \mathcal{R}u)f(\mathcal{R}u)d\mu_q^*(u)\\
=&\int_{\mathbb{S}^q}\Phi_{n,q}(\mathcal{R}^Tx\cdot u)F_\mathcal{R}(u)d\mu^*_q(u)\\
=&\sigma_n(\mathbb{S}^q,F_\mathcal{R})(\mathcal{R}^Tx).
\end{aligned}
\end{equation}

We now give an analogue of Proposition~\ref{prop:sphapprox} for approximation on equators.

\begin{theorem}\label{theo:subthm}
Let $f\in C(\mathbb{Y})$. \\
{\rm (a)} We have
\begin{equation}\label{eq:subthm1}
    E_n(\mathbb{Y},f)\leq\norm{\sigma_n(\mathbb{Y},f)-f}_\mathbb{Y}\lesssim E_{n/2}(\mathbb{Y},f).
\end{equation}
{\rm (b)} If $\gamma>0$, then $f\in W_\gamma(\mathbb{Y})$ if and only if
\begin{equation}\label{eq:subthm2}
\norm{\sigma_n(\mathbb{Y},f)-f}_\mathbb{Y}\lesssim n^{-\gamma}\norm{f}_{W_\gamma(\mathbb{Y})}.
\end{equation}

\end{theorem}

\begin{proof} Let $F(\circ)=f(\mathcal{R}\circ)$ for some $\mathcal{R}\in \operatorname{SO}(Q+1)$ with $\mathbb{Y}=\{\mathcal{R}u:u\in\mathbb{S}^q\}$. To see \eqref{eq:subthm1}, we check using Proposition~\ref{prop:sphapprox} that
\begin{equation}
\norm{\sigma_n(\mathbb{Y},f)-f}_\mathbb{Y}=\norm{\sigma_n(\mathbb{S}^q,F)(\mathcal{R}^T\circ)-F(\mathcal{R}^T\circ)}_\mathbb{Y}=\norm{\sigma_n(\mathbb{S}^q,F)-F}_{\mathbb{S}^q}\lesssim E_{n/2}(F)=E_{n/2}(\mathbb{Y},f).
\end{equation}
Additionally, $E_n(\mathbb{Y},f)\leq \norm{\sigma_n(\mathbb{Y},f)-f}_\mathbb{Y}$ since $\sigma_n(\mathbb{Y},f)=\sigma_n(\mathbb{S}^q,F)(\mathcal{R}^Tx)\in \Pi_n^q$.
Part (b) can be seen from part (a) and the definitions.
\end{proof}

\section{Function approximation on manifolds}
\label{sec:manifoldapprox}

In this section, we develop the notion of \textit{smoothness} for the target function defined on a manifold, and state our main theorem: Theorem~\ref{theo:mainthm}. For a brief introduction to manifolds and some results we will be using in this paper, see Appendix~\ref{sec:manifoldintro}.

Let $Q\geq q\geq 1$ be integers and $\mathbb{X}$ be a $q$-dimensional, compact, connected, submanifold of $\mathbb{S}^Q$ without boundary. 
Let $\rho$ denote the geodesic distance and $\mu^*$ be the normalized volume measure (that is, $\mu^*(\mathbb{X})=1$). 
For any $x\in \mathbb{X}$, observe that the tangent space $\mathbb{T}_x(\mathbb{X})$ is a $q$-dimensional vector space tangent to $\mathbb{S}^Q$. We define $\mathbb{S}_x=\mathbb{S}_x(\mathbb{X})$ to be the $q$-dimensional equator of $\mathbb{S}^Q$ passing through $x$ whose own tangent space at $x$ is also $\mathbb{T}_x(\mathbb{X})$. 
 As an important note, $\mathbb{S}_x$ is also a $q$-dimensional compact manifold.

In this paper we will consider many spaces, and need to define balls on each of these spaces, which we list in Table~\ref{tab:balldef} below.

\begin{table}[ht]

\begin{center}
\begin{tabular}{|c|c|c|}
\hline
Space & Description & Definition\\
\hline
Ambient space & Euclidean ball & $B_{Q+1}(x,r)=\{y\in \mathbb{R}^{Q+1}:\norm{x-y}_2\leq r\}$\\
\hline Ambient sphere & Spherical cap & $S^Q(x,r)=\{y\in \mathbb{S}^Q:\arccos(x\cdot y) \le r\}$\\
\hline
Tangent space & Tangent ball & $B_{\mathbb{T}}(x,r)=\{y\in \mathbb{T}_x(\mathbb{X}): ||x-y||_{2}\le r\}$\\
\hline
Tangent sphere & Tangent cap & $\mathbb{S}_x(r)=\{y\in \mathbb{S}_x:\arccos(x\cdot y)\le r\}$ \\
\hline
Manifold & Geodesic ball & $\mathbb{B}(x,r)=\{y\in \mathbb{X}:\rho(x,y)\le  r\}$\\
\hline
\end{tabular}\end{center}
\caption{Definition and description of balls in different spaces.}
\label{tab:balldef}
\end{table}

We also need to define the smoothness classes we will be considering for functions on $\mathbb{X}$. 
Let $C(\mathbb{X})$ denote the space of all continuous functions on $\mathbb{X}$, and $C^\infty(\mathbb{X})\subset C(\mathbb{X})$ denote the space of all infinitely differentiable functions on $\mathbb{X}$. 
Let $\overline{\varepsilon}_x$ be the exponential map at $x$ for $\mathbb{S}_x$ and $\varepsilon_x$ be the exponential map at $x$ for $\mathbb{X}$.
Since both $\mathbb{X}$ and $\mathbb{S}_x$ are compact, we have some $\iota_1,\iota_2$ such that $\varepsilon_x,\overline{\varepsilon}_x$ are defined on $B_{\mathbb{T}}(x,\iota_1),B_{\mathbb{T}}(x,\iota_2)$ respectively for any $x$. 
We write $\iota^*=\min\{1, \iota_1,\iota_2\}$ and define $\eta_x :\mathbb{S}_x(\iota^*)\to \XX$ by 
 $\eta_x: \varepsilon_x \circ\overline{\varepsilon}_x^{-1}$.
 Thus,
\be\label{eq:rho_sphdist}
\rho(x,\eta_x(y))=\arccos(x\cdot y), \qquad x\in \XX, \ y\in \mathbb{S}_x(\iota^*).
\ee

\begin{definition}\label{def:manifold_smoothness}
We say that $f\in C(\mathbb{X})$ is \textbf{$\gamma$-smooth} for some $\gamma>0$, or also that $f\in W_\gamma(\mathbb{X})$, if for every $x\in \mathbb{X}$ and $\phi\in C^\infty(\mathbb{X})$ supported on $\mathbb{B}(x,\iota^*)$, the function $F_{x,\phi}:\mathbb{S}_x\to \mathbb{R}$ defined by $F_{x,\phi}\coloneqq f(\eta_x(u))\phi(\eta_x(u))$ belongs to $W_\gamma(\mathbb{S}_x)$ as outlined in Section~\ref{subsec:sphapprox} (in particular, Equation~\eqref{eq:spheresmooth}). 
We also define
\be\label{eq:manifold_sobnorm}
\norm{f}_{W_\gamma(\mathbb{X})}\coloneqq \sup_{x\in\mathbb{X},\norm{\phi}_{W_\gamma(\mathbb{S}_x)}\leq 1} \norm{F_{x,\phi}}_{W_\gamma(\mathbb{S}_x)}.
\ee
\end{definition}

Our main theorem, describing the approximation of $ff_0$ (the target function weighted by the density of data points) by the operator defined in \eqref{eq:proto_const}, is the following.
We note that approximation of $ff_0$ includes local approximation on $\XX$ in the sense that when the training data is sampled only from a subset of $\XX$, this fact can be encoded by $f_0$ being supported on this subset.

\begin{theorem}\label{theo:mainthm}
We assume that
\begin{equation}\label{eq:ballmeasure}
    \sup_{x\in \mathbb{X},r>0}\frac{\mu^*(\mathbb{B}(x,r))}{r^q}\ls 1.
\end{equation}
Let $\mathcal{D}=\{(y_j,z_j)\}_{j=1}^M$ be a random sample  from a joint distribution $\tau$. 
We assume that the  marginal distribution of $\tau$ restricted to $\mathbb{X}$ is absolutely continuous with respect to $\mu^*$ with density $f_0$, and that the random variable $z$ has a bounded range. We say $z\in [-\norm{z},\norm{z}]$.
Let
\begin{equation}\label{eq:fdef}
    f(y)\coloneqq \mathbb{E}_\tau(z|y),
\end{equation}
and
\begin{equation}\label{eq:approximation}
    F_{n}(\mathcal{D};x)\coloneqq \frac{1}{M}\sum_{j=1}^M z_j\Phi_{n,q}(x\cdot y_j),\qquad x\in \mathbb{S}^Q,
\end{equation}
where $\Phi_{n,q}$ is defined in \eqref{eq:kernel}.

Let $0<\gamma<2$ and $ff_0\in W_\gamma(\mathbb{X})$.
 Then for every $n\geq 1$, $0<\delta<1/2$ and 
 \be\label{eq:Mncond}
  M\gtrsim n^{q+2\gamma}\log(n/\delta),
  \ee
   we have with $\tau$-probability $\geq 1-\delta$:
\begin{equation}\label{eq:approxest}
    \norm{F_n(\mathcal{D};\circ)-ff_0}_\mathbb{X}\lesssim \frac{\sqrt{\norm{f_0}_{\mathbb{X}}}\norm{z}+\norm{ff_0}_{W_\gamma(\mathbb{X})}}{n^\gamma}.
\end{equation}
Equivalently, for integer $M\ge 2$ and $n$ satisfying \eqref{eq:Mncond}, we have with $\tau$-probability $\geq 1-\delta$:
\be\label{eq:sample_approxest}
 \norm{F_n(\mathcal{D};\circ)-ff_0}_\mathbb{X}\lesssim \left\{\sqrt{\norm{f_0}_{\mathbb{X}}}\norm{z}+\norm{ff_0}_{W_\gamma(\mathbb{X})}\right\}\left(\frac{\log(M/\delta^{q+2\gamma})}{M}\right)^{\gamma/(q+2\gamma)}.
\ee
\end{theorem}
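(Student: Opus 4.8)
The plan is to split the error $F_n(\mathcal{D};\circ) - ff_0$ into a deterministic approximation error and a stochastic sampling error, treat each separately, and then combine them by choosing $n$ optimally in terms of $M$. First, I would introduce the ``expectation operator'' $\overline{F}_n(x) := \mathbb{E}_\tau\big(z\,\Phi_{n,q}(x\cdot y)\big) = \int_{\mathbb{X}} f(y) f_0(y) \Phi_{n,q}(x\cdot y)\,d\mu^*(y)$, so that $F_n(\mathcal{D};x)$ is exactly the empirical average of i.i.d.\ bounded random variables $z_j \Phi_{n,q}(x\cdot y_j)$ whose common mean is $\overline{F}_n(x)$. Then
\[
  \norm{F_n(\mathcal{D};\circ)-ff_0}_{\mathbb{X}} \le \norm{F_n(\mathcal{D};\circ)-\overline{F}_n}_{\mathbb{X}} + \norm{\overline{F}_n-ff_0}_{\mathbb{X}},
\]
and I would bound the two terms by $\lesssim \sqrt{\norm{f_0}_{\mathbb{X}}}\,\norm{z}_{\mathbb{X}\times\Omega}\, n^{-\gamma}$ and $\lesssim \norm{ff_0}_{W_\gamma(\mathbb{X})}\, n^{-\gamma}$ respectively, under the hypothesis $M \gtrsim n^{q+2\gamma}\log(n/\delta)$.

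For the \emph{deterministic term} $\norm{\overline{F}_n - ff_0}_{\mathbb{X}}$, the key difficulty is that $\Phi_{n,q}$ is the kernel built from the ultraspherical polynomials of the \emph{ambient} sphere $\mathbb{S}^Q$, yet we want an approximation rate governed by the manifold dimension $q$. The standard device is localization: fix $x\in\mathbb{X}$, use a smooth partition-of-unity bump $\phi$ supported on $\mathbb{B}(x,\iota^*)$ that equals $1$ near $x$, and write $ff_0 = (ff_0)\phi + (ff_0)(1-\phi)$. On the support of $\phi$ we pull back to the tangent equator $\mathbb{S}_x$ via $\eta_x$, where by Definition~\ref{def:manifold_smoothness} the pulled-back function $F_{x,\phi}$ lies in $W_\gamma(\mathbb{S}_x)$ with controlled norm; here the geodesic-preserving identity \eqref{eq:rho_sphdist} is what lets us compare $\Phi_{n,q}$ evaluated along $\mathbb{X}$ with $\Phi_{n,q}$ evaluated along $\mathbb{S}_x$, and Theorem~\ref{subthm} gives the $n^{-\gamma}$ rate for $\sigma_n(\mathbb{S}_x, F_{x,\phi})$. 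The contribution from the ``far'' part $(1-\phi)$ is handled by the localization estimate \eqref{K1} of Proposition~\ref{prop:kernelprop}: since $(1-\phi)$ vanishes on $\mathbb{B}(x,\iota^*/2)$, the integrand is multiplied by $|\Phi_{n,q}(x\cdot y)| \lesssim n^q/(n\rho(x,y))^S$ with $\rho(x,y)\gtrsim 1$, and with $S$ chosen large this term is $O(n^{-S+q})$, negligible; the measure bound \eqref{eq:ballmeasure} is used to convert the polynomial tail into an integrable bound. The restriction $0<\gamma<2$ appears because the localized bump $\phi$ destroys smoothness beyond order $2$ in this Jackson-type argument (only a second-order modulus-type quantity survives multiplication by a $C^\infty$ cutoff in the relevant estimates).

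For the \emph{stochastic term} $\norm{F_n(\mathcal{D};\circ) - \overline{F}_n}_{\mathbb{X}}$, I would apply a concentration inequality (Bernstein or Hoeffding for bounded i.i.d.\ variables) pointwise in $x$, then upgrade to a uniform-over-$\mathbb{X}$ bound via a covering/chaining argument using the Lipschitz estimate \eqref{K2}. Pointwise, each $z_j\Phi_{n,q}(x\cdot y_j)$ is bounded by $\lesssim \norm{z}_{\mathbb{X}\times\Omega}\, n^q$ and has variance $\lesssim \norm{z}_{\mathbb{X}\times\Omega}^2 \int_{\mathbb{X}}|\Phi_{n,q}(x\cdot y)|^2 f_0(y)\,d\mu^*(y) \lesssim \norm{f_0}_{\mathbb{X}}\norm{z}_{\mathbb{X}\times\Omega}^2\, n^q$, where again \eqref{K1} together with \eqref{eq:ballmeasure} gives the $L^2$ norm bound $\int |\Phi_{n,q}|^2 \lesssim n^q$. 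Bernstein's inequality then yields, with probability $\ge 1-\delta'$, a deviation $\lesssim \sqrt{\norm{f_0}_{\mathbb{X}}}\,\norm{z}_{\mathbb{X}\times\Omega}\sqrt{n^q \log(1/\delta')/M} + \norm{z}_{\mathbb{X}\times\Omega}\, n^q \log(1/\delta')/M$; under $M\gtrsim n^{q+2\gamma}\log(n/\delta)$ both terms are $\lesssim \sqrt{\norm{f_0}_{\mathbb{X}}}\,\norm{z}_{\mathbb{X}\times\Omega}\, n^{-\gamma}$. To make this uniform, cover $\mathbb{X}$ by $\lesssim n^{cQ}$ balls of radius $\sim 1/n^{2}$; the Lipschitz bound \eqref{K2} controls the oscillation of both $F_n(\mathcal{D};\cdot)$ and $\overline{F}_n$ within each ball by $\lesssim n^{q+1}\cdot n^{-2} = n^{q-1}$ times $\norm{z}$, and absorbing the union-bound factor $\log(n^{cQ}) \sim \log n$ into the $\log(n/\delta)$ already present gives the uniform estimate. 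Finally, combining the two halves gives \eqref{eq:approxest}, and substituting the near-optimal choice $n \sim (M/\log(M/\delta^{q+2\gamma}))^{1/(q+2\gamma)}$ — the largest $n$ consistent with \eqref{eq:Mncond} — converts $n^{-\gamma}$ into the rate in \eqref{eq:sample_approxest}.

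I expect the main obstacle to be the deterministic manifold-approximation step: making rigorous the claim that the ambient kernel $\Phi_{n,q}$, when integrated against $ff_0$ over $\mathbb{X}$, behaves like the intrinsic operator $\sigma_n(\mathbb{S}_x, \cdot)$ up to acceptable error. This requires carefully tracking how $x\cdot y$ for $y\in\mathbb{X}$ near $x$ differs from $x\cdot \overline{\varepsilon}_x(\varepsilon_x^{-1}(y))$ for the corresponding point on the tangent equator — the exponential maps $\varepsilon_x$ and $\overline{\varepsilon}_x$ agree to first order at $x$ but differ at second order (curvature of $\mathbb{X}$ inside $\mathbb{S}^Q$), so one needs $|\arccos(x\cdot y) - \arccos(x\cdot \eta_x^{-1}(y))| \lesssim \rho(x,y)^3$ or similar, combined with \eqref{K2}, to show the discrepancy is lower-order; the constraint $\gamma<2$ is exactly what makes this tolerable. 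Everything else (the concentration bounds, the covering argument, the final optimization in $n$) is routine once the bilinear/kernel estimates of Proposition~\ref{prop:kernelprop} are in hand.
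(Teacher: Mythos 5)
Your decomposition and both halves of the argument match the paper's proof essentially step for step: the deterministic part is the paper's Theorem~\ref{theo:manifold_summop} (proved exactly as you describe, via a bump function, the pullback $\eta_x$ to the tangent equator, the third-order comparison $|\arccos(x\cdot\eta_x(u))-\rho(x,\eta_x(u))|\lesssim\rho^3$ from Proposition~\ref{prop:taylorprop}, the metric-tensor expansion, and the Lebesgue-constant bound from \eqref{K1} and \eqref{eq:ballmeasure}), and the stochastic part is the paper's Theorem~\ref{theorem2} (Bernstein's inequality with the same $n^q$ variance bound, a finite net, and a union bound), followed by the same optimization of $n$ against $M$.

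One quantitative slip in your chaining step: with a net of radius $\sim n^{-2}$, the oscillation bound $n^{q+1}\cdot n^{-2}=n^{q-1}$ (times $\norm{z}$) is far larger than the target deviation $n^{-\gamma}$, so the additive comparison you propose does not close as written. This is easily repaired by taking a finer net of radius $n^{-(q+1+\gamma)}$, which is still of polynomial cardinality and only changes the union bound by a constant factor in the logarithm. The paper sidesteps the issue more elegantly (Lemma~\ref{lemma:supnorm_concentration}): since $x\mapsto\int\Phi_{n,q}(x\cdot y)\,d\nu(y)$ is itself a spherical polynomial of degree $<n$, the Bernstein inequality for polynomial derivatives with a net of mesh $1/(2cn)$ gives the \emph{multiplicative} comparison $\norm{P}_{\mathbb{S}^Q}\le 2\max_{x\in\mathcal{C}}|P(x)|$, so the net size stays at $\sim n^Q$ and no fine tuning of the radius against the target accuracy is needed.
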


We discuss two corollaries of this theorem, which demonstrate how the theorem can be used for both estimation of the probability density $f_0$ and the approximation of the function $f$ in the case when the training data is sampled from the volume measure on $\XX$.

The first corollary is a result on function approximation in the case when the marginal distribution of $y$ is $\mu^*$; i.e., $f_0\equiv 1$.

\begin{corollary}\label{cor:maincor}
Assume the setup of Theorem~\ref{theo:mainthm}. Suppose also that the marginal distribution of $\tau$ restricted to $\mathbb{X}$ is uniform. Then we have with $\tau$-probability $\geq 1-\delta$:
\begin{equation}\label{eq:fnapprox}
    \norm{F_n(\mathcal{D};\circ)-f}_\mathbb{X}\lesssim \frac{\norm{z}+\norm{f}_{W_\gamma(\mathbb{X})}}{n^\gamma}.
\end{equation}
\end{corollary}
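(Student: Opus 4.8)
The plan is to establish Theorem~\ref{theo:mainthm} first, since Corollary~\ref{cor:maincor} follows immediately by setting $f_0\equiv 1$ (so that $ff_0=f$, $\norm{f_0}_{\mathbb{X}}=1$, and the hypothesis $ff_0\in W_\gamma(\mathbb{X})$ becomes $f\in W_\gamma(\mathbb{X})$). The core of the argument is a bias–variance decomposition of $F_n(\mathcal{D};\circ)$ against its expectation. Define the deterministic operator
\be
\overline{F}_n(x)\coloneqq \mathbb{E}_\tau\!\left(F_n(\mathcal{D};x)\right)=\int_{\mathbb{X}}\Phi_{n,q}(x\cdot y)f(y)f_0(y)\,d\mu^*(y).
\ee
Then $\norm{F_n(\mathcal{D};\circ)-ff_0}_{\mathbb{X}}\le \norm{F_n(\mathcal{D};\circ)-\overline{F}_n}_{\mathbb{X}}+\norm{\overline{F}_n-ff_0}_{\mathbb{X}}$, and I would bound the two terms separately: the second (bias/approximation) term by localization and the manifold smoothness machinery, the first (variance/sampling) term by a concentration inequality uniform in $x\in\mathbb{X}$.

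For the bias term, the key point is that $\Phi_{n,q}$ is highly localized via \eqref{K1}. Fix $x\in\mathbb{X}$. Using a smooth partition of unity subordinate to a cover of $\mathbb{X}$ by balls $\mathbb{B}(x,\iota^*)$, I would write $ff_0$ near $x$ as $F_{x,\phi}\circ\overline{\varepsilon}_x\circ\varepsilon_x^{-1}$ on the tangent equator $\mathbb{S}_x$, where $F_{x,\phi}\in W_\gamma(\mathbb{S}_x)$ by Definition~\ref{def:manifold_smoothness}. The integral defining $\overline{F}_n(x)$ should then be compared to $\sigma_n(\mathbb{S}_x, F_{x,\phi})(x)$: the contribution from $y$ far from $x$ (distance $\gtrsim 1/n$) is negligible by \eqref{K1} with $S$ large, after absorbing the factor $n^q$ against the volume bound \eqref{eq:ballmeasure} (this is where \eqref{eq:ballmeasure} is essential — it controls $\mu^*$ of spherical shells by $r^q$, matching $n^q$); the contribution from $y$ near $x$ is handled by transferring the integral over $\mathbb{X}$ to an integral over $\mathbb{S}_x$ using $\eta_x$ and \eqref{eq:rho_sphdist}, picking up controlled Jacobian errors since $\eta_x$ is a near-isometry near $x$. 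Then Proposition~\ref{prop:sphapprox}(b) (equivalently Theorem~\ref{subthm}(b)) gives $\norm{\sigma_n(\mathbb{S}_x,F_{x,\phi})-F_{x,\phi}}_{\mathbb{S}_x}\ls n^{-\gamma}\norm{F_{x,\phi}}_{W_\gamma(\mathbb{S}_x)}\ls n^{-\gamma}\norm{ff_0}_{W_\gamma(\mathbb{X})}$. Evaluating at $x$ and noting $\phi\equiv 1$ near $x$ yields $|\overline{F}_n(x)-f(x)f_0(x)|\ls n^{-\gamma}\norm{ff_0}_{W_\gamma(\mathbb{X})}$, uniformly in $x$. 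The constraint $\gamma<2$ presumably enters because the partition-of-unity multiplier $\phi$ and the coordinate change $\eta_x$ are only guaranteed to preserve $W_\gamma$ regularity for $\gamma$ in this range (second-order smoothness of the charts).

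For the variance term, each summand $z_j\Phi_{n,q}(x\cdot y_j)$ is an i.i.d. random variable with mean $\overline{F}_n(x)$, bounded in absolute value by $\norm{z}_{\mathbb{X}\times\Omega}\cdot\norm{\Phi_{n,q}}_\infty\ls n^q\norm{z}_{\mathbb{X}\times\Omega}$, and with variance bounded (using \eqref{K1} with $S$ large and \eqref{eq:ballmeasure}) by $\ls \norm{z}_{\mathbb{X}\times\Omega}^2\int_{\mathbb{X}}\Phi_{n,q}(x\cdot y)^2 f_0(y)\,d\mu^*(y)\ls \norm{f_0}_{\mathbb{X}}\norm{z}_{\mathbb{X}\times\Omega}^2\, n^q$. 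A Bernstein inequality then gives, for fixed $x$, a bound of order $\sqrt{\norm{f_0}_{\mathbb{X}}}\,\norm{z}_{\mathbb{X}\times\Omega}\,\sqrt{n^q\log(1/\delta')/M}+n^q\log(1/\delta')/M$ with probability $\ge 1-\delta'$. To upgrade this to a uniform-in-$x$ bound I would take a net of $\mathbb{X}$ of cardinality $\ls n^{Q}$ (or use the polynomial structure: $F_n(\mathcal{D};\circ)-\overline{F}_n\in\Pi_n^Q$, so a net of size polynomial in $n$ suffices together with the Bernstein inequality \eqref{K2} for $\Phi_{n,q}$ to control oscillation between net points), set $\delta'=\delta/(\text{net size})$ so that $\log(1/\delta')\sim\log(n/\delta)$, and invoke the hypothesis \eqref{eq:Mncond} that $M\gtrsim n^{q+2\gamma}\log(n/\delta)$, which makes the dominant $\sqrt{n^q\log(n/\delta)/M}$ term $\ls n^{-\gamma}$ and the lower-order term even smaller. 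Combining the two estimates yields \eqref{eq:approxest}; the equivalent form \eqref{eq:sample_approxest} follows by choosing $n\sim (M/\log(M/\delta^{q+2\gamma}))^{1/(q+2\gamma)}$, the largest $n$ allowed by \eqref{eq:Mncond}.

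I expect the main obstacle to be the bias estimate: carefully quantifying the error incurred by passing from the integral over the unknown manifold $\mathbb{X}$ to the integral over the model equator $\mathbb{S}_x$ through the map $\eta_x$, and showing that the partition-of-unity localization and this change of variables together cost only an $O(n^{-\gamma})$ perturbation — in particular tracking how the $W_\gamma(\mathbb{S}_x)$ norm of $F_{x,\phi}$ is controlled by $\norm{ff_0}_{W_\gamma(\mathbb{X})}$ uniformly in $x$, and verifying that the constants from the Jacobian of $\eta_x$ and from \eqref{eq:ballmeasure} are uniform over the compact manifold. The concentration part is comparatively routine once the variance computation is done, the only subtlety being the uniformization over $x\in\mathbb{X}$, which is standard given the polynomial degree bound on the random function.
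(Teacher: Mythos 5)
Your proposal is correct and follows essentially the same route as the paper: the paper proves Theorem~\ref{theo:mainthm} via exactly this bias--variance split (Theorem~\ref{theo:manifold_summop} for the bias, handled by localization of $\Phi_{n,q}$, a partition of unity, and transfer to $\mathbb{S}_x$ via $\eta_x$; Bernstein concentration over an $n^{-1}$-net of size $\sim n^Q$ for the variance), and then obtains Corollary~\ref{cor:maincor} immediately by setting $f_0\equiv 1$. Your guess about where $\gamma<2$ enters is close to the mark --- in the paper it arises from the cubic error $|\arccos(x\cdot\eta_x(u))-\rho(x,\eta_x(u))|\lesssim\rho^3$ and the choice $\delta=n^{-(\gamma+q+1)/(q+3)}$, which requires $n\delta=n^{(2-\gamma)/(q+3)}>1$.
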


The second corollary is obtained by setting $f\equiv 1$, to point out that our theorem gives a method for density estimation. 
In practice, one may not have knowledge of $f_0$ (or even the manifold $\mathbb{X}$). So, the following corollary can be applied to estimate this critical quantity. We use this fact in our numerical examples in Section~\ref{sec:numerical}.

Typically, a positive kernel is used for the problem of density estimation in order to ensure that the approximation is also a positive measure.
It is well known in approximation theory that this results in a saturation for the rate of convergence.
Our method does not use positive kernels, and does not suffer from such saturation.

\begin{corollary}\label{cor:densityest}
Assume the setup of Theorem~\ref{theo:mainthm}. Then
we have with $\tau$-probability $\geq 1-\delta$:
\be\label{eq:densityest}
\norm{\ \ \left|\frac{1}{M}\sum_{j=1}^M\Phi_{n,q}(\circ\cdot y_j)\right|-f_0}_\mathbb{X}\lesssim \frac{\norm{f_0}_{W_\gamma(\mathbb{X})}}{n^\gamma}.
\ee
\end{corollary}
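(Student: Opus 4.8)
The plan is to derive Corollary~\ref{cor:densityest} as a direct specialization of Theorem~\ref{theo:mainthm} rather than re-running the entire localized-kernel argument. First I would set $z\equiv 1$ (deterministically), so that the joint distribution $\tau$ is simply the marginal distribution of $y$ on $\mathbb{X}$, which by hypothesis has density $f_0$ with respect to $\mu^*$. With this choice, $f(y)=\mathbb{E}_\tau(z\mid y)=1$ identically, hence $ff_0=f_0$. The smoothness hypothesis ``$ff_0\in W_\gamma(\mathbb{X})$'' becomes ``$f_0\in W_\gamma(\mathbb{X})$,'' which is exactly what appears on the right-hand side of \eqref{eq:densityest}. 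Moreover the construction \eqref{eq:approximation} becomes $F_n(\mathcal{D};x)=\frac1M\sum_{j=1}^M\Phi_{n,q}(x\cdot y_j)$, so $F_n(\mathcal{D};\circ)-ff_0 = \frac1M\sum_j\Phi_{n,q}(\circ\cdot y_j)-f_0$.

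Next I would address the absolute-value signs on the left side of \eqref{eq:densityest}, which are the only cosmetic discrepancy from the plain conclusion \eqref{eq:approxest} of Theorem~\ref{theo:mainthm}. Since $f_0\ge 0$ (it is a density), for any function $g$ one has the pointwise bound $\bigl|\,|g(x)|-f_0(x)\,\bigr|\le |g(x)-f_0(x)|$: indeed if $g(x)\ge 0$ equality holds, and if $g(x)<0$ then $|g(x)|-f_0(x) = -g(x)-f_0(x) \le |g(x)-f_0(x)|$ while also $|g(x)|-f_0(x)\ge -(g(x)-f_0(x))\ge -|g(x)-f_0(x)|$ is automatic. Taking $g=F_n(\mathcal{D};\circ)$ and passing to the supremum over $x\in\mathbb{X}$ gives
\[
\norm{\ \left|\tfrac1M\textstyle\sum_{j=1}^M\Phi_{n,q}(\circ\cdot y_j)\right|-f_0}_{\mathbb{X}}\ \le\ \norm{F_n(\mathcal{D};\circ)-ff_0}_{\mathbb{X}}.
\]

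Finally I would invoke \eqref{eq:approxest} from Theorem~\ref{theo:mainthm}. With $z\equiv 1$ we have $\norm{z}_{\mathbb{X}\times\Omega}=1$, and since $f_0$ is a probability density on a space of total measure $1$, $\norm{f_0}_{\mathbb{X}}$ is bounded by a constant depending only on the ambient quantities (or one may simply absorb $\sqrt{\norm{f_0}_{\mathbb{X}}}$ into the generic constant and note it is $\le\norm{f_0}_{W_\gamma(\mathbb{X})}$ up to a constant, or bound $1\lesssim \norm{f_0}_{W_\gamma(\mathbb{X})}$ since $\int f_0\,d\mu^*=1$ forces $\norm{f_0}_{\mathbb{X}}\ge 1$). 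Hence $\sqrt{\norm{f_0}_{\mathbb{X}}}\norm{z}_{\mathbb{X}\times\Omega}+\norm{ff_0}_{W_\gamma(\mathbb{X})}\lesssim \norm{f_0}_{W_\gamma(\mathbb{X})}$, and combining with the previous display yields \eqref{eq:densityest} with $\tau$-probability $\ge 1-\delta$ under the same condition \eqref{eq:Mncond} on $M$ and $n$.

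The only point requiring any care — and it is minor — is the bookkeeping that collapses $\sqrt{\norm{f_0}_{\mathbb{X}}}+\norm{f_0}_{W_\gamma(\mathbb{X})}$ into $\norm{f_0}_{W_\gamma(\mathbb{X})}$; this uses the normalization $\mu^*(\mathbb{X})=1$ together with $\int_{\mathbb{X}}f_0\,d\mu^*=1$, which forces $\norm{f_0}_{\mathbb{X}}\ge 1$ and, via $\norm{f_0}_{\mathbb{X}}\le\norm{f_0}_{W_\gamma(\mathbb{X})}$, gives the claim. There is no real obstacle here: all the analytic work is already done inside Theorem~\ref{theo:mainthm}, and this corollary is a two-line reduction once the trivial inequality $\bigl|\,|g|-f_0\,\bigr|\le|g-f_0|$ is noted.
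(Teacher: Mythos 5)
Your proposal is correct and follows essentially the same route as the paper, which proves the corollary in one line by setting $z\equiv 1$ (so $f\equiv 1$, $ff_0=f_0$) and noting $\sqrt{\norm{f_0}_{\mathbb{X}}}\lesssim\norm{f_0}_{W_\gamma(\mathbb{X})}$, using $\norm{f_0}_{\mathbb{X}}\ge 1$ exactly as you do. Your explicit treatment of the absolute-value sign via $\bigl|\,|g|-f_0\,\bigr|\le|g-f_0|$ is a detail the paper leaves implicit, and it is correct.
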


\section{Numerical examples}
\label{sec:numerical}

In this section, we illustrate our theory with some numerical experiments.
In Section~\ref{subsec:expiecewise}, we consider the approximation of a piecewise differentiable function, and demonstrate how the localization of the kernel leads to a determination of the locations of the singularities. 
The example in Section~\ref{subsec:spencerdata} is motivated by magnetic resonance relaxomety. 
Since it is relevant to our method for practical applications, we have included some discussion and results about how $q$ effects the approximation in this example. 
The example in Section~\ref{subsec:darcyflow} illustrates how our method can be used for inverse problems in the realm of differential equations.
In all the examples, we will examine how the accuracy of the approximation depends on the maximal  degree $n$ of the polynomial, the number $M$ of samples, and the level of noise.

\subsection{Piecewise differentiable function}
\label{subsec:expiecewise}

In this example only we define the function to be approximated as
\be
f(\theta)\coloneqq 1+\abs{\cos\theta}^{1/2}\sin(\cos\theta+\sin\theta)/2,
\ee 
defined on the ellipse
\be
E=\{(3\cos\theta,6\sin\theta): \theta\in (-\pi,\pi]\}.
\ee
We project $E$ to the sphere $\mathbb{S}^2$ using an inverse stereographic projection defined by
\be
\mathbf{P}(\vec{x})=\frac{(\vec{x},1)}{\norm{(\vec{x},1)}_2},
\ee 
and call $\mathbb{X}=\mathbf{P}(E)$. Each $\vec{x}\in\mathbb{X}$ is associated with the value $\theta_\vec{x}$ satisfying $\vec{x}=\mathbf{P}\big((3\cos\theta_\vec{x},6\sin\theta_\vec{x})\big)$, so that $f(\vec{x})\coloneqq f(\theta_\vec{x})$ is a continuous function on $\mathbb{X}$.

We generate our data points by taking $\vec{y}_j=\mathbf{P}\big((3\cos\theta_j,6\sin\theta_j)\big)$, where $\theta_j$ are each sampled uniformly at random from $(-\pi,\pi]$. We then define $z_j=f(\vec{y}_j)+\epsilon_j$ where $\epsilon_j$ are sampled from some mean-zero normal noise. 
Our data set is thus $\mathcal{D}\coloneqq\{(\vec{y}_j,f(\vec{y}_j)+\epsilon_j)\}_{j=1}^M$.
We will measure the magnitude of noise using the signal-to-noise ratio (SNR), defined by 
\be\label{eq:snr}
20\log_{10}\Big(\norm{\big(f(\vec{y}_1),\dots,f(\vec{y}_M)\big)}_2\Big/\norm{\big(\epsilon_1,\dots,\epsilon_M\big)}_2\Big).
\ee  
Since $f_0\neq 1$ in this case, we could calculate $f_0$ from the projection, or we may estimate it using Corollary~\ref{cor:densityest}. That is,
\be
f_0(\vec{x})\approx\frac{1}{M}\sum_{j=1}^M\Phi_{n,1}(\vec{x}\cdot \vec{y}_j).
\ee
This option may be desirable in cases where $f_0$ is not feasible to compute (i.e. if the underlying domain of the data is unknown or irregularly shaped). Our approximation is thus:
\be
F_n(\mathcal{D};\vec{x})= \sum_{j=1}^M (f(\vec{y}_j)+\epsilon_j)\Phi_{n,1}(\vec{x}\cdot \vec{y}_j)\left/\left(\sum_{j=1}^M \Phi_{n,1}(\vec{x}\cdot \vec{y}_j)\right).\right.
\ee

\begin{figure}[!ht]
\centering
    \includegraphics[width=.4\textwidth]{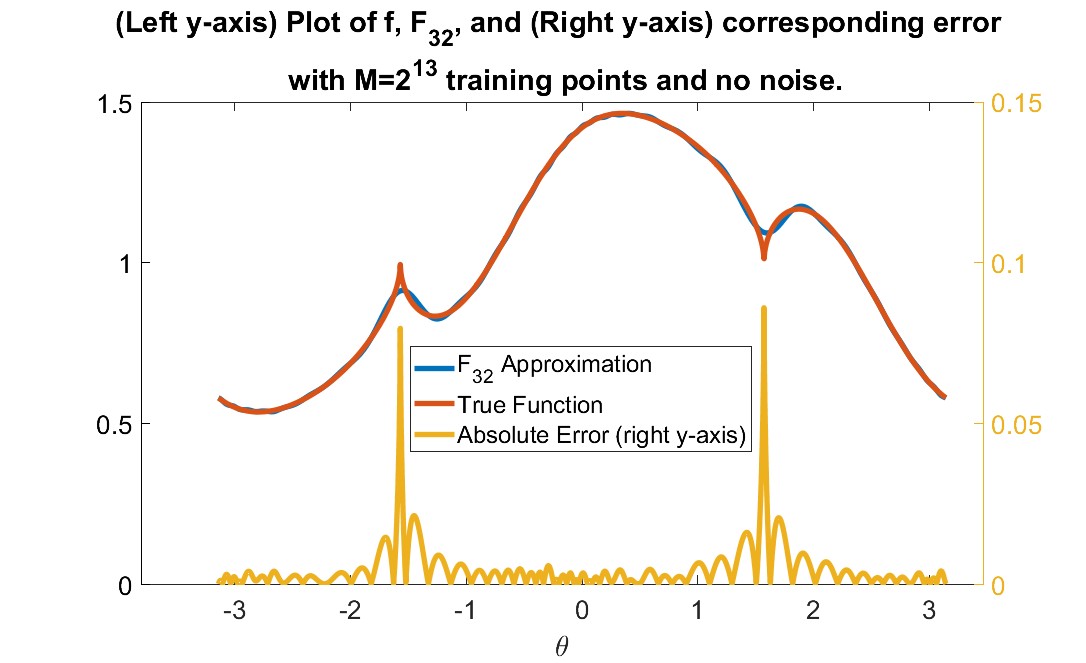}\\
    \caption{Left y-axis: Plot of the true function $f$ compared with $F_{32}$ constructed by $2^{13}$ noiseless training points. Right y-axis: Plot of $\abs{f-F_{32}}$.}
    \label{fig:errorplot}
\end{figure}

\begin{figure*}[!ht]
\centering
\begin{tabular}{ccc}
    \includegraphics[width=.3\textwidth]{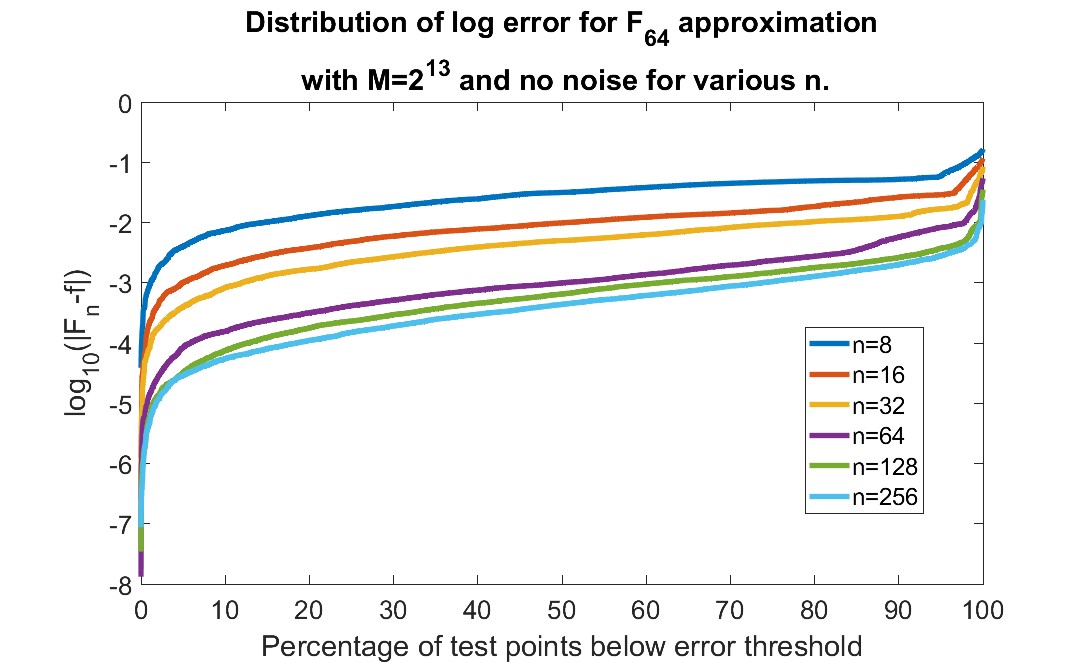}
&
    \includegraphics[width=.3\textwidth]{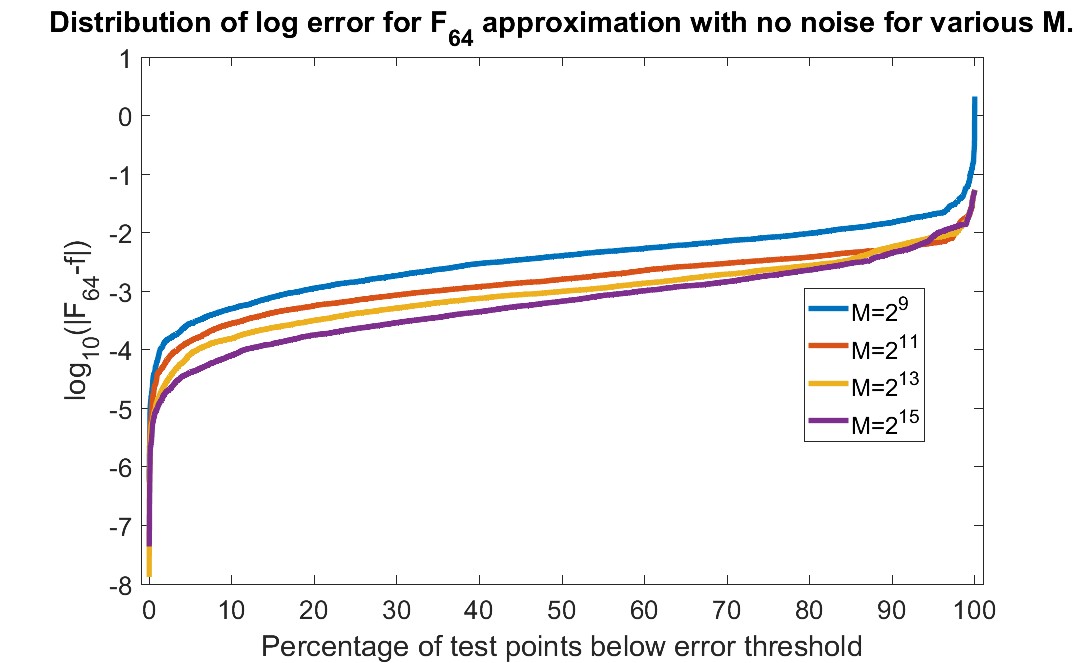}
&
    \includegraphics[width=.3\textwidth]{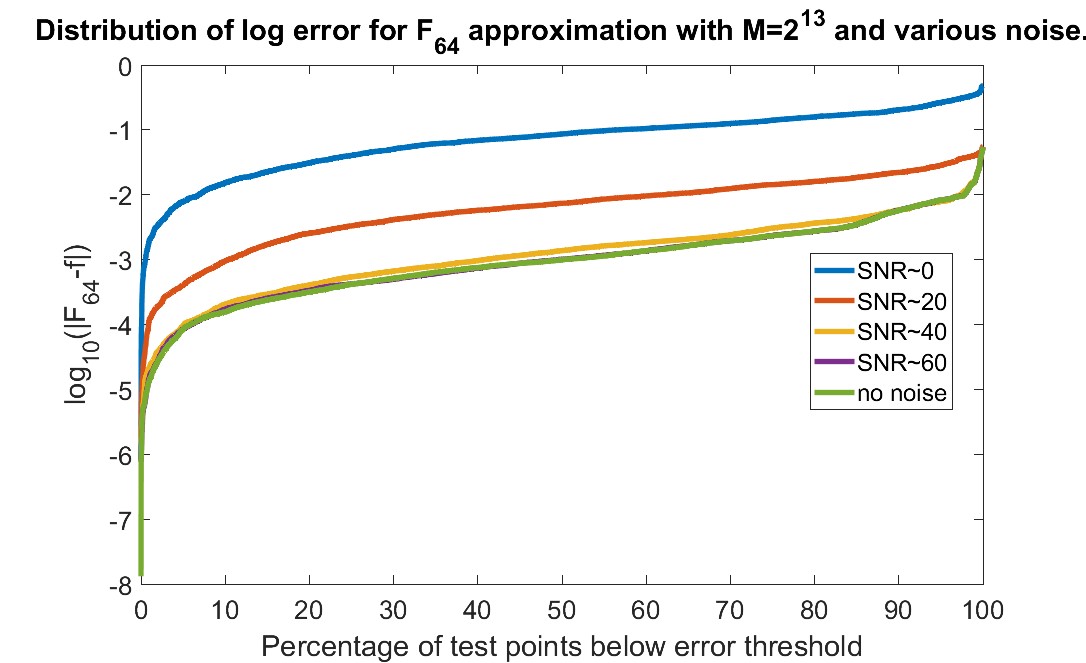}
\end{tabular}
    \caption{(Left) Percent point plot of log absolute error for various $n$ with $M=2^{13}$ training points and no noise. (Center) Percent point plot of log absolute error for various choices of $M$ with no noise. (Right) Percent point plot of log absolute error for various noise levels with $M=2^{13}$ training points.}
        \label{fig:3plot}
\end{figure*}

Figure~\ref{fig:errorplot} shows a plot of the true function and our approximation on the left y-axis and the absolute error on the right y-axis. The plot demonstrates that the approximation achieves much lower error than the uniform error bound at points where the function is relatively smooth, and only spikes locally at the singularities of the function ($\theta=\pm \pi/2$). Figure~\ref{fig:3plot} displays three percent point plots illustrating how the distribution of $\log_{10}|F_n-f|$ behaves for various choices of $n,M,\epsilon$. Each point $(x,y)$ on a curve indicates that $x\%$ of test points were approximated by our method with absolute error below $10^{y}$ for the $n$, $M$, and $\epsilon$ value associated with the curve.
The first graph shows the trend for various $n$ values.  As we increase $n$, we see consistent drop in log error.
The second graph shows various values of $M$. We again see a decrease in the overall log error as $M$ is increased.
The third graph shows how the log error decreases as the noise decreases. We can see that the approximation is much worse for low SNR values, but nearly indistinguishable from the noiseless case when the SNR is above 60.

\subsection{Parameter estimation in bi-exponential sums}
\label{subsec:spencerdata}

This example is motivated by magnetic resonance relaxometry, in which the proton nuclei of water are first
excited with radio frequency pulses and then exhibit an exponentially decaying electromagnetic signal.
When one may assume the presence of two water compartments undergoing slow exchange, with signal corrupted by additive Gaussian noise, the
 signal is modeled typically as a bi-exponential decay function  \eqref{eq:ex2eq} (cf. \cite{Spencer-biexponential}):
$$
 F(t)=c_1\exp(-t/T_{2,1})+c_2\exp(-t/T_{2,2})+E(t),
 $$
 where $E$ is the noise, $T_{2,1}, T_{2,2}>0$, and the time $t$ is typically sampled at equal intervals.
The problem is to determine $c_1, c_2, T_{2,1}, T_{2,2}$. 
The problem appears also in many other medical applications, such as intravoxel incoherent motion studies in magnetic resonance.
 An accessible survey of these applications is given in \cite{Istratov1999}.
 
 Writing $t=j\delta$, $\lambda_1=\delta/T_{2,1}$, $\lambda_2=\delta/T_{2,2}$, we may reformulate the data as
\be\label{eq:ex2eq}
f(j)\coloneqq f(\lambda_1,\lambda_2,j)=c_1 e^{-\lambda_1j}+c_2 e^{-\lambda_2j}+\epsilon(j),
\ee 
where $\epsilon(j)$ are samples of mean-zero normal noise.

In this example, suggested by Dr. Spencer at the National Institute of Aging (NIH), we  consider the case where $c_1=.7,c_2=.3$ and use our method to  
 determine the values $\lambda_1,\lambda_2$, given data of the form
\be\label{eq:ex2eq2}
\tilde{\mathbf{y}}(\lambda_1,\lambda_2)\coloneqq(f(1),f(2),\dots,f(100)).
\ee
 We ``train" our approximation process with $M$ samples of $(\lambda_1,\lambda_2)\in [.1,.7]\times [1.1,1.7]$ chosen uniformly at random and then plugging those values into~\eqref{eq:ex2eq} to generate vectors of the form shown in~\eqref{eq:ex2eq2}. 
 The dimension of the input data is $Q=100$, however (in the noiseless case) the data lies on a $q=2$ dimensional manifold, so we will use $\Phi_{n,2}$ to generate our approximations. 
 
 We note that our method is agnostic to the particular model \eqref{eq:ex2eq2} used to generate the data. 
 We treat $\lambda_1, \lambda_2$ as functions of $\tilde{\mathbf{y}}$ without a prior knowledge of this function.
 In the noisy case, this problem does not perfectly fit the theory studied in this paper since the noise is applied to the input values $f(t)$ meaning we cannot assume they lie directly on an unknown manifold anymore. 
 Nevertheless, we can see some success with our method. 
We define the operators
\be
\mathbf{T}(\tilde{\mathbf{y}})=1000\tilde{\mathbf{y}}-(380,189,116,0,\dots,0),\qquad \mathbf{P}(\circ)=\frac{(\circ, 100)}{\norm{(\circ, 100)}_2}
\ee
and denote $\mathbf{y}=\mathbf{P}(\mathbf{T}(\tilde{\mathbf{y}}))$.
 In practice, the values used to define $\mathbf{T}$ and $\mathbf{P}$   need to be treated as hyperparameters of the model. In this example, we did not conduct a rigorous grid search.
 We use the same density estimation as in Section~\ref{subsec:expiecewise}:
\be
\operatorname{DE}\big(\x(\lambda_1,\lambda_2)\big)=\sum_{j=1}^M\Phi_{n,2}\big(\vec{x}(\lambda_1,\lambda_2)\cdot \mathbf{y}(\lambda_{1,j}, \lambda_{2,j})\big).
\ee
As a result, our approximation process looks like:
\be
\begin{bmatrix}\lambda_1\\
\lambda_2\end{bmatrix}\approx \vec{F}_n\big(\vec{x}(\lambda_1,\lambda_2)\big)=\sum_{j=1}^M\begin{bmatrix}\lambda_{1,j}\\\lambda_{2,j}\end{bmatrix}\Phi_{n,2}\big(\x(\lambda_1,\lambda_2)\cdot \mathbf{y}(\lambda_{1,j}, \lambda_{2,j})\big)\Big/\operatorname{DE}\big(\x(\lambda_1,\lambda_2)\big).
\ee 

Similar to Example~\ref{subsec:expiecewise}, we will include figures showing how the results are effected as $n,M,\epsilon$ are adjusted. We measure noise using the signal-to-noise ratio (SNR) defined by
\be
20\log_{10}\left(\norm{\tilde{\vec{y}}}_2\Big/\norm{\big(\epsilon(1),\dots,\epsilon(100)\big)}_2\right).
\ee
Unlike Example~\ref{subsec:expiecewise}, we will now be considering percent approximation error instead of uniform error as it is more relevant in this problem. We define the \textit{combined error} to be
\be
\sum_{j=1}^2 \frac{\abs{\lambda_{j,\text{true}}-\lambda_{j,\text{approx}}}}{\lambda_{j,\text{true}}}.
\ee

\begin{figure*}[!ht]
\centering
\begin{tabular}{ccc}
    \includegraphics[width=.3\textwidth]{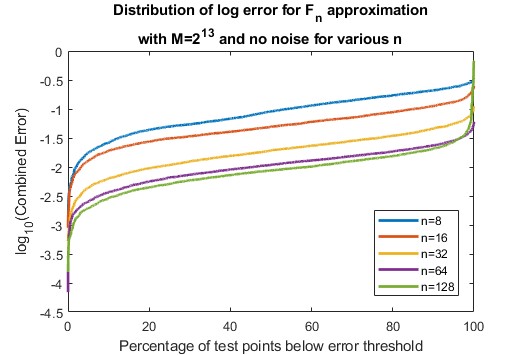}
&
    \includegraphics[width=.3\textwidth]{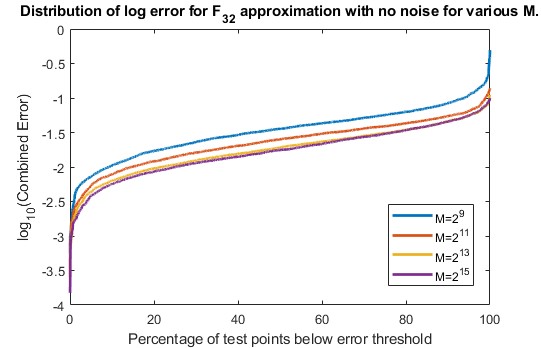}
&
    \includegraphics[width=.3\textwidth]{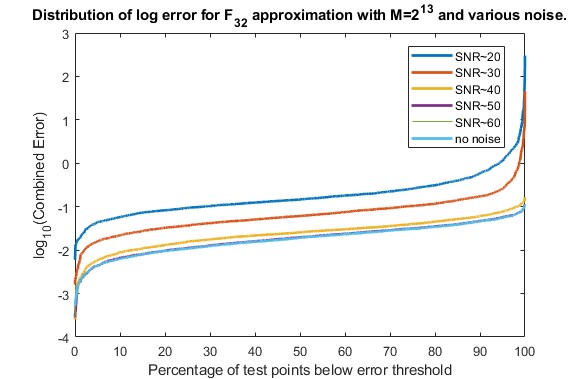}
\end{tabular}
    \caption{(Left) Percent point plot of log combined error for various $n$ with $M=2^{13}$ training points, and no noise. (Center) Percent point plot of log combined error for fixed $n=32$, various choices of $M$, and no noise. (Right) Percent point plot of log combined error for fixed $n=32$, fixed $M=2^{13}$ training points, and various noise levels.}
        \label{fig:ex23plot}
\end{figure*}

\begin{figure*}[!ht]
\centering
    \includegraphics[width=.45\textwidth]{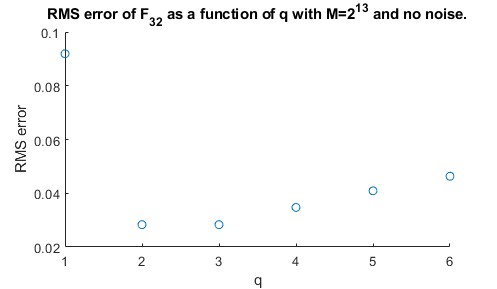}
    \caption{Plot of RMS error for approximation by $F_{32}$ for various $q$ values with $M=2^{13}$ and no noise.}
        \label{fig:ex2q}
\end{figure*}

Figure~\ref{fig:ex23plot} depicts three percent point plots showing the distribution of sorted $\log_{10}$(Combined Error) points for various $n,M,\epsilon$. Each point $(x,y)$ on a curve indicates that $x\%$ of test points were approximated by our method with combined error below $10^{y}$ for the $n$, $M$, and $\epsilon$ associated with the curve.
In the first graph, we see the distribution of for various choices of $n$. 
As $n$ increases, the overall log error decreases. 
An interesting phenomenon occurring in this figure is with the $n=128$ case where the uniform error is actually higher than the $n=64$ case. 
This is likely due to the fact  that overfitting can occur if $n$ gets too large relative to a fixed $M$. 
The second graph illustrates how the approximation improves as $M$ is increased. 
As expected, we see log error decay as we include more and more training points. In the third graph, we see that the approximation improves up to a limit as the noise decreases. 
There is very little noticeable difference between the noiseless case and any case where SNR$>50$.

Another question that may arise when utilizing our method on various data is what value of $q$ to use. While the theory predicts that $q$ should be associated with the intrinsic dimension of the manifold underlying the data, in practice this can only be estimated and so $q$ should be treated as a hyperparameter. In Figure~\ref{fig:ex2q}, we explore how changing $q$ effects the approximation in this example. In this case, the intrinsic dimension is $2$, and when $q=2,3$ the approximation does well. If $q$ is chosen too high or two low, the approximation yields a greater error.

\subsection{Darcy flow problem}
\label{subsec:darcyflow}

In this section we will look at a numerical example from the realm of PDE inverse problems. Steady-state Darcy flow is given by the following PDE (see for example,  \cite[Eq. (4.7)]{raonic2024convolutional}):
\be
-\nabla\cdot(a\nabla y)=f,
\ee
defined on a domain $D$ with the property that $y|_{\partial D}=0$. The problem is to predict the \textit{diffusion coefficient} $a$ and \textit{forcing term} $f$ given some noisy samples of $y$ on $D$. In this paper we consider a 1-dimensional version and suppose that $a=e^{-st}$ and $f=pe^{-st}$ for some $p,s$. We take noisy samples of $y(p,s;\circ)=y$ satisfying the following boundary value problem:
\be\label{eq:ex3difeq}
-\frac{d}{dt}(e^{-st}y'(t))=pe^{-st},\qquad y(1)=0, y(0)=1.
\ee
In this sample, we take a similar approach to that of Example~\ref{subsec:spencerdata} by ``training'' our model with a data set of the form $\{\vec{y}_j,(p_j,s_j)\}_{j=1}^M$, where $(p_j,s_j)\in [.1,.25]\times [1.5,2.5]$ are sampled uniformly at random for each $j$. Letting $y_j$ denote the $y$ satisfying \eqref{eq:ex3difeq} with $p=p_j,s=s_j$, then $\vec{y}_j=\mathbf{P}(y_j(t_1),y_j(t_2),\dots,y_j(t_{100}))$, where $t_1,t_2,\dots,t_{100}$ are sampled uniformly from $[0,1]$ and $\mathbf{P}$ is the projection to the sphere. In this example, the projection first consists of finding the center $C$ and maximum spread over a single feature $r$ of the data. That is,
\be
\ba
C=&\left(\max_{j}y_j(t_1)+\min_j y_j(t_1),\dots,\max_j y_j(t_{100})+\min_j y_j(t_{100})\right)\Big/2,\\ r=&\max_{t_i}\left(\max_{j}y_j(t_1)-\min_j y_j(t_1),\dots,\max_j y_j(t_{100})-\min_j y_j(t_{100})\right).
\ea
\ee
Then, we define
\be
\mathbf{P}(\circ)=\frac{(\circ-C,r)}{\norm{(\circ-C,r)}_2}.
\ee
Our approximation process then looks like:
\be
\begin{bmatrix}p\\
s\end{bmatrix}\approx \vec{F}_n\big(\vec{y}\big)=\sum_{j=1}^M\begin{bmatrix}p_j\\s_j\end{bmatrix}\Phi_{n,2}(\vec{y}\cdot \mathbf{y}_j)\Big/\operatorname{DE}(\vec{y}),
\ee 
where
\be
\operatorname{DE}(\vec{y})=\sum_{j=1}^M \Phi_{n,2}(\vec{y}\cdot \vec{y}_j).
\ee
Also similar to Example~\ref{subsec:spencerdata}, we use the same notion of SNR and evaluate the success of our model using a \textit{combined error}, now defined to be
\be
\left(\abs{\frac{p_{\text{true}}-p_{\text{approx}}}{p_{\text{true}}}}+\abs{\frac{s_{\text{true}}-s_{\text{approx}}}{s_{\text{true}}}}\right).
\ee

\begin{figure*}[!ht]
\centering
\begin{tabular}{ccc}
    \includegraphics[width=.3\textwidth]{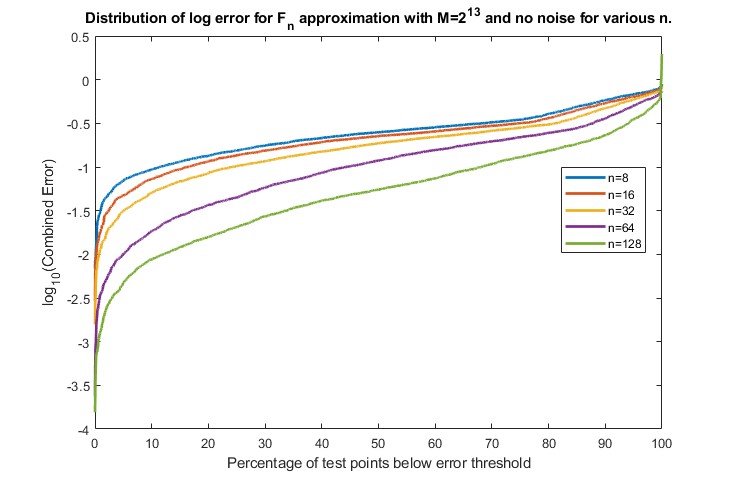}
&
    \includegraphics[width=.3\textwidth]{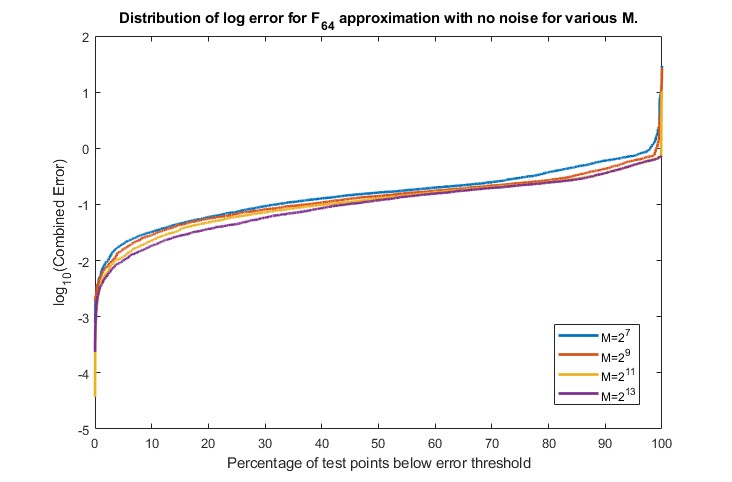}
&
    \includegraphics[width=.3\textwidth]{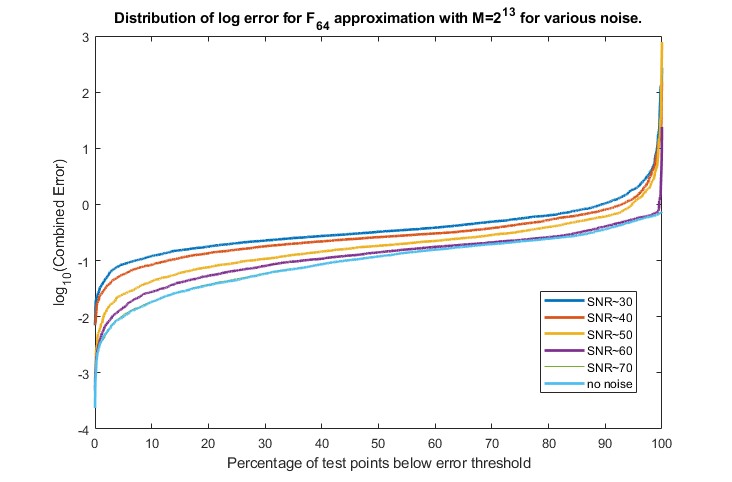}
\end{tabular}
\caption{(Left) Percent point plot of log combined error for various $n$, fixed $M=2^{13}$, and no noise. (Center) Percent point plot of log combined error for fixed $n=64$, various $M$, and no noise. (Right) Percent point plot of log combined error for fixed $n=64$, fixed $M=2^{13}$, and various noise levels.}
        \label{fig:ex3plot}
\end{figure*}

In Figure~\ref{fig:ex3plot}, we provide some percent point plots from using our method on this data. Each point $(x,y)$ on a curve indicates that $x\%$ of test points were approximated by our method with combined error below $10^{y}$ for the $n$, $M$, and $\epsilon$ associated with the curve. We see in the left-most plot that as we increase $n$, the error tends to decrease. In contrast to previous examples, the middle plot does not show much improvement by increasing M. This may be an indication of the fact that we have chosen a tight parameter space in this example (as compared to \ref{subsec:spencerdata}) and not many samples are needed to sufficiently cover the space. On the right-most plot, we see a decrease in error with the decrease of noise as expected, with convergence appearing to occur around the SNR=70 mark, as indicated by the green and light-blue lines being so close together.

\section{Proofs}\label{sec:proofs}

The purpose of this section is to prove Theorem~\ref{theo:mainthm}. 

In Section~\ref{bhag:summabilityop}, we study the approximation properties of the integral reconstruction operator  defined in \eqref{eq:manifold_summabilityop} (Theorem~\ref{theo:manifold_summop}).
In Section~\ref{bhag:resultsproof}, we use this theorem with $ff_0$ in place of $f$, and  use the Bernstein concentration inequality (Proposition~\ref{prop:berncon}) to discretize the integral expression in \eqref{eq:manifold_summabilityop} and complete the proof of Theorem~\ref{theo:mainthm}.

\subsection{Integral reconstruction operator}\label{bhag:summabilityop}
In this section, we prove the following theorem which is an integral analogue of Theorem~\ref{theo:mainthm}.

\begin{theorem}\label{theo:manifold_summop}
Let $0<\gamma<2$, $f\in W_\gamma(\mathbb{X})$, $\sigma_n$ be as defined in \eqref{eq:manifold_summabilityop}.
 Then for $n\geq 1$, we have
\begin{equation}\label{eq:theobound}
    \norm{f-\sigma_n(\mathbb{X},f)}_\mathbb{\mathbb{X}}\lesssim n^{-\gamma}\norm{f}_{W_\gamma(\mathbb{X})}.
\end{equation}
\end{theorem}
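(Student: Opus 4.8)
The plan is to reduce the global estimate on $\XX$ to the already-established estimate on equators (Theorem~\ref{subthm}) via a partition-of-unity argument together with the localization of the kernel $\Phi_{n,q}$. First I would fix a point $x_0\in\XX$ and choose a $C^\infty$ bump function $\phi=\phi_{x_0}$ supported on $\mathbb{B}(x_0,\iota^*)$ with $\phi\equiv 1$ on $\mathbb{B}(x_0,\iota^*/2)$, normalized so that $\norm{\phi}_{W_\gamma(\mathbb{S}_{x_0})}\lesssim 1$. I then split
\be
f(x_0)-\sigma_n(\XX,f)(x_0)=\underbrace{\Big(f(x_0)-\int_\XX \Phi_{n,q}(x_0\cdot y)\phi(y)f(y)\,d\mu^*(y)\Big)}_{\text{local term}}-\underbrace{\int_\XX \Phi_{n,q}(x_0\cdot y)\big(1-\phi(y)\big)f(y)\,d\mu^*(y)}_{\text{far term}}.
\ee
For the far term, $y$ ranges over $\rho(x_0,y)\ge \iota^*/2$, so $\arccos(x_0\cdot y)=\rho(x_0,y)\gtrsim 1$ (using that the ambient chordal/geodesic metric on $\XX$ is comparable to $\rho$, a fact from the manifold appendix); the kernel bound \eqref{K1} with $S=q+\gamma+1$ then gives $|\Phi_{n,q}(x_0\cdot y)|\lesssim n^{-(\gamma+1)}$, and combined with the measure bound \eqref{eq:ballmeasure} and $\norm{f}_\XX\le\norm{f}_{W_\gamma(\XX)}$ this term is $O(n^{-\gamma}\norm{f}_{W_\gamma(\XX)})$ uniformly in $x_0$.

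For the local term, the point is to transport the integral over $\mathbb{B}(x_0,\iota^*)$ to an integral over the tangent sphere $\mathbb{S}_{x_0}$ using the map $\eta_{x_0}$, which by \eqref{eq:rho_sphdist} preserves distances to $x_0$: $\rho(x_0,\eta_{x_0}(u))=\arccos(x_0\cdot u)$. Changing variables $y=\eta_{x_0}(u)$ introduces a Jacobian factor $J_{x_0}(u)=d\mu^*(\eta_{x_0}(u))/d\mu^*_{\mathbb{S}_{x_0}}(u)$; since $\XX$ and $\mathbb{S}_{x_0}$ are tangent to second order at $x_0$, $J_{x_0}$ is a smooth function with $J_{x_0}(x_0)=1$, and in particular $J_{x_0}\phi(\eta_{x_0}(\cdot))\in C^\infty(\mathbb{S}_{x_0})$ with norm controlled uniformly in $x_0$. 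But $\arccos(x_0\cdot y)$ need not equal $\arccos(x_0\cdot u)$ for the ambient dot product once $y=\eta_{x_0}(u)$ — here I need the extra observation that, on the support of $\phi$, $|\arccos(x_0\cdot\eta_{x_0}(u))-\arccos(x_0\cdot u)|=0$ is false in general, so instead I use \eqref{K2}: the discrepancy between $\Phi_{n,q}(x_0\cdot\eta_{x_0}(u))$ and $\Phi_{n,q}(x_0\cdot u)$ is $\lesssim n^{q+1}$ times a quantity that is $O(\rho(x_0,y)^2)$ near $x_0$ (tangency), which after integrating against the localized kernel contributes at worst $O(n^{-1})\le O(n^{-\gamma})$ for $\gamma<2$ — this is precisely where the restriction $\gamma<2$ enters. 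Modulo this controllable error, the local term becomes
\be
f(x_0)-\int_{\mathbb{S}_{x_0}}\Phi_{n,q}(x_0\cdot u)\,F_{x_0,\psi}(u)\,d\mu^*_{\mathbb{S}_{x_0}}(u)=f(x_0)-\sigma_n(\mathbb{S}_{x_0},F_{x_0,\psi})(x_0),
\ee
where $\psi=J_{x_0}\phi$ and $F_{x_0,\psi}=f(\eta_{x_0}(\cdot))\psi(\eta_{x_0}(\cdot))\in W_\gamma(\mathbb{S}_{x_0})$ by Definition~\ref{def:manifold_smoothness}, with $\norm{F_{x_0,\psi}}_{W_\gamma(\mathbb{S}_{x_0})}\lesssim\norm{f}_{W_\gamma(\XX)}$. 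Since $\psi(\eta_{x_0}(x_0))=1$ we have $F_{x_0,\psi}(x_0)=f(x_0)$, so Theorem~\ref{subthm}(b) applied on the equator $\mathbb{S}_{x_0}$ bounds this by $\lesssim n^{-\gamma}\norm{F_{x_0,\psi}}_{W_\gamma(\mathbb{S}_{x_0})}\lesssim n^{-\gamma}\norm{f}_{W_\gamma(\XX)}$. Taking the supremum over $x_0\in\XX$ (the implied constants being uniform because $\iota^*$, the bounds on $J_{x_0}$, and the second-order tangency constants are all uniform over the compact manifold $\XX$) yields \eqref{eq:theobound}.

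The main obstacle I anticipate is making the transfer step rigorous and uniform: one must verify that the Jacobian $J_{x_0}$ and the second-order contact between $\XX$ and $\mathbb{S}_{x_0}$ at $x_0$ give bounds with constants independent of $x_0$, and that the cutoff $\phi$ can be chosen with uniformly bounded $W_\gamma(\mathbb{S}_{x_0})$-norm — this is a compactness argument over $\XX$ that relies on the manifold facts collected in the appendix. The kernel-localization bookkeeping (splitting near/far, estimating the $\Phi_{n,q}(x_0\cdot\eta_{x_0}(u))$ vs.\ $\Phi_{n,q}(x_0\cdot u)$ mismatch) is routine once \eqref{K1} and \eqref{K2} are in hand, and the appearance of the $\gamma<2$ hypothesis is a natural consequence of the quadratic tangency error being absorbed into $n^{-\gamma}$.
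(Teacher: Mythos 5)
Your overall architecture matches the paper's: localize with a bump function $\phi$, kill the far term with the kernel decay \eqref{K1}, transfer the local integral from $\XX$ to the tangent equator $\mathbb{S}_{x_0}$, and invoke the equator approximation theorem (Theorem~\ref{subthm}). The far-term estimate and the final appeal to the equator result are fine. The genuine gap is in the transfer step, which is where all the quantitative work lives (the paper isolates it as Lemma~\ref{lemma:manifold_keylemma}). You bound the kernel mismatch by $n^{q+1}$ times the geometric discrepancy via \eqref{K2} and then integrate ``against the localized kernel'' over the support of $\phi$; but \eqref{K2} is a Bernstein-type sup bound that replaces the kernel, so you cannot simultaneously exploit its decay, and integrating $n^{q+1}\rho(x_0,y)^k$ over the fixed-radius ball $\mathbb{B}(x_0,\iota^*)$ gives a bound growing like $n^{q+1}$, not $O(n^{-1})$. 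The paper's fix is a multiscale decomposition with an $n$-dependent intermediate radius $\delta=n^{-(\gamma+q+1)/(q+3)}$: on $\mathbb{B}(x_0,\delta)$ the mismatch and volume-element errors contribute $\lesssim \delta^{q+3}n^{q+1}+\delta^{q+2}n^{q}\lesssim n^{-\gamma}$, while outside $\mathbb{B}(x_0,\delta)$ and $\mathbb{S}_{x_0}(\delta)$ both integrals are controlled by the Lebesgue-constant bound $(n\delta)^{q-S}$ of Lemma~\ref{lemma:manifold_lebesgue_number}, which forces $S>(2q+3\gamma)/(2-\gamma)$ and is exactly where the hypothesis $\gamma<2$ enters (so that $n\delta\to\infty$). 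Without introducing this scale $\delta(n)$, your local estimate does not close.

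There are also two quantitative slips. First, the geometric discrepancy is cubic, not quadratic: by \eqref{eq:rho_sphdist} and Proposition~\ref{prop:taylorprop}(a), $\abs{\arccos(x_0\cdot\eta_{x_0}(u))-\arccos(x_0\cdot u)}=\abs{\arccos(x_0\cdot\eta_{x_0}(u))-\rho(x_0,\eta_{x_0}(u))}\lesssim\rho(x_0,\eta_{x_0}(u))^3$; with only an $O(\rho^2)$ estimate the balancing above would cap the admissible smoothness at $\gamma<1$ rather than $\gamma<2$. Second, your closing claim that ``$O(n^{-1})\le O(n^{-\gamma})$ for $\gamma<2$'' is false for $1<\gamma<2$, and an $O(n^{-1})$ error would destroy the rate in that range. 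Your treatment of the volume element --- absorbing the Jacobian $J_{x_0}$ into the cutoff and appealing to uniformity over the compact manifold --- is a legitimate alternative to the paper's direct estimate $\abs{\sqrt{|g_1|}-\sqrt{|g_2|}}\lesssim\delta^2$ on the small ball, but it does not by itself repair the missing near/far splitting at scale $\delta(n)$.
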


In order to prove this theorem, we will use a covering of $\XX$ using balls of radius $\iota^*$, and a corresponding partition of unity. 
A key lemma to facilitate the details here is the following.

\begin{lemma}\label{lemma:manifold_keylemma}
Let  $x\in \mathbb{X}$. 
Let $g\in C(\mathbb{X})$ be supported on $\mathbb{B}(x,\iota^*)$. If $G(u)=g(\eta_x(u))$, $0<\gamma<2$. 
Then
\begin{equation}\label{eq:manifold_localest}
    \abs{\int_\mathbb{X} \Phi_{n,q}(x\cdot y)g(y)d\mu^*(y)-\int_{\mathbb{S}_x} \Phi_{n,q}(x\cdot u) G(u)d\mu_{\mathbb{S}_x}^*(u)}\lesssim n^{-\gamma}||g||_\mathbb{X}.
\end{equation}
\end{lemma}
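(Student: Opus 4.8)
The plan is to carry out the substitution $y=\eta_x(u)$, which turns the integral over $\mathbb{X}$ into one over the tangent sphere $\mathbb{S}_x$. This is legitimate because $g$ is supported on $\mathbb{B}(x,\iota^*)=\eta_x(\mathbb{S}_x(\iota^*))$, and its whole point is that $\eta_x$ preserves the radial distance: $\arccos(x\cdot u)=\rho(x,\eta_x(u))$ by \eqref{eq:rho_sphdist}. Write $\theta=\theta(u):=\arccos(x\cdot u)$ and let $\omega_x(u)$ be the Jacobian factor of the change of variables, i.e. $d\mu^*(\eta_x(u))=\omega_x(u)\,d\mu_q^*(u)$. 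After substituting and extending $G=g\circ\eta_x$ by zero, the left-hand side of \eqref{eq:manifold_localest} equals
$$\left|\int_{\mathbb{S}_x(\iota^*)}G(u)\bigl[\Phi_{n,q}(x\cdot\eta_x(u))\,\omega_x(u)-\Phi_{n,q}(x\cdot u)\bigr]\,d\mu_q^*(u)\right|,$$
so, since $|G|\le\|g\|_\mathbb{X}$, it suffices to prove $\int_{\mathbb{S}_x(\iota^*)}\bigl|\Phi_{n,q}(x\cdot\eta_x(u))\,\omega_x(u)-\Phi_{n,q}(x\cdot u)\bigr|\,d\mu_q^*(u)\lesssim n^{-\gamma}$.

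Two geometric facts, both consequences of $\mathbb{X}$ being a compact $C^\infty$ submanifold of $\mathbb{S}^Q$ with $\mathbb{T}_x(\mathbb{X})=\mathbb{T}_x(\mathbb{S}_x)$ (see Appendix~\ref{sec:manifoldintro}), drive the estimate: (i) the extrinsic and intrinsic radial distances agree up to \emph{third} order, $0\le\theta-\arccos(x\cdot\eta_x(u))\lesssim\theta^{3}$, since the length-$\theta$ geodesic of $\mathbb{X}$ issuing from $x$ has geodesic curvature in $\mathbb{S}^Q$ bounded by the bounded second fundamental form of $\mathbb{X}$; and (ii) $\eta_x$ is volume-preserving to second order, $\omega_x(u)=1+O(\theta^{2})$, since the exponential maps of $\mathbb{X}$ and of $\mathbb{S}_x$ at $x$ have volume densities $1+O(\theta^{2})$ in normal coordinates. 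I would then split the integrand as
$$\bigl[\Phi_{n,q}(x\cdot\eta_x(u))-\Phi_{n,q}(x\cdot u)\bigr]+\Phi_{n,q}(x\cdot\eta_x(u))\bigl[\omega_x(u)-1\bigr],$$
and use two complementary bounds on it: near $x$, \eqref{K2} with (i) bounds the first piece by $n^{q+1}\theta^{3}$, while $|\Phi_{n,q}(x\cdot\eta_x(u))|\lesssim n^{q}$ with (ii) bounds the second by $n^{q}\theta^{2}$; far from $x$, \eqref{K1} (using (i) to keep $\arccos(x\cdot\eta_x(u))\gtrsim\theta$, and that $\omega_x$ is bounded) bounds the whole integrand by $n^{q}/\max(1,n\theta)^{S}$ for every $S$.

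Finally I would split $\mathbb{S}_x(\iota^*)$ at $\theta=n^{-1+\varepsilon}$ for a small $\varepsilon>0$ and integrate in polar coordinates, using \eqref{eq:recursivemeasure} to write radial integrals as $\int(\,\cdot\,)\sin^{q-1}\theta\,d\theta\sim\int(\,\cdot\,)\theta^{q-1}\,d\theta$. On $\{\theta\le n^{-1+\varepsilon}\}$ the near bound gives $\int_0^{n^{-1+\varepsilon}}(n^{q+1}\theta^{q+2}+n^{q}\theta^{q+1})\,d\theta\lesssim n^{-2+(q+3)\varepsilon}$; on $\{\theta>n^{-1+\varepsilon}\}$ the far bound gives $\int_{n^{-1+\varepsilon}}^{\iota^*}\frac{n^{q}}{(n\theta)^{S}}\,\theta^{q-1}\,d\theta\lesssim n^{-\varepsilon(S-q)}$, which is $\lesssim n^{-2}$ once $S$ is chosen large. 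Picking $\varepsilon$ small enough that $2-(q+3)\varepsilon>\gamma$ — possible exactly because $\gamma<2$ — yields the claimed bound $c\,n^{-\gamma}\|g\|_\mathbb{X}$ (indeed the sharper $n^{-2+}$).

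The crux is geometric fact (i): replacing the ambient inner product $x\cdot\eta_x(u)$ by $x\cdot u$ — equivalently, the sphere distance by the intrinsic manifold distance $\rho$ — costs only a \emph{cubic} error in $\theta$, not a quadratic one. It is the interplay of this cubic gain with the at-worst-linear modulus-of-continuity blow-up $n^{q+1}$ of $\Phi_{n,q}$ in \eqref{K2} that produces an error of order $n^{-2}$, and hence both permits and is calibrated to the hypothesis $\gamma<2$. Fact (ii) is a routine consequence of the exponential-map comparison in the appendix.
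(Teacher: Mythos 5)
Your proof is correct and follows essentially the same route as the paper's: split $\mathbb{S}_x$ at a radius of order $n^{-1+\varepsilon}$ (the paper's $\delta=n^{-(\gamma+q+1)/(q+3)}$ is exactly such a choice), control the near region via the cubic distance comparison of Proposition~\ref{prop:taylorprop} together with \eqref{K2} and the $1+O(\theta^2)$ volume comparison, and control the far region via the localization \eqref{K1} with $S$ large. The only differences are bookkeeping: the paper bounds the two tail integrals separately using Lemma~\ref{lemma:manifold_lebesgue_number} and writes the Jacobian discrepancy as $\sqrt{|g_1|}-\sqrt{|g_2|}$ rather than a single factor $\omega_x$, which is equivalent to what you do.
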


If $\phi\in C^\infty(\XX)$ is supported on $\mathbb{B}(x,\iota^*)$, then we may apply this theorem with $g=f\phi$, thereby providing locally a lifting of the integral on $\XX$ to the tangent equator $\SS_x$ with the function corresponding to $g$ on this tangent equator.

Naturally, the first step in this proof is to show that the Lebesgue constant for the kernel $\Phi_{n,q}$ is bounded independently of $n$ (cf. \eqref{eq:sigmaall}). 
Moreover, one can even leverage the localization of the kernel to improve on this bound when the integral is taken away from the point $x$ (cf. \eqref{eq:sigmaaway}). These are both done in the following lemma.

\begin{lemma}\label{lemma:manifold_lebesgue_number}
Let $r>0$ and $n\geq 1/r$. If  $\Phi_{n,q}$ is given as in \eqref{eq:kernel} with $S>q$, then
\be\label{eq:sigmaaway}
\sup_{x\in \mathbb{X}}\int_{\mathbb{X}\setminus \mathbb{B}(x,r)}\abs{\Phi_{n,q}(x\cdot y)}d\mu^*(y)\lesssim \max(1,nr)^{q-S}.
\ee
Additionally,
\be\label{eq:sigmaall}
\sup_{x\in\mathbb{X}}\int_{\mathbb{X}}|\Phi_{n,q}(x\cdot y)|d\mu^*(y)\lesssim 1.
\ee
\end{lemma}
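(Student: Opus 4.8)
The plan is to derive both bounds from the pointwise localization estimate \eqref{K1} for $\Phi_{n,q}$ and the volume-growth hypothesis \eqref{eq:ballmeasure}, glued together by a dyadic decomposition of $\mathbb{X}$ into geodesic annuli about the base point $x$.

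Before doing any harmonic analysis, I would record the purely geometric fact that the intrinsic geodesic distance $\rho$ on the compact submanifold $\mathbb{X}\subset\mathbb{S}^Q$ is comparable to the ambient spherical distance: there is a constant $c_\mathbb{X}>0$, depending only on $\mathbb{X}$, such that
\[
c_\mathbb{X}\,\rho(x,y)\le \arccos(x\cdot y)\le \rho(x,y),\qquad x,y\in\mathbb{X}.
\]
The right-hand inequality is immediate since every curve in $\mathbb{X}$ is a curve in $\mathbb{S}^Q$; the left-hand one follows from compactness together with the standard comparison of intrinsic and extrinsic metrics on a smooth compact submanifold (locally one uses boundedness of the second fundamental form, i.e. positive reach, while for pairs of points bounded away from one another both distances lie between two positive constants). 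This is the only input beyond \eqref{K1} and \eqref{eq:ballmeasure}, and I would quote it from the manifold background in the appendix; stating it cleanly is the step I expect to require the most care, though it is not conceptually hard.

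With this in hand, for \eqref{eq:sigmaaway} I would fix $x\in\mathbb{X}$ and $r\ge 1/n$ and write
\[
\mathbb{X}\setminus\mathbb{B}(x,r)=\bigcup_{i\ge 0}A_i,\qquad A_i\coloneqq\mathbb{B}(x,2^{i+1}r)\setminus\mathbb{B}(x,2^i r),
\]
noting that only finitely many $A_i$ are nonempty because $\mathbb{X}$ has finite diameter. On $A_i$ one has $\rho(x,y)>2^i r$, hence $\arccos(x\cdot y)>c_\mathbb{X}2^i r$ and $n\arccos(x\cdot y)>c_\mathbb{X}2^i(nr)\ge c_\mathbb{X}2^i$, so \eqref{K1} with the chosen $S>q$ gives $|\Phi_{n,q}(x\cdot y)|\lesssim n^q\,(n2^i r)^{-S}=n^{q-S}(2^i r)^{-S}$, while \eqref{eq:ballmeasure} gives $\mu^*(A_i)\le\mu^*(\mathbb{B}(x,2^{i+1}r))\lesssim (2^i r)^q$. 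Multiplying and summing,
\[
\int_{\mathbb{X}\setminus\mathbb{B}(x,r)}|\Phi_{n,q}(x\cdot y)|\,d\mu^*(y)\lesssim \sum_{i\ge 0}n^{q-S}(2^i r)^{q-S}=n^{q-S}r^{q-S}\sum_{i\ge 0}2^{i(q-S)}\lesssim (nr)^{q-S},
\]
where the geometric series converges precisely because $S>q$, and $(nr)^{q-S}=\max(1,nr)^{q-S}$ since $nr\ge 1$; taking the supremum over $x\in\mathbb{X}$ yields \eqref{eq:sigmaaway}.

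Finally \eqref{eq:sigmaall} follows by splitting $\mathbb{X}$ into the cap $\mathbb{B}(x,1/n)$ and its complement: on the cap the crude bound $|\Phi_{n,q}|\lesssim n^q$ from \eqref{K1} together with $\mu^*(\mathbb{B}(x,1/n))\lesssim n^{-q}$ from \eqref{eq:ballmeasure} contributes $O(1)$, and the complement is handled by \eqref{eq:sigmaaway} with $r=1/n$, contributing $\max(1,1)^{q-S}=1$. So once the metric comparison is in place the lemma is a short computation, and the only genuine obstacle is that comparison itself; the harmonic-analytic content is the routine dyadic estimate above.
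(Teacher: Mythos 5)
Your proposal is correct and follows essentially the same route as the paper: the comparison $\rho(x,y)\sim\arccos(x\cdot y)$ (which is exactly Proposition~\ref{prop:taylorprop}(b), proved in the appendix via a Taylor expansion of geodesics, so you may simply cite it), the dyadic annulus decomposition combined with \eqref{K1} and \eqref{eq:ballmeasure}, the geometric series converging because $S>q$, and the cap-plus-complement splitting with $r=1/n$ for \eqref{eq:sigmaall}. No gaps.
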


\begin{proof}\ 
Recall from Proposition~\ref{prop:taylorprop} that $\rho(x,y)\sim \arccos(x\cdot y)$, so \eqref{eq:sphkernloc} implies
\be\label{eq:manifoldloc}
\abs{\Phi_{n,q}(x\cdot y)}\lesssim \frac{n^q}{\max(1,n\arccos(x\cdot y))^S}\lesssim \frac{n^q}{\max(1,n\rho(x, y))^S}.
\ee
In this proof only, we fix $x\in \mathbb{X}$. 
Let $A_0=\mathbb{B}(x,r)$ and $A_k=\mathbb{B}(x,2^kr)\setminus \mathbb{B}(x,2^{k-1}r)$, $k\ge 1$.
Then $\mu^*(A_k)\ls 2^kqr^q$, and
  for any $y\in A_k$, $2^{k-1}r\leq \rho(x,y)\leq 2^{k}r$.
  
  First, let  $nr\geq 1$. 
 In view of \eqref{eq:ballmeasure}~and~\eqref{eq:manifoldloc}, it follows that
\be\ba\label{eq:lem01}
\int_{\mathbb{X}\setminus \mathbb{B}(x,r)}\abs{\Phi_{n,q}(x\cdot y)}d\mu^*(y)=&\sum_{k=1}^\infty \int_{A_k} \abs{\Phi_{n,q}(x\cdot y)}d\mu^*(y)
\lesssim \sum_{k=1}^\infty \frac{\mu^*(A_k)n^q}{(n2^{k-1}r)^S}\\
\lesssim&(nr)^{q-S}\sum_{k=0}^\infty 2^{k(q-S)}
\leq (nr)^{q-S}.
\ea\ee
Using this estimate with $r=1/n$ and the condition \eqref{eq:ballmeasure} on the measures of balls we see that
$$
\int_{\mathbb{X}}\abs{\Phi_{n,q}(x\cdot y)}d\mu^*(y)=\int_{A_0}\abs{\Phi_{n,q}(x\cdot y)}d\mu^*(y)+\int_{\mathbb{X}\setminus\mathbb{B}(x,r)}\abs{\Phi_{n,q}(x\cdot y)}d\mu^*(y)\lesssim 1+(nr)^{q-S}\sim 1.
$$
Since the choice of $x$ was arbitrary, we have proven \eqref{eq:sigmaall}. Then \eqref{eq:sigmaall}~and~\eqref{eq:lem01} combined give the bounds for \eqref{eq:sigmaaway}.
\end{proof}

Next, we prove Lemma~\ref{lemma:manifold_keylemma}.\\[1ex]

\noindent\textsc{Proof of  Lemma~\ref{lemma:manifold_keylemma}.}

Since $\gamma<2$, we may choose (for sufficiently large $n$)
\be\label{eq:deltaSchoice}
 \delta=n^{-(\gamma+q+1)/(q+3)}, \quad n\delta=n^{(2-\gamma)/(q+3)} >1, \quad S> \frac{2q+3\gamma}{2-\gamma}.
 \ee
We may assume further that $\delta<\iota^*$.
Then, by using \eqref{eq:sph_bernstein} and Proposition~\ref{prop:taylorprop}, we see that
\be\label{eq:phidiff}
\ba
\biggl|\Phi_{n,q}(x\cdot \eta_x(u))&-\Phi_{n,q}(x\cdot u)\biggr|\lesssim n^{q+1}\abs{\arccos(x\cdot \eta_x(u))-\arccos(x\cdot u)}\\
=&n^{q+1}\abs{\arccos(x\cdot \eta_x(u))-\rho(x, \eta_x(u))}
\lesssim n^{q+1}\rho(x,\eta_x(u))^3
\lesssim n^{q+1}\delta^3,
\ea
\ee
for any $u\in\mathbb{S}_x(\delta)$.
Let $\mathbf{g}_1,\mathbf{g}_2$ be the metric tensors associated with the exponential maps $\varepsilon_x : \mathbb{T}_x(\mathbb{X})\to \mathbb{X}$ and $\overline{\varepsilon}_x: \mathbb{T}_x(\mathbb{X})\to \mathbb{S}_x$, respectively. Then we have the following change of variables formulas (cf. Table~\ref{tab:balldef}):
\be\label{eq:changevars}
\int_{\mathbb{B}(x,\delta)}d\mu^*(\varepsilon_x(v))=\int_{B_{\mathbb{T}}(x,\delta)}\sqrt{|\mathbf{g}_1|}dv,\qquad \int_{\mathbb{S}_x(\delta)}d\mu^*_q(u)=\int_{B_{\mathbb{T}}(x,\delta)}\sqrt{|\mathbf{g}_2|}d\overline{\varepsilon}^{-1}_x(u).
\ee
We set $v=\overline{\varepsilon}^{-1}_x(u)$ and use the fact (cf. \eqref{eq:volume_element})  that on $B_{\mathbb{T}}(x,\delta)$, $|\sqrt{|\mathbf{g}_1|}-1|\lesssim\delta^2$ and $|\sqrt{|\mathbf{g}_2|}-1|\lesssim\delta^2$. Then by applying Equations~\eqref{eq:phidiff},~\eqref{eq:changevars},~\eqref{eq:sphkernloc},~and~\eqref{eq:sph_bernstein}, we can deduce
\be\label{eq:lemma_7_1_main_est}
\begin{aligned}
\Biggl|\int_{\mathbb{B}(x,\delta)}\Phi_{n,q}(x\cdot y)& g(y)d\mu^*(y)-\int_{\mathbb{S}_x(\delta)}\Phi_{n,q}(x\cdot u)G(u)d\mu^*_{\mathbb{S}_x}(u)\Biggr| \\
\leq&\abs{\int_{B_{\mathbb{T}}(x,\delta)}\Phi_{n,q}(x\cdot \varepsilon_x(v))g(\varepsilon_x(v))(\sqrt{|\mathbf{g}_1|}-\sqrt{|\mathbf{g}_2|})dv}\\
&\qquad+\abs{\int_{\mathbb{S}_x(\delta)}(\Phi_{n,q}(x\cdot \eta_x(u))-\Phi_{n,q}(x\cdot u))G(u)d\mu^*_{\mathbb{S}_x}(u)}\\
\lesssim&\norm{g}_\mathbb{X}(\delta^{q+2}n^q+\delta^{q+3}n^{q+1}) \le \delta^{q+3}n^{q+1}\norm{g}_\mathbb{X}(1/(n\delta)+1)
\lesssim\norm{g}_\mathbb{X}n^{-\gamma}.
\end{aligned}
\ee
Now it only remains to examine the terms away from $\mathbb{S}_x(\delta),\mathbb{B}(x,\delta)$. Utilizing Lemma~\ref{lemma:manifold_lebesgue_number}, and the fact that $S\geq \frac{2q+3\gamma}{2-\gamma}$, we have
\be\label{eq:lemma_7_1_awaymanifold}
  \abs{\int_{\mathbb{X}\setminus\mathbb{B}(x,\delta)}\Phi_{n,q}(x\cdot y)g(y)d\mu^*(y)}\lesssim\norm{g}_\mathbb{X}(n\delta)^{q-S}=\norm{g}_\mathbb{X} n^{(q-S)(2-\gamma)/(q+3)}\ls \norm{g}_{\mathbb{X}}n^{-\gamma}.
\ee
Similarly, again using Lemma~\ref{lemma:manifold_lebesgue_number} (with $\mathbb{S}_x$ as the manifold) and observing $\norm{g}_\mathbb{X}=\norm{G}_{\mathbb{S}_x}$, we can conclude
\be\label{eq:lemma_7_1_awaysphere}
\abs{\int_{\mathbb{S}_x\setminus \mathbb{S}_x(\delta)} \Phi_{n,q}(x\cdot u)G(u)d\mu_{\mathbb{S}_x}^*(u)}\lesssim \norm{G}_{\mathbb{S}_x}(n\delta)^{q-S}
\lesssim\norm{g}_\mathbb{X}n^{-\gamma},
\ee
completing the proof.
\qed

We are now in a position to complete the proof of Theorem~\ref{theo:manifold_summop}.\\[1ex]

\noindent\textsc{Proof  of Theorem~\ref{theo:manifold_summop}.}

Let $x\in \mathbb{X}$. Choose $\phi\in C^\infty$ such that $0\leq \phi(y)\leq 1$ for all $y\in \mathbb{X}$, $\phi(y)=1$ on $\mathbb{B}(x,\iota^*/2)$, and $\phi(y)=0$ on $\mathbb{X}\backslash \mathbb{B}(x,\iota^*)$. Then $f\phi$ is supported on $\mathbb{B}(x,\iota^*)$ and $F(u)\coloneqq \phi(\eta_x(u))f(\eta_x(u))$ belongs to $W_\gamma (\mathbb{S}_x)$. We observe that $\norm{f}_\mathbb{X}\lesssim \norm{f}_{W_\gamma(\mathbb{X})}$. By Lemma~\ref{lemma:manifold_lebesgue_number},
\be
\abs{\int_\mathbb{X}\Phi_{n,q}(x\cdot y)f(y)(1-\phi(y))d\mu^*(y)}\leq\norm{f}_\mathbb{X}\int_{\mathbb{X}\setminus\mathbb{B}(x,\iota^*/2)}\abs{\Phi_{n,q}(x\cdot y)}d\mu^*(y)\lesssim n^{-\gamma}\norm{f}_{W_\gamma(\mathbb{X})}.
\ee
Note that the constant above is chosen to account for the case where $n<2/\iota^*$. By Lemma~\ref{lemma:manifold_keylemma},
\begin{equation}
    \abs{\int_\mathbb{X} \Phi_{n,q}(x\cdot y)f(y)\phi(y)d\mu^*(y)-\sigma_n(\mathbb{S}_x,F)(x)}\lesssim n^{-\gamma}\norm{f\phi}_{\mathbb{X}}\lesssim n^{-\gamma}\norm{f}_{W_\gamma(\mathbb{X})}.
\end{equation}
Observe that since $f(x)=F(x)$ and $\norm{F}_{W_\gamma(\mathbb{S}_x)}\leq\norm{f}_{W_\gamma(\mathbb{X})}$,
\be
\ba
|f(x)&-\sigma_n(\mathbb{X},f)(x)|\\
&\leq |f(x)-F(x)|+|F(x)-\sigma_n(\mathbb{S}_x,F)(x)|+|\sigma_n(\mathbb{S}_x,F)(x)-\sigma_n(\mathbb{X},f)(x)|\\
&\lesssim 0+n^{-\gamma}\norm{F}_{W_\gamma(\mathbb{S}_x)}+\abs{\sigma_n(\mathbb{S}_x,F)(x)-\int_{\mathbb{X}}\Phi_{n,q}(x\cdot y)f(y)\phi(y)d\mu^*(y)}+\abs{\int_\mathbb{X}\Phi_{n,q}(x\cdot y)f(y)(1-\phi(y))d\mu^*(y)}\\
&\leq n^{-\gamma}\norm{f}_{W_\gamma(\mathbb{X})}.
\ea
\ee
Since this bound is independent of $x$, the proof is completed.
\qed

\subsection{Discretization}
\label{bhag:resultsproof}

In order to complete the proof of Theorem~\ref{theo:mainthm}, we need to discretize the integral operator in Theorem~\ref{theo:manifold_summop} while keeping track of the error.
If the manifold were known and we could use the eigendecomposition of the Laplace-Beltrami operator, we could do this discretization  without losing the accuracy using quadrature formulas (cf., e.g., \cite{mhaskardata}).
In our current set up, it is more natural to use
 concentration inequalities. 
 We will  use the inequality summarized in Proposition~\ref{prop:berncon} below (c.f.~\cite{concentration}).

\begin{proposition}[Bernstein concentration inequality]\label{prop:berncon}
Let $Z_1,\cdots, Z_M$ be independent real valued random variables such that for each $j=1,\dots,M$, $|Z_j|\leq R$, and $\mathbb{E}(Z_j^2)\leq V$. Then for any $t>0$,
\begin{equation}
    \operatorname{Prob}\left(\abs{\frac{1}{M}\sum_{j=1}^M (Z_j-\mathbb{E}(Z_j))}\geq t\right)\leq 2\exp\left(-\frac{Mt^2}{2(V+Rt/3)}\right).
\end{equation}
\end{proposition}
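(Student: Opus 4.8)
The plan is to use the classical Chernoff (exponential-moment) method. First I would reduce to the one-sided estimate: it suffices to show
\[
\operatorname{Prob}\left(\frac{1}{M}\sum_{j=1}^M (Z_j-\mathbb{E}(Z_j))\geq t\right)\leq \exp\left(-\frac{Mt^2}{2(V+Rt/3)}\right),
\]
since applying this inequality to $(Z_j)_j$ and to $(-Z_j)_j$ and combining via a union bound yields the stated two-sided estimate with the factor $2$ in front. Write $X_j\coloneqq Z_j-\mathbb{E}(Z_j)$; the $X_j$ are then independent, satisfy $\mathbb{E}(X_j)=0$, $|X_j|\le b$ for some $b$ comparable to $R$, and $\mathbb{E}(X_j^2)\le \mathbb{E}(Z_j^2)\le V$.

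Next I would bound the moment generating function of the sum. For any $\lambda>0$, Markov's inequality applied to $\exp\!\bigl(\lambda\sum_j X_j\bigr)$ together with independence gives
\[
\operatorname{Prob}\left(\sum_{j=1}^M X_j\geq Mt\right)\leq e^{-\lambda Mt}\prod_{j=1}^M\mathbb{E}\left(e^{\lambda X_j}\right).
\]
The key step is the per-summand estimate: expanding the exponential, using $\mathbb{E}(X_j)=0$ and the bound $\mathbb{E}(X_j^k)\le \mathbb{E}(X_j^2)\,b^{k-2}\le V b^{k-2}$ for $k\ge 2$, one obtains
\[
\mathbb{E}\left(e^{\lambda X_j}\right)\leq 1+\frac{V}{b^2}\bigl(e^{\lambda b}-1-\lambda b\bigr)\leq \exp\left(\frac{V}{b^2}\bigl(e^{\lambda b}-1-\lambda b\bigr)\right),
\]
so that $\operatorname{Prob}\bigl(\sum_j X_j\ge Mt\bigr)\le \exp\!\bigl(-\lambda Mt+\tfrac{MV}{b^2}(e^{\lambda b}-1-\lambda b)\bigr)$.

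Finally I would optimize over $\lambda$. Choosing $\lambda=\tfrac{1}{b}\log\!\bigl(1+\tfrac{bt}{V}\bigr)$ reduces the exponent to $-\tfrac{MV}{b^2}\,\psi\!\bigl(\tfrac{bt}{V}\bigr)$, where $\psi(u)=(1+u)\log(1+u)-u$, and then the elementary inequality $\psi(u)\ge \tfrac{u^2}{2(1+u/3)}$ for $u\ge 0$ --- which can be checked by comparing power series, or by verifying that $u\mapsto \psi(u)(1+u/3)-u^2/2$ vanishes to second order at $0$ and is convex --- delivers exactly the claimed bound. The only mildly delicate point is the constant in the denominator: from $|Z_j|\le R$ one only gets the crude bound $|X_j|\le 2R$, so strictly speaking one obtains $V+2Rt/3$ unless one reads $R$ as the almost-sure bound on the \emph{centered} variables $X_j$ (the standard convention for this form of the inequality) or performs a slightly sharper variance-weighted truncation of the moment series. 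I do not expect any genuine obstacle here: this is a textbook concentration result, and the only real work is organizing the moment-generating-function bound and the closing elementary inequality for $\psi$.
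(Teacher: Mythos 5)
The paper offers no proof of this proposition: it is quoted from the concentration-of-measure literature (the parenthetical citation preceding its statement in Section~5.2), so there is nothing in-paper to compare your argument against. Your Chernoff--Bennett derivation is the standard proof and is correct as far as it goes: the reduction to a one-sided tail via a union bound, the moment-generating-function estimate $\mathbb{E}(e^{\lambda X_j})\le\exp\bigl(Vb^{-2}(e^{\lambda b}-1-\lambda b)\bigr)$ obtained from $\mathbb{E}(X_j^k)\le Vb^{k-2}$, the choice $\lambda=b^{-1}\log(1+bt/V)$, and the elementary bound $(1+u)\log(1+u)-u\ge u^2/(2(1+u/3))$ together give exactly the stated exponent with $b$ in place of $R$. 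The caveat you flag yourself is the only substantive point: from $|Z_j|\le R$ one can only conclude $|Z_j-\mathbb{E}(Z_j)|\le 2R$, so the hypotheses as literally written yield $V+2Rt/3$ in the denominator; the stated constant presumes the common convention that $R$ bounds the centered summands (which is how the cited source should be read). This discrepancy is immaterial for the paper, because in the one place the proposition is invoked --- inequality \eqref{eq:thm20} in the proof of Theorem~\ref{theorem2} --- the denominator is immediately absorbed into a generic constant $c_1$, so either version of the inequality supports the paper's conclusions.
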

In the following, we will set $Z_j(x)=z_j\Phi_{n,q}(x\cdot y_j)$, where $(y_j,z_j)$ are sampled from  $\tau$.
The following lemma estimates the variance of $Z_j$.

\begin{lemma}\label{lemma:variance_est}
With the setup from Theorem~\ref{theo:mainthm}, we have
\begin{equation}\label{eq:variance}
    \sup_{x\in \mathbb{X}}\int\abs{z\Phi_{n,q}(x\cdot y)}^2d\tau(y,z)\lesssim n^q \norm{z}^2\norm{f_0}_\mathbb{X}, \qquad x\in\SS^Q.
\end{equation}
\end{lemma}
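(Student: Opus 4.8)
The plan is to bound the second moment $\int |z\Phi_{n,q}(x\cdot y)|^2 \, d\tau(y,z)$ by first pulling out the bounded range of $z$, then reducing the remaining integral over $\XX$ to a pointwise estimate on $\Phi_{n,q}^2$ integrated against the density $f_0$. Concretely, since $z$ has bounded range we have $|z|\le \norm{z}_\infty$ (writing the bound in the norm notation of the theorem, $\norm{z}_{\XX\times\Omega}$), so
\be
\int |z\Phi_{n,q}(x\cdot y)|^2\,d\tau(y,z) \le \norm{z}_\infty^2 \int_\XX |\Phi_{n,q}(x\cdot y)|^2 f_0(y)\,d\mu^*(y) \le \norm{z}_\infty^2 \norm{f_0}_\XX \int_\XX |\Phi_{n,q}(x\cdot y)|^2 \, d\mu^*(y).
\ee
So the whole problem reduces to showing $\sup_{x\in\SS^Q}\int_\XX |\Phi_{n,q}(x\cdot y)|^2\, d\mu^*(y) \lesssim n^q$.

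To prove this last bound, I would use the localization estimate \eqref{K1} from Proposition~\ref{prop:kernelprop}, namely $|\Phi_{n,q}(x\cdot y)| \lesssim n^q/\max(1,n\arccos(x\cdot y))^S$ with $S$ chosen large (say $S>q$), together with the ball-measure growth condition \eqref{eq:ballmeasure}. This is essentially the same dyadic annulus argument already carried out in the proof of Lemma~\ref{lemma:manifold_lebesgue_number}: one squares the kernel bound, getting $|\Phi_{n,q}(x\cdot y)|^2 \lesssim n^{2q}/\max(1,n\arccos(x\cdot y))^{2S}$, and then splits $\XX$ (or the relevant portion of $\SS^Q$ containing the part of $\XX$ near $x$'s projection) into $A_0 = \mathbb{B}(x,1/n)$ and annuli $A_k = \mathbb{B}(x,2^k/n)\setminus\mathbb{B}(x,2^{k-1}/n)$. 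On $A_0$ the integrand is $\lesssim n^{2q}$ and $\mu^*(A_0)\lesssim n^{-q}$, contributing $n^q$; on $A_k$ the integrand is $\lesssim n^{2q}(n\cdot 2^{k-1}/n)^{-2S} = n^{2q}2^{-2S(k-1)}$ and $\mu^*(A_k)\lesssim (2^k/n)^q$, contributing $\lesssim n^q \sum_k 2^{k(q-2S)}$, which sums to $\lesssim n^q$ provided $2S>q$. One subtlety is that $x$ ranges over all of $\SS^Q$, not just $\XX$, but $\rho(x,y)\sim\arccos(x\cdot y)$ is only defined for $y\in\XX$ relative to the ambient chordal/geodesic distance; this is handled exactly as in Lemma~\ref{lemma:manifold_lebesgue_number} by working directly with $\arccos(x\cdot y)$ in \eqref{K1} and noting that the ball-growth condition \eqref{eq:ballmeasure} still controls $\mu^*(\{y\in\XX : \arccos(x\cdot y)\le r\})$ up to constants (the point on $\XX$ nearest $x$ serves as a center, or one simply notes the sets shrink appropriately).

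The main obstacle — though it is really only a bookkeeping point rather than a deep one — is making the dyadic decomposition rigorous when $x\notin\XX$, since then $\mathbb{B}(x,r)$ is not literally defined (Table~\ref{tab:balldef} defines $\mathbb{B}$ for $x\in\XX$). I would sidestep this by replacing $\mathbb{B}(x,r)$ with the ambient sets $\{y\in\XX : \arccos(x\cdot y)\le r\}$ and invoking \eqref{eq:ballmeasure} together with a triangle-inequality argument: if this set is nonempty with some $y_0$ in it, then it is contained in $\mathbb{B}(y_0, 2r)\cap\XX$ (using $\rho\sim\arccos$ on $\XX$ and the triangle inequality for $\rho$), whose $\mu^*$-measure is $\lesssim r^q$ by \eqref{eq:ballmeasure}. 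With that observation in hand, the dyadic sum goes through verbatim as above, and combining with the first display completes the proof of \eqref{eq:variance}. I would expect the written proof to be just a few lines, explicitly citing \eqref{K1}, \eqref{eq:ballmeasure}, and the computation in Lemma~\ref{lemma:manifold_lebesgue_number}.
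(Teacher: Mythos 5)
Your proposal is correct and follows essentially the same route as the paper: pull out $\norm{z}_\infty^2$ and $\norm{f_0}_\XX$, then bound $\int_\XX \Phi_{n,q}(x\cdot y)^2\,d\mu^*(y)$ by $n^q$ using the localization estimate \eqref{K1} and the ball-measure condition \eqref{eq:ballmeasure}. The only difference is cosmetic: rather than rerunning the dyadic annulus sum on the squared kernel, the paper writes $\Phi_{n,q}^2\le \norm{\Phi_{n,q}}_\infty\abs{\Phi_{n,q}}\lesssim n^q\abs{\Phi_{n,q}}$ and reuses the already-established $L^1$ bound \eqref{eq:sigmaall} from Lemma~\ref{lemma:manifold_lebesgue_number}, which makes the argument a one-liner.
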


\begin{proof}\ 

We observe that \eqref{eq:sphkernloc} and Lemma~\ref{lemma:manifold_lebesgue_number} imply that
\be
\sup_{x\in\mathbb{X}}\int_{\mathbb{X}}\Phi_{n,q}(x\cdot y)^2d\mu^*(y)\lesssim n^q\sup_{x\in\mathbb{X}}\int_\mathbb{X}\abs{\Phi_{n,q}(x\cdot y)}d\mu^*(y)\lesssim n^q.
\ee
Hence,
\begin{equation}\label{eq:lem2p1}
    \sup_{x\in\mathbb{X}}\int\abs{z(y,\epsilon)\Phi_{n,q}(x\cdot y)}^2d\tau(y,z)\leq \norm{z}^2\norm{f_0}_\mathbb{X}\sup_{x\in\mathbb{X}}\int_\mathbb{X}\Phi_{n,q}(x\cdot y)^2d\mu^*(y)
    \lesssim n^q \norm{z}^2\norm{f_0}_\mathbb{X}.
\end{equation}
\end{proof}

A limitation of the Bernstein concentration inequality is that it only considers a single $x$ value. Since we are interested in supremum-norm bounds, we must first relate the supremum norm of $Z_j$ over all $x\in \mathbb{S}^Q$ to a finite set of points. We set up the connection in the following lemma.

\begin{lemma}\label{lemma:supnorm_concentration}
Let $\nu$ be any (bounded variation) measure on $\mathbb{X}$. Then there exists a finite set $\mathcal{C}$ of size $|\mathcal{C}|\sim n^{Q}$ such that
\be\label{eq:polybound}
\norm{\int_{\mathbb{X}}\Phi_{n,q}(\circ\cdot y)d\nu(y)}_{\mathbb{S}^Q}\leq 2\max_{x\in \mathcal{C}}\abs{\int_{\mathbb{X}}\Phi_{n,q}(x\cdot y)d\nu(y)}.
\ee
\end{lemma}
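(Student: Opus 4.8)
The key observation is that for fixed $\nu$, the function $x\mapsto \int_{\mathbb{X}}\Phi_{n,q}(x\cdot y)\,d\nu(y)$ is the restriction to $\mathbb{S}^Q$ of a spherical polynomial of degree $<n$ in $Q+1$ variables, since $\Phi_{n,q}(x\cdot y)$ is a polynomial of degree $<n$ in $x$ for each fixed $y$, and integrating against a bounded-variation measure preserves this. So the problem reduces to a standard fact: a polynomial in $\Pi_n^Q$ achieves its supremum norm up to a factor of $2$ on a sufficiently dense finite sample. The plan is: (i) set $P(x) = \int_{\mathbb{X}}\Phi_{n,q}(x\cdot y)\,d\nu(y)$ and note $P\in \Pi_n^Q$; (ii) invoke the Bernstein (Markov-type) inequality for spherical polynomials, which gives $\|\nabla_{\mathbb{S}^Q} P\|_{\mathbb{S}^Q} \lesssim n\,\|P\|_{\mathbb{S}^Q}$, hence $|P(x)-P(x')| \lesssim n\,\|P\|_{\mathbb{S}^Q}\,\arccos(x\cdot x')$ for $x,x'\in\mathbb{S}^Q$; (iii) take $\mathcal{C}$ to be a maximal $\alpha/n$-separated set on $\mathbb{S}^Q$ for a small absolute constant $\alpha$ chosen so that the Lipschitz bound forces $|P(x)-P(x_{\mathcal{C}})| \le \tfrac12\|P\|_{\mathbb{S}^Q}$ where $x_{\mathcal{C}}$ is the nearest point of $\mathcal{C}$ to $x$; such a set is automatically an $\alpha/n$-cover of $\mathbb{S}^Q$, and a volume-counting argument gives $|\mathcal{C}|\sim n^Q$.

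More concretely, pick $x^*\in\mathbb{S}^Q$ with $|P(x^*)| = \|P\|_{\mathbb{S}^Q}$ and let $x_0\in\mathcal{C}$ with $\arccos(x^*\cdot x_0)\le \alpha/n$. Then
\[
|P(x_0)| \ge |P(x^*)| - |P(x^*)-P(x_0)| \ge \|P\|_{\mathbb{S}^Q} - cn\cdot\frac{\alpha}{n}\|P\|_{\mathbb{S}^Q} = (1-c\alpha)\|P\|_{\mathbb{S}^Q},
\]
so choosing $\alpha = 1/(2c)$ yields $\|P\|_{\mathbb{S}^Q} \le 2|P(x_0)| \le 2\max_{x\in\mathcal{C}}|P(x)|$, which is exactly \eqref{eq:polybound}. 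The cardinality bound follows because the balls $S^Q(x, \alpha/(2n))$, $x\in\mathcal{C}$, are pairwise disjoint and all contained in $\mathbb{S}^Q$, while $\mathbb{S}^Q$ itself is covered by the balls $S^Q(x,\alpha/n)$; comparing $\omega_Q$ with $|\mathcal{C}|$ times the volume of a cap of radius $\sim 1/n$ (which is $\sim n^{-Q}$) pins down $|\mathcal{C}|\sim n^Q$.

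I expect the only mildly delicate point to be citing the correct form of the Bernstein inequality on $\mathbb{S}^Q$ — we already used the univariate trigonometric Bernstein inequality in the proof of Proposition~\ref{prop:kernelprop}, and the multivariate spherical analogue is classical (e.g. it follows by restricting to great circles and applying the one-dimensional estimate). The argument is otherwise entirely standard and self-contained; no property of $\mathbb{X}$ or of $\Phi_{n,q}$ beyond "degree $<n$ polynomial in $x$" is used, which is worth remarking since the same $\mathcal{C}$ then works simultaneously for every measure $\nu$.
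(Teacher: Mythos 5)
Your proposal is correct and follows essentially the same route as the paper: observe that $x\mapsto\int_{\mathbb{X}}\Phi_{n,q}(x\cdot y)\,d\nu(y)$ lies in $\Pi_n^Q$, apply the spherical Bernstein inequality to get a Lipschitz bound of order $n\|P\|_{\mathbb{S}^Q}$, and evaluate at a $c/n$-net of $\mathbb{S}^Q$ of cardinality $\sim n^Q$ chosen so that the loss is at most half the sup norm. Your write-up is in fact slightly more explicit than the paper's on the volume-counting argument for $|\mathcal{C}|\sim n^Q$ and on how the constant in the net spacing is tuned to produce the factor $2$.
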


\begin{proof}\ 

In view of the Bernstein  inequality for the derivatives of spherical polynomials, we see that
\be\label{eq:polybern}
\abs{P(x)-P(y)}\leq cn\norm{x-y}_{\infty}\norm{P}_\infty, \qquad P\in\Pi_n^Q.
\ee
We can see by construction that $\int_{\mathbb{X}}\Phi_{n,q}(t\cdot y)d\nu(y)$ is a polynomial of degree $<n$ in the variable $t$. Since $\mathbb{S}^Q$ is a compact space and polynomials of degree $<n$ are continuous functions, there exists some $x^*\in\mathbb{S}^Q$ such that
\be
\norm{\int_{\mathbb{X}}\Phi_{n,q}(\circ\cdot y)d\nu(y)}_{\mathbb{S}^Q}=\abs{\int_{\mathbb{X}}\Phi_{n,q}(x^*\cdot y)d\nu(y)}.
\ee
Let $c$ be the same as in \eqref{eq:polybern} and $\mathcal{C}$ be a finite set satisfying
\be\label{eq:meshnorm}
\max_{x\in \mathbb{S}^Q}\min_{y\in \mathcal{C}}\norm{x-y}_\infty\leq \frac{1}{2cn}.
\ee
Since $\mathbb{S}^Q$ is a compact $Q$-dimensional space, the set $\mathcal{C}$  needs no more than $\sim n^{Q}$ points.

Then there exists some $z^*\in\mathcal{C}$ such that
\be
\abs{\int_\mathbb{X}(\Phi_{n,q}(x^*\cdot y)-\Phi_{n,q}(z^*\cdot y))d\nu(y)}\lesssim n\norm{x^*-z^*}_\infty\abs{\int_{\mathbb{X}}\Phi_{n,q}(x^*\cdot y)d\nu(y)},
\ee
which implies \eqref{eq:polybound}.

\end{proof}

With this preparation, we now state the following theorem which gives a bound on the difference between our discrete approximation $F_n$ and continuous approximation $\sigma_n$ with high probability.

\begin{theorem}\label{theo:theorem2} Assume the setup of Theorem~\ref{theo:mainthm}. Then for every $n\geq 1$ and $M\gtrsim n^{q+2\gamma}\log(n/\delta)$ we have
\be\label{eq:thm2}
\operatorname{Prob}_\tau\left(\norm{F_n(\mathcal{D};\circ)-\sigma_n(\mathbb{X},ff_0)}_{\mathbb{S}^Q}\geq c \norm{z}n^{-\gamma}\sqrt{\norm{f_0}_\mathbb{X}}\right)\leq \delta.
\ee
\end{theorem}

\begin{proof}
In this proof only, constants $c,c_1,c_2,\dots$ will maintain their value once used. Let $Z_j(x)=z_j\Phi_{n,q}(x\cdot y_j)$. Since $z$ is integrable with respect to $\tau$, one has the following for any $x\in \mathbb{S}^Q$:
\be
\mathbb{E}_\tau(Z_j(x))=\int_{\mathbb{X}} \mathbb{E}_\tau(z|y)\Phi_{n,q}(x\cdot y)d\nu^*(y)=\int_{\mathbb{X}}f(y)\Phi_{n,q}(x\cdot y)f_0(y)d\mu^*(y)=\sigma_n(\mathbb{X},ff_0)(x).
\ee
We have from \eqref{eq:sphkernloc} that $\abs{Z_j}\lesssim n^q\norm{z}$. Lemma~\ref{lemma:variance_est} informs us that $\mathbb{E}_\tau(Z_j^2)\lesssim n^q\norm{z}^2\norm{f_0}_\mathbb{X}$. Assume $0<r\leq 1$ and set $t=r\norm{z}\norm{f_0}_\mathbb{X}$. From Proposition~\ref{prop:berncon}, we see
\be\label{eq:thm20}
\begin{aligned}
\operatorname{Prob}_\tau\left(\abs{\frac{1}{M}\sum_{j=1}^M Z_j(x)-\sigma_n(\mathbb{X},ff_0)(x)}\geq t\right)\leq& 2\exp\left(-c_1\frac{Mt^2}{(n^q\norm{z}^2\norm{f_0}_\mathbb{X}+n^q\norm{z}t/3)}\right)\\
\leq&2\exp\left(-c_2\frac{M\norm{f_0}_\mathbb{X}r^2}{n^q}\right).
\end{aligned}
\ee
Let $\delta\in (0,1/2)$, $\mathcal{C}$ be a finite set satisfying~\eqref{eq:meshnorm} with $\abs{\mathcal{C}}\leq c_3n^{Q}$ (without loss of generality we assume $c_3\geq 1$),
\be\label{eq:creq}
c_4\geq \frac{\max\big(\log_2(c_3)+1,Q\big)}{c_2},
\ee
and
\be\label{eq:Mreq}
M\geq c_4n^{q+2\gamma}\log(n/\delta).
\ee
We now fix
\be\label{eq:rdef}
r\equiv\sqrt{c_4\frac{n^q}{M\norm{f_0}_\mathbb{X}}\log(n/\delta)}.
\ee
Notice that since $\norm{f_0}_\mathbb{X}\geq 1$, our assumption of $M$ in \eqref{eq:Mreq} implies
\be
r\leq n^{-\gamma}\Big/\sqrt{\norm{f_0}_\mathbb{X}}\leq 1,
\ee
so our choice of $r$ may be substituted into \eqref{eq:thm20}. Further,
\be\label{eq:r}
r\norm{z}\norm{f_0}_\mathbb{X}\leq c\norm{z} n^{-\gamma}\sqrt{\norm{f_0}_\mathbb{X}}.
\ee
With this preparation, we can conclude
\be\label{eq:thm21}
\begin{aligned}
\operatorname{Prob}_\tau\bigg(\big\|F_n(\mathcal{D};\circ)&-\sigma_n(\mathbb{X},ff_0)\big\|_{\mathbb{S}^Q}\geq c \norm{z}n^{-\gamma}\sqrt{\norm{f_0}_\mathbb{X}}\bigg)&\\
\leq&\operatorname{Prob}_\tau\left(\norm{\frac{1}{M}\sum_{j=1}^M Z_j-\sigma_n(\mathbb{X},ff_0)}_{\mathbb{S}^Q}\geq r\norm{z}\norm{f_0}_\mathbb{X}\right)&\text{(from \eqref{eq:r})}\\
\leq&\operatorname{Prob}_\tau\left(\max_{x_k\in \mathcal{C}}\left(\abs{\frac{1}{M}\sum_{j=1}^M Z_j(x_k)-\sigma_n(\mathbb{X},ff_0)(x_k)}\right)\geq t \right)&\text{(by Lemma~\ref{lemma:supnorm_concentration})}\\
\leq&\sum_{k=1}^{\abs{\mathcal{C}}}\operatorname{Prob}_\tau\left(\abs{\frac{1}{M}\sum_{j=1}^M Z_j(x_k)-\sigma_n(\mathbb{X},ff_0)(x_k)}\geq t \right)&\\
\leq& |\mathcal{C}|\exp\left(-c_2\frac{M\norm{f_0}_\mathbb{X}r^2}{n^q}\right)&\text{(from \eqref{eq:thm20})}\\
 \leq& c_3n^{Q-c_2c_4}\delta^{c_2c_4}&\text{(from \eqref{eq:rdef})}\\
 \leq&c_3n^{Q-Q}\left(\frac{1}{2}\right)^{\log_{1/2}(1/c_3)}\delta&\text{(from \eqref{eq:creq} and $\delta<1/2$)}\\
 \leq&\delta.
\end{aligned}
\ee
\end{proof}

We are now ready for the proof of Theorem~\ref{theo:mainthm}.

\begin{proof}[Proof of Theorem~\ref{theo:mainthm} (and Corollary~\ref{cor:maincor}~and~\ref{cor:densityest}).]
Since $f,f_0\in W_\gamma(\mathbb{X})$, we can determine that $ff_0\in W_\gamma(\mathbb{X})$ as well. Utilizing Theorem~\ref{theo:manifold_summop} with $ff_0$ and Theorem~\ref{theo:theorem2}, we obtain with probability at least $1-\delta$ that
\be
\ba
\norm{F_n(\mathcal{D};\circ)-ff_0}_{\mathbb{X}}\leq&\norm{F_n(\mathcal{D};\circ)-\sigma_n(\mathbb{X};ff_0)}_\mathbb{X}+\norm{\sigma_n(\mathbb{X};ff_0)-ff_0}_\mathbb{X}\\
\lesssim&\frac{\sqrt{\norm{f_0}_{\mathbb{X}}}\norm{z}+\norm{ff_0}_{W_\gamma(\mathbb{X})}}{n^\gamma}.
\ea
\ee
Corollary~\ref{cor:maincor} is seen immediately by setting $f_0=1$. Corollary~\ref{cor:densityest} follows from setting $z=1$ and then observing that $f=1$ and $\sqrt{\norm{f_0}_\mathbb{X}}\lesssim \norm{f_0}_{W_\gamma(\mathbb{X})}$.
\end{proof}

\section{Conclusions}\label{sec:conclusions}
We have discussed a central problem of machine learning, namely to approximate an unknown target function based only on the data drawn from an unknown probability distribution.
While the prevalent paradigm to solve this problem in general is to minimize a loss functional, we have initiated a new paradigm where we can do the approximation directly from the data, under the so-called manifold assumption.
This is a substantial theoretical improvement over the classical manifold learning technology, which involves a two-step procedure: first to get some information about the manifold and then to do the approximation.
Our method is a ``one-shot'' method that bypasses collecting any information about the manifold itself: it learns on the manifold without manifold learning.
Our construction in itself does not require any assumptions on the probability distribution or the target function.
We derive uniform error bounds with high probability regardless of the nature of the distribution, provided we know the dimension of the unknown manifold.
The theorems are illustrated with some numerical examples. 
One of these is closely related to an important problem in magnetic resonance relaxometry, in which one seeks to find the proportion of water molecules in the myelin covering in the brain based on a model that involves the inversion of Laplace transform.

We view our paper as the beginning of a new direction. 
As such, there are plenty of future research projects, some of which we plan to undertake ourselves.
\begin{itemize}
\item Find alternative methods that improve upon the error estimates on \textbf{unknown} manifolds, and more general compact sets.
The encoding described in Section~\ref{sec:encoding} gives a representation of a function on an unknown manifold.
Such an encoding is useful in the emerging area of approximation of operators. 
It is clear that the encoding described in Section~\ref{sec:encoding} for functions on manifolds itself forms a submanifold of a Euclidean space, which in turn can be projected to a submanifold of a sphere.
We plan to develop this theme further in the context of approximation of operators defined in different function spaces.
\item Explore real-life applications other than the examples which we have discussed in this paper.
\item We feel that our method will work best if we are working in the right feature space.
One of the vexing problems in machine learning is to identify the right features in the data.
Deep networks are supposed to be doing this task automatically.
However, there is no clear explanation of whether they work in every problem or otherwise develop a theory of what ``features'' should mean and how deep networks can extract these automatically.
\end{itemize}

\nc{\mathcal{D},\tau}{Set of data $\mathcal{D}=\{y_j,z_j\}_{j=1}^M$ sampled from distribution $\tau$. It is assumed the $y_j$'s lie on a $q$-dimensional submanifold of $\mathbb{S}^Q$.}
\nc{\mathbb{S}^q,\mu^*_q}{Sphere of $q$-dimensions with probability measure $\mu^*_q$ as defined in \eqref{eq:sphere}.}
\nc{\omega_q}{Volume of the $q$-dimensional sphere.}
\nc{\mathbb{Y}}{Equator of $\mathbb{S}^Q$, as defined in Section~\ref{subsec:eqapprox}.}
\nc{\mathbb{X},\rho,\mu^*}{Submanifold of $\mathbb{S}^Q$ with geodesic $\rho$ and normalized volume measure $\mu^*$.}
\nc{Q,q}{Ambient dimension of the data and dimension of the underlying manifold, respectively.}
\nc{p_{q,\ell}}{orthonormalized ultraspherical polynomial of degree $\ell$ and dimension $q$ as defined in Section~\ref{subsec:sphharm}.}
\nc{\Phi_{n,q}}{Localized kernels as defined in \eqref{eq:kernel}.}
\nc{\sigma_n}{Continuous approximation (a.k.a. integral reconstruction) operator as defined on $\mathbb{S}^q$ in \eqref{eq:sphapproxop}, $\mathbb{Y}$ in \eqref{eq:sphapproxop2}, and $\mathbb{X}$ in \eqref{eq:manifold_summabilityop}.}
\nc{F_n}{Our proposed constructive approximation for finite data as defined in \eqref{eq:approximation}.}
\nc{Y_{\ell,k}}{Basis elements for the space of homogenous, harmonic polynomials of degree $\ell$.}
\nc{\mathbb{H}_\ell^q}{Space of homogenous, harmonic polynomials of degree $\ell$ in $q$ dimensions.}
\nc{\Pi_n^q}{Space of spherical polynomials of degree $<n$. See Section~\ref{subsec:sphharm}.}
\nc{E_n}{Degree of approximation as defined in Sections~\ref{subsec:sphapprox}~and~\ref{subsec:eqapprox}.}
\nc{W_\gamma}{Smoothness class of functions as defined on $\mathbb{S}^q$ in Section~\ref{subsec:sphapprox}, on $\mathbb{Y}$ in Section~\ref{subsec:eqapprox}, and on $\mathbb{X}$ in Definition~\ref{def:manifold_smoothness}.}
\nc{B_{Q+1}(x,r),S^Q(x,r),B_\mathbb{T},\mathbb{S}_x,\mathbb{B}}{Balls on various spaces. See Table~\ref{tab:balldef} for reference.}
\nc{\mathbb{S}_x}{The unique $q$-dimensional equator of $\mathbb{S}^Q$ that shares a tangent space $\mathbb{T}_x$ with the point $x\in\mathbb{X}$.}
\nc{\varepsilon_x,\overline{\varepsilon}_x}{Exponential map for $\mathbb{X},\mathbb{S}_x$ respectively. See \ref{sec:manifoldintro} for details.}
\nc{\eta_x}{Composite map $\varepsilon_x\circ \overline{\varepsilon}^{-1}_x$ defined in Section~\ref{sec:manifoldapprox}.}
\nc{\iota_1,\iota_2,\iota^*}{Injectivity radii of $\varepsilon_x,\overline{\varepsilon}_x,\eta_x$, respectively.}
\nc{\mathbf{g}_1,\mathbf{g}_2}{Metric tensors associated with $\varepsilon_x,\overline{\varepsilon}_x$, respectively. See \ref{sec:manifoldintro} for details.}

\printnomenclature

\appendix

\section*{Appendix}

\renewcommand{\thesection}{A}
\setcounter{equation}{0}
\renewcommand{\theequation}{\thesection.\hindu{equation}}

\subsection{Encoding}
\label{sec:encoding}

Our construction in \eqref{eq:approximation} allows us to encode the target function in terms of finitely many real numbers.
For each integer $\ell\geq 0$, let $\{Y_{Q,\ell,k}\}_{k=1}^{\operatorname{dim}(\mathbb{H}_\ell^Q)}$ be an orthonormal basis for $\mathbb{H}_\ell^Q$ on $\mathbb{S}^Q$. We define  the encoding of $f$ by
\be\label{eq:encoding}
\hat{z}(\ell,k)\coloneqq \frac{1}{M}\sum_{j=1}^M z_jY_{Q,\ell,k}(y_j).
\ee
Given this encoding, the decoding algorithm is given in the following proposition.
\begin{proposition}\label{prop:encoding}
Assume $\Phi_{n,q}$ is given as in \eqref{eq:kernel}. Given the encoding of $f$ as given in \eqref{eq:encoding}, one can rewrite the approximation in \eqref{eq:approximation} as
\be\label{eq:decoder}
F_n(\mathcal{D};x)=\sum_{\ell=0}^n \Gamma_{\ell,n}\sum_{k=1}^{\operatorname{dim}(\mathbb{H}_\ell^Q)}\hat{z}(\ell,k)Y_{Q,\ell,k}(x)\qquad x\in \mathbb{S}^Q,
\ee 
where
\be\label{eq:connection}
\Gamma_{\ell,n}\coloneqq \frac{\omega_q\omega_{Q-1}}{\omega_Q\omega_{q-1}}\sum_{i=\ell}^n h\left(\frac{i}{n}\right)\frac{p_{q,i}(1)}{p_{Q,\ell}(1)}C_{Q,q}(\ell,i),
\ee
and $C_{Q,q}(\ell,i)$ is defined in \eqref{eq:dimchange}.
\end{proposition}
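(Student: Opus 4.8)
The plan is to start from the definition of $F_n(\mathcal{D};x)$ in \eqref{eq:approximation} and substitute the explicit expansion of the kernel $\Phi_{n,q}$ from \eqref{eq:kernel}, but with the key twist that $\Phi_{n,q}$ is a polynomial built from the ultraspherical polynomials $p_{q,\ell}$ of dimension $q$, whereas we want to re-express $F_n$ in terms of the ambient spherical harmonics $Y_{Q,\ell,k}$ on $\mathbb{S}^Q$. First I would write
\be
F_n(\mathcal{D};x)=\frac{1}{M}\sum_{j=1}^M z_j\,\frac{\omega_q}{\omega_{q-1}}\sum_{i=0}^n h\!\left(\frac{i}{n}\right)p_{q,i}(1)\,p_{q,i}(x\cdot y_j).
\ee
The crucial algebraic input is the dimension-change formula \eqref{eq:dimchange}, which lets me write $p_{q,i}=\sum_{\ell=0}^i C_{Q,q}(\ell,i)\,p_{Q,\ell}$, thereby converting each $p_{q,i}(x\cdot y_j)$ into a sum of ambient ultraspherical polynomials $p_{Q,\ell}(x\cdot y_j)$. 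Then I would invoke the addition theorem \eqref{eq:summation}, which expresses $p_{Q,\ell}(x\cdot y_j)$ (up to the factor $\frac{\omega_Q}{\omega_{Q-1}}p_{Q,\ell}(1)$) as the zonal harmonic $K_{Q,\ell}(x,y_j)=\sum_{k}Y_{Q,\ell,k}(x)Y_{Q,\ell,k}(y_j)$.

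Next I would interchange the order of summation, collecting the sum over $i$ from $\ell$ to $n$ (since $C_{Q,q}(\ell,i)=0$ for $\ell>i$) into a single coefficient, and separating the $x$-dependence ($Y_{Q,\ell,k}(x)$) from the data-dependence ($Y_{Q,\ell,k}(y_j)$). The inner sum $\frac{1}{M}\sum_{j=1}^M z_j Y_{Q,\ell,k}(y_j)$ is precisely $\hat z(\ell,k)$ from \eqref{eq:encoding}. What remains multiplying $\hat z(\ell,k)Y_{Q,\ell,k}(x)$ is a scalar depending only on $\ell$ and $n$; tracking the constants — the $\frac{\omega_q}{\omega_{q-1}}$ from \eqref{eq:kernel}, the $\frac{\omega_Q}{\omega_{Q-1}}$ and $p_{Q,\ell}(1)$ from \eqref{eq:summation} (with $p_{Q,\ell}(1)$ landing in the denominator after one divides through, or rather appearing correctly as written), the $p_{q,i}(1)$, and the coefficients $C_{Q,q}(\ell,i)$ — should reproduce exactly $\Gamma_{\ell,n}$ as in \eqref{eq:connection}. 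This is essentially a bookkeeping computation.

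I expect the main obstacle to be purely one of careful constant-chasing: making sure the normalization factors $\omega_q/\omega_{q-1}$ versus $\omega_Q/\omega_{Q-1}$ and the evaluations $p_{q,i}(1)$ versus $p_{Q,\ell}(1)$ end up in the right places, and that the summation ranges are handled correctly (in particular, that the outer sum over $\ell$ can be truncated at $n$ rather than extending further, which follows because $p_{q,i}$ has degree $i\le n$ and hence only contributes $p_{Q,\ell}$ with $\ell\le i\le n$). One subtlety worth double-checking is whether the $C_{Q,q}(\ell,i)$ in \eqref{eq:connection} are exactly the coefficients from \eqref{eq:dimchange} with $d_1=q$, $d_2=Q$ — i.e., the direction of the expansion — since the notation $C_{d_2,d_1}(\ell,n)$ in \eqref{eq:dimchange} writes the first subscript as the \emph{target} dimension $d_2=Q$ and the second as the source $d_1=q$, consistent with $C_{Q,q}(\ell,i)$. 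Once the constants are verified, the identity \eqref{eq:decoder} follows by comparing coefficients of $Y_{Q,\ell,k}(x)$ on both sides, using that the $Y_{Q,\ell,k}$ are linearly independent. No analytic estimates are needed; this is an exact finite-dimensional identity.
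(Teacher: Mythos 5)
Your proposal is correct and follows essentially the same route as the paper's (very brief) proof: expand $\Phi_{n,q}$ via \eqref{eq:kernel}, convert $p_{q,i}$ to the ambient $p_{Q,\ell}$ via \eqref{eq:dimchange}, apply the addition theorem \eqref{eq:summation}--\eqref{eq:reproduce}, interchange the sums over $i$ and $\ell$, and collect constants into $\Gamma_{\ell,n}$. Your bookkeeping of the normalization factors and of the direction of the expansion $p_{q,i}=\sum_{\ell=0}^{i}C_{Q,q}(\ell,i)p_{Q,\ell}$ is consistent with \eqref{eq:connection}.
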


\begin{proof}
The proof follows from writing out
\be
F_n(\mathcal{D};x)=\frac{\omega_q}{M\omega_{q-1}}\sum_{j=1}^M z_j \sum_{i=1}^n h\left(\frac{i}{n}\right)p_{q,i}(1)p_{q,i}(x\cdot y_j),
\ee
making substitutions using \eqref{eq:dimchange}, \eqref{eq:reproduce}, and  \eqref{eq:summation}, then collecting terms.
\end{proof}

\begin{rem}{\rm
The encoding \eqref{eq:encoding} is not parsimonious. Since the basis functions $\{Y_{Q,\ell,k}\}_{\ell=0,k=1}^{n,\operatorname{dim}(\mathbb{H}_\ell^Q)}$ is not necessarily independent on $\XX$, the encoding can be made parsimonious by exploiting linear relationships in this system. Given a reparametrization the functions as $\{Y_j\}_{j=1}^{\sum_{\ell=0}^n \operatorname{dim}(\mathbb{H}_\ell^Q)}$, we  form the discrete Gram matrix $G$ by the entries
\be\label{eq:discrete_gram}
G_{i,j}\coloneqq \frac{1}{M}\sum_{k=1}^M Y_i(y_k)Y_j(y_k) \approx\int_\mathbb{X} Y_i(y) Y_j(y) f_0d\mu^*(y).
\ee
In practice, one may formulate a QR decomposition by fixing some first basis vector and proceeding by the Gram-Schmidt process until a basis is formed, then setting some threshold on the eigenvalues to get the desired dependencies among the $Y_j$'s. \qed}
\end{rem} 

\subsection{Background on manifolds}
\label{sec:manifoldintro}

This introduction to manifolds covers the main ideas which we use in this paper without going into much detail. We mostly follow along with the notation and definitions in \cite{docarmo}. For details, we refer the reader to  texts such as \cite{boothby,docarmo,guillemin}.

\begin{definition}[Differentiable Manifold]
A (boundary-less) \textit{differentiable manifold} of dimension $q$ is a set $\mathbb{X}$ together with a family of open subsets $\{U_\alpha\}$ of $\mathbb{R}^q$ and functions $\{\vec{x}_\alpha\}$ such that
\be
\vec{x}_\alpha: U_\alpha \to \mathbb{X}
\ee
is injective, and the following 3 properties hold:
\begin{itemize}
\item $\bigcup_{\alpha}\vec{x}_\alpha(U_\alpha)=\mathbb{X}$,
\item $\vec{x}_\alpha(U_\alpha)\cap \vec{x}_\beta(U_\beta)=W\neq \emptyset$ implies that $\vec{x}^{-1}_\alpha(W),\vec{x}^{-1}_\beta(W)$ are open sets and $\vec{x}_\beta^{-1}\circ \vec{x}_\alpha$ is an infinitely differentiable function.
\item The family $\mathcal{A}_\mathbb{X}=\{(U_\alpha,\vec{x}_\alpha)\}$ is maximal regarding the above conditions.
\end{itemize}
\end{definition}

\begin{rem}{\rm
The pair $(U_\alpha,\vec{x}_\alpha)$ gives a \textit{local coordinate chart} of the manifold, and the collection of all such charts $\mathcal{A}_\mathbb{X}$ is known as the \textit{atlas}. \qed}
\end{rem}

\begin{definition}[Differentiable Map]
Let $\mathbb{X}_1,\mathbb{X}_2$ be differentiable manifolds. We say a function $\phi: \mathbb{X}_1\to\mathbb{X}_2$ is (infinitely) \textit{differentiable}, denoted by $\phi\in C^\infty(\mathbb{X})$, at a point $x\in\mathbb{X}_1$ if given a chart $(V,\vec{y})$ of $\mathbb{X}_2$, there exists a chart $(U,\vec{x})$ of $\mathbb{X}_1$ such that $x\in \vec{x}(U)$, $\phi(\vec{x}(U))\subseteq \vec{y}(V)$, and $\vec{y}^{-1}\circ \phi\circ \vec{x}$ is infinitely differentiable at $\vec{x}^{-1}(p)$ in the traditional sense.
\end{definition}

For any interval $I$ of $\mathbb{R}$, a differentiable function $\gamma: I\to \mathbb{X}$ is known as a $\textit{curve}$. If $x\in\mathbb{X}$, $\epsilon>0$, and $\gamma: (-\epsilon,\epsilon)\to \mathbb{X}$ is a curve with $x=\gamma(0)$, then we can define the \textit{tangent vector} to $\gamma$ at $\gamma(t_0)$ as a functional $\gamma'(t_0)$ acting on the class of differentiable functions $f:\mathbb{X}\to\mathbb{R}$ by
\be
\gamma'(t_0)f\coloneqq\frac{d(f\circ \gamma)}{dt}(t_0).
\ee
The \textit{tangent space} of $\mathbb{X}$ at a point $x\in\mathbb{X}$, denoted by $\mathbb{T}_x(\mathbb{X})$, is the set of all such functionals $\gamma'(0)$.

A \textit{Riemannian} manifold is a differentiable manifold with a family of inner products $\{\alg{\circ,\circ}_x\}_{x\in\mathbb{X}}$ such that for any $X,Y\in\mathbb{T}_x(\mathbb{X})$, the function $\varphi: \mathbb{X}\to\mathbb{C}$ given by $x\mapsto \alg{X(x),Y(x)}_x$ is differentiable. 
We can define an associated norm $\norm{X}=\alg{X(x),X(x)}_x$. 
The length $L(\gamma)$ of a curve $\gamma$ defined on $[a,b]$ is defined to be 
\be
L(\gamma)\coloneqq\int_a^b\norm{\gamma'(t)}dt.
\ee
 We will call a curve $\gamma:[a,b]\to \mathbb{X}$ a \textit{geodesic} if $L(\gamma)=\inf\{L(r): r:[a,b]\to\mathbb{X}, r\text{ is a curve}\}$. It is well-known that if $\gamma$ is a geodesic, then $\gamma'(t)\cdot \gamma''(t)=0$ for any $t\in [a,b]$.
 
 In the sequel, we assume that $\mathbb{X}$ is a compact, connected, Riemannian manifold.
Then for every $x,y\in\mathbb{X}$ there exists a geodesic $\gamma:[a,b]\to\mathbb{X}$  such that $\gamma(a)=x,\gamma(b)=y$. The quantity  $\rho(x,y)=L(\gamma)$ defines a metric on $\XX$ such that the corresponding metric topology is consistent with the topology defined by any atlas on $\XX$.

For any $x\in \XX$, there exists a neighborhood $V\subset \XX$ of $x$, a number $\delta=\delta(x)>0$ and a mapping $\mathcal{E} : (-2,2)\times U \to \XX$, where $U=\{(y,v) : y\in V,\  v\in T_y\XX,\ \|v\|_2<\delta\}$ such that $t\mapsto \mathcal{E}(t, y, v)$ is the unique geodesic of $\XX$ which, at $t=0$, passes through $y$ and has the property that $\partial{\mathcal{E}}/\partial t =v$ for each $(y,v)\in U$.
As a result, we can define the \textit{exponential map} at $x$ to be the function $\varepsilon_{x}: B_\mathbb{T}(x,\delta(x))\subset \mathbb{T}_x(\mathbb{X})\to \mathbb{X}$ by $\varepsilon_x(v)= \mathcal{E}(1, x, v)$. 
Intuitively, the line joining $x$ and $v$ in $\mathbb{T}_x(\mathbb{X})$ is mapped to the geodesic joining $x$ with $\varepsilon_x(v)$. 
We call the supremum of all $\delta(x)$ for which the exponential map is so defined  the \textit{injectivity radius} at $x$, denoted by $\iota(x)$.
We call $\iota^*=\inf_{x\in\mathbb{X}}\iota(x)$ the  \textit{global injectivity radius} of $\mathbb{X}$. 
Since $x\mapsto \iota(x)$ is a continuous function of $x$, and $\iota(x)>0$ for each $x$,  it follows that $\iota^*>0$ when $\mathbb{X}$ is compact. 
Correspondingly, on compact manifolds, one can conclude that for $y\in B_\mathbb{T}(x,\iota^*)$, $\rho(x,\varepsilon_x(y))=\norm{x-y}$.

Next, we discuss the metric tensor and volume element on $\XX$.
Let $(U,\vec{x})$ be a coordinate chart with $0\in U$, $\vec{x}(0)=x\in\mathbb{X}$, and $\partial_j(x)$ be the tangent vector at $x$ to the coordinate curve $t\mapsto \vec{x}((\underbrace{0,\dots,0}_{j-1},t,0,\dots,0))$. Then we can define the metric tensor $\mathbf{g}$ to be the matrix where $\mathbf{g}_{ij}=\alg{\partial_i(x),\partial_j(x)}_x$. When one expands the metric tensor $\mathbf{g}$ as a Taylor series in local coordinates on $\mathbb{B}(x,\iota^*)$, it can be shown \cite[pg. 21]{roemanifold} that for any $\delta<\iota^*$, on the ball $\mathbb{B}(x,\delta)$ we have
\be
|\mathbf{g}|=1+O(\delta^2).
\ee
In turn, this implies
\be\label{eq:volume_element}
\sqrt{|\mathbf{g}|}-1\lesssim \delta^2.
\ee

The following proposition lists some important properties relating the geodesic distance $\rho$ on an unknown submanifold of $\SS^Q$ with the geodesic distance on $\SS^Q$ as well as the Euclidean distance on $\RR^{Q+1}$.
\begin{proposition}\label{prop:taylorprop} Let $\eta_x$ be defined as in Section~\ref{sec:manifoldapprox}.

{\rm (a)} For every $\eta_x(u)\in\mathbb{B}(x,\iota^*)$,
\be\label{eq:diffbound}
\abs{\arccos(x\cdot \eta_x(u))-\rho(x,\eta_x(u))}\lesssim\rho(x,\eta_x(u))^3.
\ee

{\rm (b)} For any $x,y\in\mathbb{X}$,
\be
\rho(x,y)\sim \arccos(x\cdot y).
\ee

\end{proposition}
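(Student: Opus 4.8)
The plan is to reduce everything to the key geometric fact that $\XX$ is a compact submanifold of $\SS^Q$, so that along any geodesic of $\XX$ the ambient spherical distance agrees with the intrinsic distance up to a cubic error. For part (a), fix $x\in\XX$ and let $u\in\mathbb{S}_x(\iota^*)$ with $\eta_x(u)\in\mathbb{B}(x,\iota^*)$. By construction of $\eta_x=\varepsilon_x\circ\overline{\varepsilon}_x^{-1}$ and the identity \eqref{eq:rho_sphdist}, we have $\rho(x,\eta_x(u))=\arccos(x\cdot u)$; note this is the \emph{intrinsic} distance on $\XX$ written in terms of the tangent-sphere coordinate, not $\arccos(x\cdot\eta_x(u))$. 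So the quantity to control is $|\arccos(x\cdot\eta_x(u))-\arccos(x\cdot u)|$. I would parametrize by the geodesic $\gamma:[0,s]\to\XX$ of $\XX$ from $x$ to $\eta_x(u)$, with $s=\rho(x,\eta_x(u))$ and unit speed. Write $\psi(t)\coloneqq\arccos(x\cdot\gamma(t))$ for the ambient spherical distance. Then $\psi(0)=0$, and a Taylor expansion of $t\mapsto x\cdot\gamma(t)$ at $t=0$ gives $x\cdot\gamma(t)=1-\tfrac{t^2}{2}\|\gamma'(0)\|^2+O(t^3)=1-\tfrac{t^2}{2}+O(t^3)$, using $\|\gamma'(0)\|=1$ and the fact that $\ddot\gamma(0)$ is normal to $\mathbb{T}_x(\XX)$ hence contributes only at third order to the inner product (here the bounds on the third derivative come from compactness of $\XX$ and smoothness of the embedding). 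Inverting via $\arccos(1-w)=\sqrt{2w}\,(1+O(w))$ yields $\psi(t)=t+O(t^3)$ uniformly in $x$; evaluating at $t=s$ gives $|\arccos(x\cdot\eta_x(u))-s|\lesssim s^3=\rho(x,\eta_x(u))^3$, which is \eqref{eq:diffbound}.

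For part (b), I would first handle the local regime. For $\rho(x,y)\le\iota^*/2$ (say), part (a) applies with $u=\overline{\varepsilon}_x(\varepsilon_x^{-1}(y))$, giving $\arccos(x\cdot y)=\rho(x,y)+O(\rho(x,y)^3)=\rho(x,y)(1+O((\iota^*)^2))$, so $\arccos(x\cdot y)\sim\rho(x,y)$ with implied constants depending only on $\iota^*$ and the curvature bounds. For the global regime $\rho(x,y)>\iota^*/2$, both $\rho(x,y)$ and $\arccos(x\cdot y)$ are bounded below by positive constants (the former by $\iota^*/2$, the latter because $x\mapsto\min_{\rho(x,y)\ge\iota^*/2}\arccos(x\cdot y)$ is a positive continuous function on the compact set $\XX$, hence has a positive minimum — here one uses that the embedding $\XX\hookrightarrow\SS^Q$ is injective, so distinct points have $x\cdot y<1$) and bounded above by $\operatorname{diam}(\XX)$ and $\pi$ respectively. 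Thus $\rho(x,y)\sim\arccos(x\cdot y)$ there as well, and combining the two regimes gives the claim.

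The main obstacle is making the cubic-error Taylor estimate in part (a) genuinely uniform in $x\in\XX$: one needs that the third-order remainder in $x\cdot\gamma(t)=1-t^2/2+O(t^3)$ has a constant independent of the basepoint and of the chosen geodesic. This follows from compactness — the second fundamental form of $\XX$ in $\SS^Q$ and its covariant derivatives are uniformly bounded — but it should be stated carefully, since it is exactly the place where "we only need the dimension of $\XX$, plus its compactness/smoothness" is being cashed in. Everything else (the $\arccos$ expansion, the two-regime split, the injectivity argument for the lower bound in the global regime) is routine once this uniformity is in hand.
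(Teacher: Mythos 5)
Your overall strategy is sound and part (b) is essentially the paper's argument (local regime from (a), global regime from compactness and positivity of $\arccos(x\cdot y)$ off the diagonal), but part (a) as written has a genuine order-counting gap. From $x\cdot\gamma(t)=1-\tfrac{t^2}{2}+O(t^3)$ alone you \emph{cannot} conclude $\psi(t)=t+O(t^3)$. Write $w(t)=1-x\cdot\gamma(t)=\tfrac{t^2}{2}+bt^3+O(t^4)$; then $\sqrt{2w}=t\left(1+2bt+O(t^2)\right)^{1/2}=t+bt^2+O(t^3)$, so $\arccos(x\cdot\gamma(t))=t+bt^2+O(t^3)$: an $O(t^3)$ perturbation of $w\sim t^2/2$ is an $O(t^2)$ perturbation of $\sqrt{2w}\sim t$, and your ``inversion'' step silently assumes $b=0$. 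The cubic bound therefore requires checking that $x\cdot\gamma'''(0)=0$. This is true, and it is exactly the orthogonality the paper's proof runs on: since $\gamma$ lies on the unit sphere, $\gamma\cdot\gamma''=-\norm{\gamma'}^2$, and differentiating gives $x\cdot\gamma'''(0)=-\gamma'(0)\cdot\gamma''(0)=0$ because $\gamma$ is unit speed. Relatedly, your justification of the $t^2$ coefficient is incorrect as stated: $\ddot\gamma(0)$ being normal to $\mathbb{T}_x(\XX)$ does not control $x\cdot\ddot\gamma(0)$, because $x$ itself is normal to $\mathbb{T}_x(\XX)$ inside $\RR^{Q+1}$; in fact $x\cdot\ddot\gamma(0)=-\norm{\gamma'(0)}^2=-1$ (again from the sphere constraint), and that is what produces the $-t^2/2$ term. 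So the formula you wrote is right, but for a different reason than the one you gave, and the subsequent inversion needs the vanishing of the $t^3$ term to be made explicit.

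For comparison, the paper sidesteps the expansion of $\arccos$ entirely: it sandwiches $\gamma'(0)\cdot(\gamma(t)-x)\le\norm{\gamma(t)-x}_2\le\arccos(x\cdot\gamma(t))\le t$ (Cauchy--Schwarz on the left, chord-versus-arc and ambient-versus-intrinsic geodesic on the right), and shows the leftmost quantity is $t+O(t^3)$ using only $\norm{\gamma'(0)}=1$ and $\gamma'(0)\cdot\gamma''(0)=0$. That route needs one fewer Taylor coefficient and avoids the square-root inversion where your argument slips. Your uniformity concern (remainders bounded independently of the basepoint and geodesic) is handled by compactness exactly as you say, and your treatment of the global regime in (b), including the injectivity point, matches the paper; the only minor additional care needed there is that the local two-sided comparison $\arccos\ge\tfrac12\rho$ holds only once $\rho(x,y)$ is below a threshold determined by the constant in \eqref{eq:diffbound} (the paper uses $\sqrt{1/(2c)}$ rather than $\iota^*/2$), with the complement again absorbed by compactness.
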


\begin{proof}
First, we observe the fact that $\norm{x-y}_2\sim \arccos(x\cdot y)$ because $\norm{x-y}_2/2=\sin(\arccos(x\cdot y)/2)$ and $\theta/\pi \le \sin(\theta/2)\le \theta/2$ for all $\theta\in [0,\pi]$. Fix $x\in\mathbb{X}$ and let $\gamma$ be a geodesic on $\mathbb{X}$ parametrized by length $t$ from $x$. In particular we then have $\norm{\gamma'(0)}_2=1$ and $\gamma'(0)\cdot\gamma''(0)=0$.
 Taking a Taylor expansion for $\gamma(t)$ with $|t|<\iota^*$ (we recall that $\iota^*\leq 1)$, we can see
\be\label{eq:taylorgamma}
\ba
\gamma'(0)\cdot (\gamma(t)-\gamma(0))=&\gamma'(0)\cdot \left(\gamma'(0)t+\frac{1}{2}\gamma''(t)t^2+O(t^3)\right)\\
=&\norm{\gamma'(0)}^2_2t+\gamma'(0)\cdot \gamma''(0)t^2+O(t^3)\\
=&t+O(t^3).
\ea
\ee
For any $y\in\mathbb{B}(x,\iota^*)$, there exists a unique $u\in \mathbb{S}_x(\iota^*)$ such that $y=\eta_x(u)$. We can write $y=\gamma(t)$ for some geodesic $\gamma$. We know, $t=\rho(x,y)\geq \arccos(x\cdot y)\geq \norm{x-y}_2=\norm{\gamma(t)-\gamma(0)}_2$. Using the Cauchy-Schwarz inequality, we see
\be
0\leq t-\norm{x-y}_2\leq t-\gamma'(0)\cdot (\gamma(t)-\gamma(0))\lesssim t^3.
\ee
As a result we can conclude
\be
\rho(x,\eta_x(u))-\arccos(x\cdot \eta_x(u))\leq \rho(x,\eta_x(u))-\norm{\eta_x(u)-x}_2\lesssim \rho(x,\eta_x(u))^3,
\ee
showing \eqref{eq:diffbound}. Letting $c$ be the constant built into the notation of \eqref{eq:diffbound}, then if we fix $x\in\mathbb{X}$ and let $y\in\mathbb{B}\big(x,\sqrt{1/(2c)}\big)$, we have
\be
\frac{1}{2}\rho(x,y)\leq \rho(x,y)-c\rho(x,y)^3\leq \arccos(x\cdot y).
\ee
Furthermore, since $A=\overline{\mathbb{X}\setminus\mathbb{B}\big(x,\sqrt{1/(2c)}\big)}$ is a compact set and $g_x(y)=\arccos(x\cdot y)/\rho(x,y)$ is a continuous function of $y$ defined on $A$, we can conclude that $g_x$ attains a minimum on $A$. Therefore,
\be
\rho(x,y)\sim \arccos(x\cdot y)
\ee
for every $y\in \mathbb{X}$.
We note that the constants involved in this proof vary continuously with respect to the choice of $x$, so in the theorem we may simply use the supremum over all such constants which must be finite since $\mathbb{X}$ is compact.
\end{proof}

\subsection{Network representation}
\label{subsec:clenshaw}

\begin{algorithm}[!ht]
\begin{algorithmic}[1]
\item[{\rm a)}] \textbf{Input:} $p_0$, $C_0,\cdots, C_{n-1}$, $x$, $a_{n+1},\cdots, a_1$, $b_{n+1},\cdots,b_1$.
\item[{\rm b)}] \textbf{Output:} The value of $\sum_{k=0}^{n-1}C_kp_k$.
\STATE $\mathsf{out1}\gets 0, \mathsf{out2}\gets 0, C_{-1} \gets 0,  C_n\gets 0$.
\FOR{$k=n+1$ down to 1}
\STATE $\mathsf{temp}\gets a_k*\mathsf{out_1}*x+\mathsf{out2}$
\STATE $\mathsf{out2}\gets b_k*\mathsf{out1}+C_{k-2}$
\STATE $\mathsf{out1}\gets \mathsf{temp}$.
\ENDFOR
\STATE \textbf{Return:} $\mathsf{out1}*p_0$.
 \end{algorithmic}
 \caption{Clenshaw algorithm to compute $\sum_{k=0}^{n-1}C_kp_k$, where\\ $p_k(x)=a_kxp_{k-1}(x)+b_kp_{k-2}(x)$, $k=1,2,\cdots,n-1$, $b_{1}=0$.}
 \label{alg:clenshaw}
 \end{algorithm}

Let $\{p_k\}$ be a system of orthonormal polynomials satisfying a recurrence relation
\be\label{eq:clenshawrec}
p_k(x)=a_kxp_{k-1}(x)+b_kp_{k-2}(x), \qquad k=1,2,\cdots, \quad b_{1}=0.
\ee
The Clenshaw algorithm is a modification of the classical Horner method to compute polynomials expressed in the monomial basis that evaluates a polynomial expressed in terms of the orthonormalized polynomials $\{p_k\}$ \cite{clenshaw,gautschibk}.
To understand the method, let $P=\sum_{k=0}^{n-1}C_kp_k$.
It is convenient to write $C_k=0$ if $k\ge n$ or $k<0$.
The recurrence \eqref{eq:clenshawrec} shows that
\be\label{eq:clenshawstep}
C_kp_k(x)+C_{k-1}p_{k-1}(x)+C_{k-2}p_{k-2}(x)=\left(a_kC_kx+C_{k-1}\right)p_{k-1}(x)+\left(b_kC_k+C_{k-2}\right)p_{k-2}.
\ee
This leads to Algorithm~\ref{alg:clenshaw}.

\begin{figure*}[!ht]
\begin{center}
\includegraphics[scale=0.1]{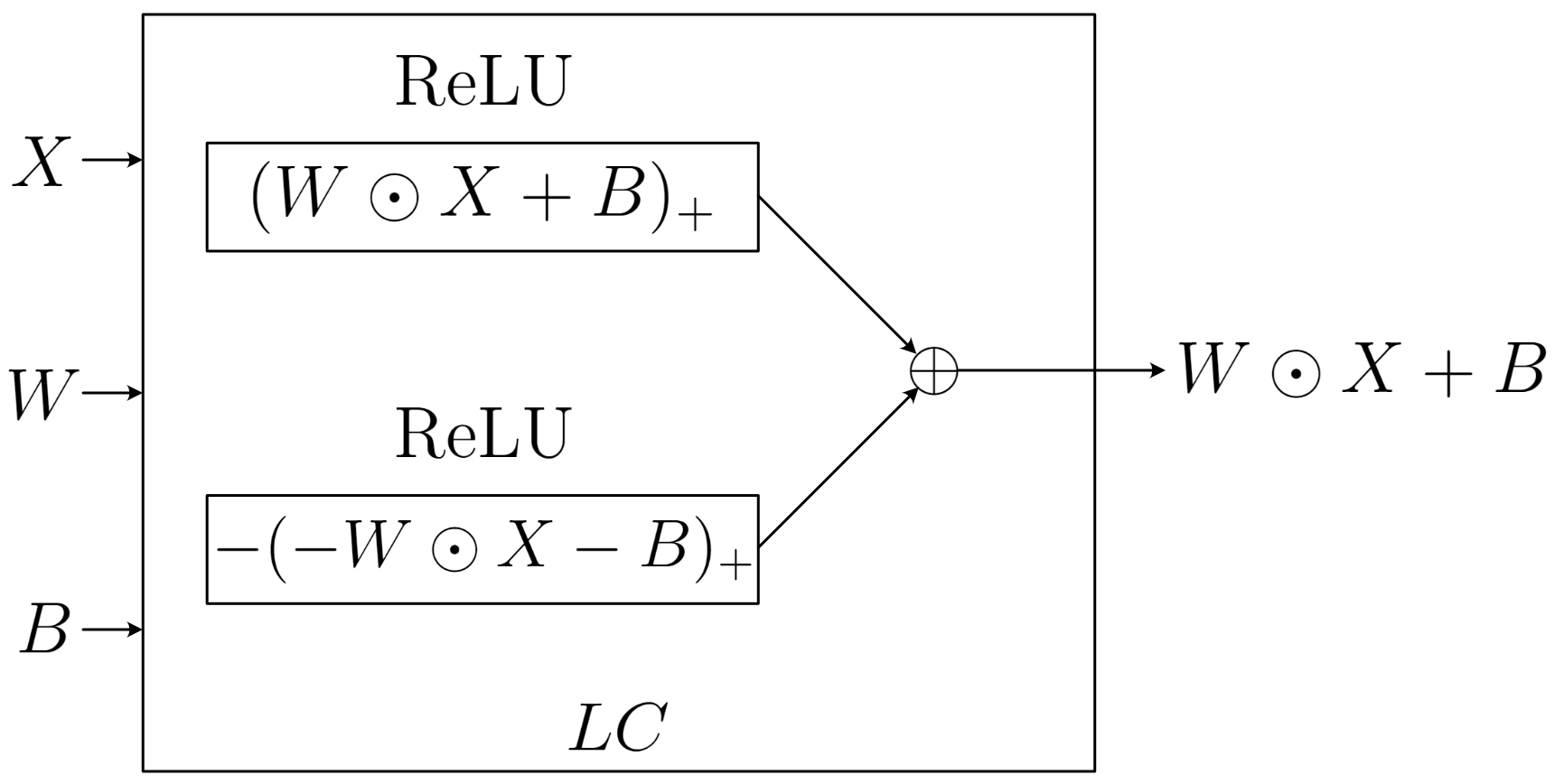} 
\end{center}
\caption{The implementation of a linear combination as a ReLU network. Here all operations are pointwise. The symbols $\odot$ represents Hadamard product of matrices, $\oplus$ is the sum of matrices.}
\label{fig:linearcomb}
\end{figure*} 

\begin{figure*}[!ht]
\begin{center}
\includegraphics[scale=0.15]{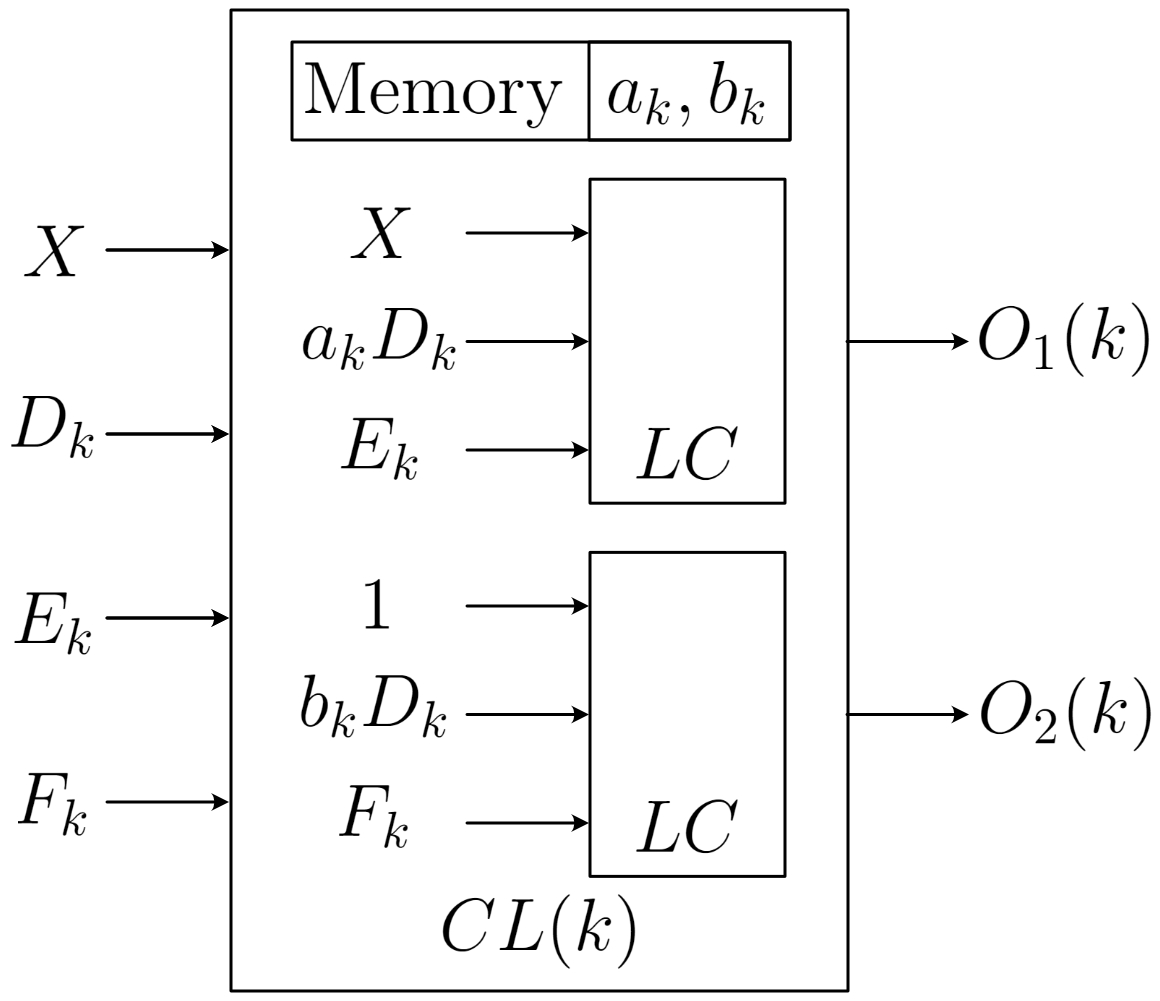} 
\end{center}
\caption{One step of the Clenshaw algorithm, using two circuits of the form LC (4 neurons) as in Figure~\ref{fig:linearcomb}. 
The circuit diagram is shown in general with four input pins and two output pins.}
\label{fig:clenshawstep}
\end{figure*}

\begin{figure*}[!ht]
\begin{center}
\includegraphics[scale=0.25]{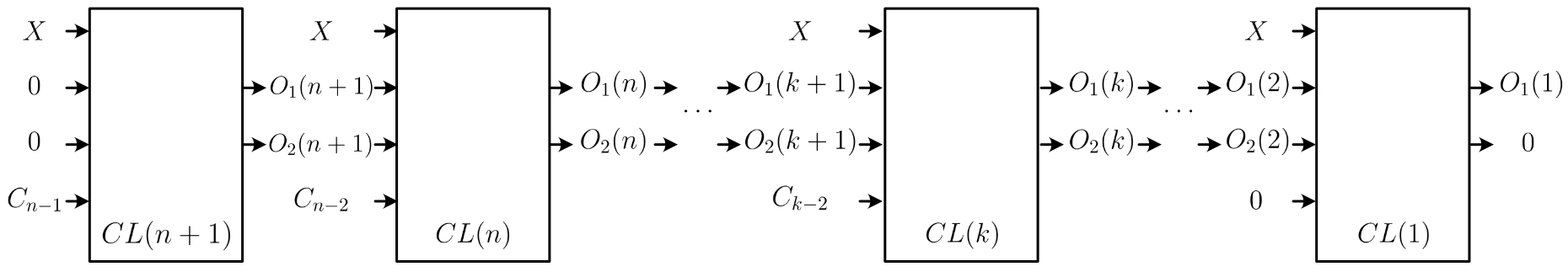} 
\end{center}
\caption{Unrolling the Clenshaw algorithm as a cascade of the circuits of the form CL($k$) as in Figure~\ref{fig:clenshawstep}.}
\label{fig:clenshawalg}
\end{figure*}
 
By algorithm unrolling, we may express this algorithm in terms of a deep neural network evaluating a ReLU activation function.
The network is a cascade of different circuits. 
The most fundamental is the implementation of a linear combination as a ReLU network (see  Figure~\ref{fig:linearcomb})
$$
ax+b=(ax+b)_+ -(-ax-b)_+.
$$
Using the circuits LC in Figure~\ref{fig:linearcomb}, we next construct a circuit to implement recursive reduction \eqref{eq:clenshawstep}.
This is illustrated in Figure~\ref{fig:clenshawstep}.
Finally, we unroll the Clenshaw algorithm by cascading the circuits CL($k$) from Figure~\ref{fig:clenshawstep} for $k=n+1$ down to $k=1$ with different inputs and outputs as shown in Figure~\ref{fig:clenshawalg}.
We use this in order to compute $\Phi_{n,q}(x\cdot y_j)$ by using the recursive formula for ultraspherical polynomials \eqref{eq:recurrence} in the following way.
We set
\be
\begin{aligned}
C_k&=\frac{\omega_q}{\omega_{q-1}}h(k/n)p_{q,k}(1),\\
a_k&= \begin{cases}\displaystyle{\frac{\sqrt{\Gamma(q)\Gamma(q+1)}}{\Gamma(q-1)}}& k=1 \vspace{5pt}\\ \displaystyle{\sqrt{\frac{(2k+q-3)(2k+q-1)}{k(n+q-2)}}}& k\geq 2\end{cases},\\
b_k&=\sqrt{\frac{(k-1)(k+q-3)(2k+q-1)}{k(k+q-2)(2k+q-5)}}.
\end{aligned}
\ee
For the matrix $X$ shown in Figure~\ref{fig:clenshawalg}, we consider the $(Q+1)\times N$ test data matrix $S$ where each column represents one test data $x$, and a $(Q+1)\times M$ train data matrix $R$ where column $j$ represents data point $y_j$. 
Then we set $X=S^TR$. In this way, we would return $\Phi_{n,q}(S^TR)$ from running Algorithm~\ref{alg:clenshaw}, with a time complexity of $O(NMn)$.

\bibliographystyle{abbrv}
\bibliography{references.bib}

\end{document}